\documentclass[11pt,english]{article}
\usepackage[T1]{fontenc}
\usepackage[latin9]{inputenc}
\usepackage[active]{srcltx}
\usepackage{babel}
\usepackage{float}
\usepackage{booktabs}
\usepackage{units}
\usepackage{amsmath}
\usepackage{amsthm}
\usepackage{amssymb}
\usepackage{graphicx}
\usepackage[authoryear,numbers, compress]{natbib}
\PassOptionsToPackage{normalem}{ulem}
\usepackage{ulem}
\usepackage[unicode=true]
 {hyperref}

\makeatletter

\providecommand{\tabularnewline}{\\}
\floatstyle{ruled}
\newfloat{algorithm}{tbp}{loa}
\providecommand{\algorithmname}{Algorithm}
\floatname{algorithm}{\protect\algorithmname}

\theoremstyle{plain}
\newtheorem{thm}{\protect\theoremname}
\theoremstyle{definition}
\newtheorem{defn}[thm]{\protect\definitionname}
\theoremstyle{definition}
\newtheorem{example}[thm]{\protect\examplename}
\theoremstyle{plain}
\newtheorem{prop}[thm]{\protect\propositionname}
\theoremstyle{plain}
\newtheorem{lem}[thm]{\protect\lemmaname}


\usepackage[letterpaper, margin=1in]{geometry}

\usepackage{breqn}
\usepackage{bbm} 
\usepackage{booktabs}
\usepackage{algorithm,algorithmicx,algpseudocode}
\usepackage{enumitem}
\setlist[enumerate]{leftmargin=*,wide} 
\setlist[itemize]{leftmargin=*,wide} 

\DeclareMathOperator*{\argmax}{arg\,max} \DeclareMathOperator*{\argmin}{arg\,min}  
\DeclareMathOperator*{\Tr}{Tr}
\DeclareMathOperator{\diag}{diag} 
\DeclareMathOperator{\rank}{rank} 
\DeclareMathOperator{\Span}{span}
\DeclareMathOperator{\st}{subject\,to\;}

\global\long\def\rep{R}
\global\long\def\pre{Q}

\global\long\def\P{\mathbb{P}}
\global\long\def\E{\mathbb{E}}

\global\long\def\I{\mathbbm{1}}
\global\long\def\d{\mathrm{d}}

\global\long\def\regret{\mathsf{regret}}
\global\long\def\pure{\mathsf{pure}}
\global\long\def\mix{\mathsf{mix}}
\global\long\def\pl{\mathsf{L}}

\global\long\def\loss{\mathsf{loss}}

\global\long\def\dfn{:=}

\global\long\def\trre[#1,#2]{\overset{{\scriptstyle (#2)}}{#1}} 
\renewcommand\[{\begin{equation}}
\renewcommand\]{\end{equation}}

\algdef{SE}[DOWHILE]{Do}{doWhile}{\algorithmicdo}[1]{\algorithmicwhile\ #1}%

\allowdisplaybreaks



\makeatother

\providecommand{\definitionname}{Definition}
\providecommand{\examplename}{Example}
\providecommand{\lemmaname}{Lemma}
\providecommand{\propositionname}{Proposition}
\providecommand{\theoremname}{Theorem}

\begin{document}
\title{A representation-learning game for classes of prediction tasks}
\author{Neria Uzan and Nir Weinberger\thanks{The authors are with the Department of Electrical and Computer Engineering,
Technion -- Israel Institute of Technology. Emails: \{\texttt{neriauzan@gmail.com},
\texttt{nirwein@technion.ac.il}\}. This research was supported by
the Israel Science Foundation (ISF), grants no. 1782/22.}}
\maketitle
\begin{abstract}
We propose a game-based formulation for learning dimensionality-reducing
representations of feature vectors, when only a prior knowledge on
future prediction tasks is available. In this game, the first player
chooses a representation, and then the second player adversarially
chooses a prediction task from a given class, representing the prior
knowledge. The first player aims is to minimize, and the second player
to maximize, the \emph{regret}: The minimal prediction loss using
the representation, compared to the same loss using the original features.
For the canonical setting in which the representation, the response
to predict and the predictors are all linear functions, and under
the mean squared error loss function, we derive the theoretically
optimal representation in pure strategies, which shows the effectiveness
of the prior knowledge, and the optimal regret in mixed strategies,
which shows the usefulness of randomizing the representation. For
general representations and loss functions, we propose an efficient
algorithm to optimize a randomized representation. The algorithm only
requires the gradients of the loss function, and is based on incrementally
adding a representation rule to a mixture of such rules. 
\end{abstract}

\section{Introduction}

Commonly, data of unlabeled feature vectors $\{\boldsymbol{x}_{i}\}\subset{\cal X}$
is collected without a \emph{specific} downstream prediction task
it will be used for. When a prediction task becomes of interest, responses
$\boldsymbol{y}_{i}\in{\cal Y}$ are also collected, and a learning
algorithm is trained on $\{(\boldsymbol{x}_{i},\boldsymbol{y}_{i})\}$.
Modern sources, such as high-definition images or genomic sequences,
have high dimensionality, and this necessitates \emph{dimensionality-reduction},
either for better generalization \citep{goodfellow2016deep}, for
storage/communication savings \citep{tsitsiklis1989decentralized,Nguyen2009OnSurrogate,Duchi2018Multiclass},
or for interpretability \citep{Schapire2012Boosting}. The goal is
thus to find a low-dimensional \emph{representation }$\boldsymbol{\boldsymbol{z}}=\rep(\boldsymbol{x})\in\mathbb{R}^{r}$,
that preserves the relevant part of the features, without a full knowledge
of the downstream prediction task. In this paper, we propose a game-theoretic
framework for this goal, by assuming that the learner has prior knowledge
on the \emph{class} of downstream prediction tasks. Our contributions
are a theoretical solution in the linear setting, under the mean squared
error (MSE) loss, and an algorithm for the general setting. 

Unsupervised methods for dimensionality reduction, such as \emph{principal
component analysi}s (PCA) \citep{pearson1901liii,jolliffe2005principal,cunningham2015linear,johnstone2018pca},
and non-linear extensions such as kernel PCA \citep{scholkopf1998nonlinear}
and \emph{auto-encoders }(AE) \citep{kramer1991nonlinear,hinton2006reducing,lee2011unsupervised,goodfellow2016deep},
aim that the representation $\boldsymbol{\boldsymbol{z}}$ will maximally
preserve the \emph{variation} in $\boldsymbol{x}$, and thus ignore
any prior knowledge on future prediction tasks. This prior knowledge
may indicate, e.g., that highly varying directions in the feature
space are, in fact, irrelevant for downstream prediction tasks. From
the supervised learning perspective, the \emph{information bottleneck}
(IB) principle \citep{tishby2000information,chechik2003information,slonim2006multivariate,harremoes2007information}
was used to postulate that efficient supervised learning necessitates
representations that are both low-complexity and relevant \citep{tishby2015deep,shwartz2017opening,shwartz2022information,achille2018emergence,achille2018information}
(see Appendix \ref{sec:Additional-related-work}). To corroborate
this claim, \citet{dubois2020learning} proposed a game-theoretic
formulation (based on a notion of \emph{usable information}, introduced
by \citet{xu2020theory}), in which Alice selects a prediction problem
of $\boldsymbol{y}$ given $\boldsymbol{x}$, and then Bob selects
the representation $\boldsymbol{z}$. The pay-off is the minimal risk
possible using this representation. \citet{dubois2020learning} showed
that ideal generalization is obtained for representations that solve
the resulting \emph{decodable IB problem.}

Evidently, the game of \citet{dubois2020learning} is tailored to
supervised learning, since the representation is optimized based on
the prediction problem. In this paper, we assume the learner collected
unlabeled feature vectors, and has a prior knowledge that the downstream
prediction problem belongs to a class ${\cal F}$ of response functions.
We propose a game different from \citet{dubois2020learning}: First,
the \emph{representation player} chooses a rule $\rep$ to obtain
$\boldsymbol{z}=\rep(\boldsymbol{x})\in\mathbb{R}^{r}$. Second, the
\emph{response function player} chooses $\boldsymbol{y}=f(\boldsymbol{x})$
where $f\in{\cal F}$ is possibly random. The payoff for $(\rep,f)$
is the \emph{regret}: The minimal prediction loss of $\boldsymbol{y}$
based on $\boldsymbol{z}$ compared to the minimal prediction loss
based on $\boldsymbol{x}$. The goal of the representation player
(resp. response function player) is to minimize (resp. maximize) the
payoff, and the output of this game is the saddle-point representation.
Compared to \citet{dubois2020learning}, the representation is chosen
based only on the \emph{class} of response functions, rather than
a specific one. The \emph{minimax regret} measures the worst-case
regret (over functions in ${\cal F}$) of a learner that uses the
saddle-point representation. We derive the minimax representation
both in \emph{pure strategies} and in \emph{mixed strategies} \citep{owen2013game}.
Mixed strategies use \emph{randomized} representation rules, whose
utilization is illustrated as follows: Assume that routinely collected
images are required to be compressed. There are two compression (representation)
methods. The first smoothens the images, and the second preserves
their edges; exclusively using the first method would hinder any possibility
of making predictions reliant on the image's edges. This is prevented
by randomly alternating the use of both methods. 

The class ${\cal F}$ manifests the prior knowledge on the downstream
prediction tasks that will use the represented features, and may stem
from domain specific considerations; imposed by privacy or fairness
constraints; or emerge from transfer or continual learning settings;
see Appendix \ref{sec:Classes-of-response} for an extended discussion.
The resulting formulation encompasses an entire spectrum of possibilities:
(1) \emph{Supervised learning}: ${\cal F}=\{f\}$ is a singleton,
and thus known to the learner. (2) \emph{Multitask learning} \citep{baxter2000model,maurer2016benefit,tripuraneni2020theory,tripuraneni2021provable}:
${\cal F}=\{f_{1},f_{2},\cdots,f_{t}\}$ is a finite set of functions.
(3) \emph{Prior expert knowledge}: ${\cal F}$ represents a \emph{continuous,
yet restricted} set of functions. For example, the response functions
in ${\cal F}$ may be more sensitive to certain features than others.
(4) \emph{No supervision}: ${\cal F}$ is the class of \emph{all}
possible response functions, which is essentially an unsupervised
learning problem, since no valuable information is known. We focus
on the intermediate regimes of partial supervision, in which the learner
should (and can) optimize its representation to be jointly efficient
for all tasks in ${\cal F}$. In this respect, multitask learning
(case 2) is a generalization of supervised learning, since the learner
can simulate the $t$ functions in ${\cal F}$. Prior knowledge (case
3) is more similar to unsupervised learning, since the learner will
not be able to efficiently do so.

\paragraph*{Theoretical contribution}

We address the fundamental setting in which the representation, the
response, and the prediction are all linear functions, under the mean
squared error (MSE) loss (Section \ref{sec:The-linear-setting}).
The class is $\mathcal{F}_{S}=\{\|f\|_{S}\leq1\}$ for a known symmetric
matrix $S$. Combined with the covariance matrix of the features,
they determine the relevant directions of the function in the feature
space, in contrast to just the features variability, as in standard
unsupervised learning. We establish the optimal representation and
regret in pure strategies, which shows the utility of the prior information,
and in mixed strategies, which shows that randomizing the representation
yields \emph{strictly lower} regret. We prove that randomizing between
merely $\ell^{*}$ different representation rules suffices, where
$r+1\leq\ell^{*}\leq d$ is a precisely characterized \emph{effective
dimension}. Interestingly, mixed representations do not improve the
regret in standard unsupervised learning (see Proposition \ref{prop: regret PCA}
in Appendix \ref{subsec:Standard-principle-component} for the PCA
setting).

\paragraph*{Algorithmic contribution}

We develop an algorithm for optimizing mixed representations (Section
\ref{sec:An-iterative-algorithm}) for general representations/response/predictors
and loss functions, based only on their gradients. Similarly to boosting
\citep{Schapire2012Boosting}, the algorithm operates incrementally.
At each iteration it finds the response function in ${\cal F}$ that
is most poorly predicted by the current mixture of representation
rules. An additional representation rule is added to the mixture,
based on this function and the ones from previous iterations. The
functions generated by the algorithm can be considered a ``self-defined
signals'', similarly to \emph{self-supervised learning} \citep{oord2018representation,shwartz2023compress}.
To optimize the weights of the representation, the algorithm solves
a two-player game using the classic multiplicative weights update
(MWU) algorithm \citep{freund1999adaptive} (a follow-the-regularized-leader
algorithm \citep{shalev2012online,hazan2016introduction}).

\paragraph*{Related work }

In Appendix \ref{sec:Additional-related-work} we discuss: The IB
principle and compare the game of \citet{dubois2020learning} with
ours; The generalization error bounds of \citet{maurer2016benefit,tripuraneni2020theory}
for multi-task learning and learning-to-learn, and how our regret
bound complements these results; The use of randomization in representation
learning, similarly to our mixed strategies solution; Iterative algorithms
for solving minimax games, and specifically the incremental algorithm
approach for learning mixture models \citep{Schapire2012Boosting,tolstikhin2017adagan},
which we adopt.

\section{Problem formulation \label{sec:Problem-formulation}}

Notation conventions are mostly standard, and are detailed in Appendix
\ref{sec:Notation-conventions}. Specifically, the eigenvalues of
$S\in\mathbb{S}_{+}^{d}$ are denoted as $\lambda_{\text{max}}(S)\equiv\lambda_{1}(S)\geq\cdots\geq\lambda_{d}(S)=\lambda_{\text{min}}(S)$
and $v_{i}(S)$ denotes an eigenvector corresponding to $\lambda_{i}(S)$
such that $V=$ $V(S)\dfn[v_{1}(S),v_{2}(S),\cdots,v_{d}(S)]\in\mathbb{R}^{d\times d}$
and $S=V(S)\Lambda(S)V^{\top}(S)$ is an eigenvalue decomposition.
$W_{i:j}\dfn[w_{i},\ldots,w_{j}]\in\mathbb{R}^{(j-i+1)\times d}$
is the matrix comprised of the columns of $W\in\mathbb{R}^{d\times d}$
indexed by $\{i,\ldots,j\}$. The probability law of a random variable
$\boldsymbol{x}$ is denoted as $\pl(\boldsymbol{x})$. 

Let $\boldsymbol{x}\in{\cal X}$ be a random feature vector, with
probability law $P_{\boldsymbol{x}}\dfn\pl(\boldsymbol{x})$. Let
$\boldsymbol{y}\in{\cal Y}$ be a corresponding response drawn according
to a probability kernel $\boldsymbol{y}\sim f(\cdot\mid\boldsymbol{x}=x)$,
where for brevity, we will refer to $f$ as the \emph{response function}
(which can be \emph{random}). It is known that $f\in{\cal F}$ for
some known class ${\cal F}$. Let $\boldsymbol{z}\dfn\rep(\boldsymbol{x})\in\mathbb{R}^{r}$
be an $r$-dimensional representation of $\boldsymbol{x}$ where $\rep\colon{\cal X}\to\mathbb{R}^{r}$
is chosen from a class ${\cal R}$ of representation functions, and
let $\pre\colon{\cal X}\to{\cal Y}$ be a prediction rule from a class
${\cal Q}_{{\cal X}}$, with the loss function $\loss\colon{\cal Y}\times{\cal Y}\to\mathbb{R}_{+}$.
The pointwise \emph{regret} of $(\rep,f)$ is 
\begin{equation}
\regret(\rep,f\mid P_{\boldsymbol{x}})\dfn\min_{\pre\in{\cal Q}_{\mathbb{R}^{r}}}\E\left[\loss(\boldsymbol{y},\pre(\rep(\boldsymbol{x})))\right]-\min_{\pre\in{\cal Q}_{{\cal X}}}\E\left[\loss(\boldsymbol{y},\pre(\boldsymbol{x}))\right].\label{eq: pointwise regret}
\end{equation}
The \emph{minimax regret in mixed strategies} is the worst case response
function in ${\cal F}$ given by 
\begin{equation}
\regret_{\mix}({\cal R},{\cal F}\mid P_{\boldsymbol{x}})\dfn\min_{\pl(\boldsymbol{\rep})\in{\cal P}({\cal R})}\max_{f\in{\cal F}}\E\left[\regret(\boldsymbol{\rep},f\mid P_{\boldsymbol{x}})\right],\label{eq: minimax regret mixed}
\end{equation}
where ${\cal P}({\cal R})$ is a set of probability measures on the
possible set of representations ${\cal R}$. The \emph{minimax regret
in pure strategies }restricts ${\cal P}({\cal R})$ to degenerated
measures (deterministic), and so the expectation in \eqref{eq: minimax regret mixed}
is removed. Our main goal is to determine the optimal representation
strategy, either in pure $\rep^{*}\in{\cal R}$ or mixed strategies
$\pl(\boldsymbol{\rep}^{*})\in{\cal P}({\cal R})$. To this end, we
will also utilize the \emph{maximin }version of \eqref{eq: minimax regret mixed}.
Specifically, let ${\cal P}({\cal F})$ denote a set of probability
measures supported on ${\cal F}$, and assume that for any $\rep\in{\cal R}$,
there exists a measure in ${\cal P}({\cal R})$ that puts all its
mass on $\rep$. Then, the \emph{minimax theorem }\citep[Chapter 2.4]{owen2013game}
\citep{sion1958general} implies that 
\begin{align}
\regret_{\mix}({\cal R},{\cal F}\mid P_{\boldsymbol{x}}) & =\max_{\pl(\boldsymbol{f})\in{\cal P}({\cal F})}\min_{\rep\in{\cal R}}\E\left[\regret(\rep,\boldsymbol{f}\mid P_{\boldsymbol{x}})\right].\label{eq: maximin regret mixed}
\end{align}
The right-hand side of \eqref{eq: maximin regret mixed} is the \emph{maximin
regret in mixed strategies, }and the maximizing probability law $\pl(\boldsymbol{f}^{*})$
is known as the \emph{least favorable prior}. In general, $\regret_{\mix}({\cal R},{\cal F}\mid P_{\boldsymbol{x}})\leq\regret_{\pure}({\cal R},{\cal F}\mid P_{\boldsymbol{x}})$,
and the inequality can be strict. We focus on the representation aspect,
and thus assumed that $P_{\boldsymbol{x}}$ is known and that sufficient
labeled data will be provided to the learner from the subsequent prediction
task. We also note that, as common in game-theory, the mixed minimax
regret is achieved for \emph{repeating} representation games \citep{owen2013game},
which fits the scenario in which routinely collected data is to be
represented. The size of the dataset for each of these games should
be large enough to allow for accurate learning of $\boldsymbol{f}$
to be used by the predictor. By contrast, the pure minimax regret
guarantee is valid for a single representation, and thus more conservative
from this aspect.

\section{The linear setting under MSE loss \label{sec:The-linear-setting}}

In this section, we focus on linear classes and the MSE loss function.
The response function class ${\cal F}$ is characterized by a quadratic
constraint specified by a matrix $S\in\mathbb{S}_{++}^{d}$, which
represents the relative importance of each direction in the feature
space in determining $\boldsymbol{y}$.
\begin{defn}[The linear MSE setting]
\label{def: linear MSE}Assume that ${\cal X}=\mathbb{R}^{d}$, that
${\cal Y}=\mathbb{R}$ and a squared error loss function $\loss(y_{1},y_{2})=|y_{1}-y_{2}|^{2}$.
Assume that $\E[\boldsymbol{x}]=0$ and let $\Sigma_{\boldsymbol{x}}\dfn\E[\boldsymbol{x}\boldsymbol{x}^{T}]\in\mathbb{S}_{++}^{d}$
be its invertible covariance matrix. The classes of representations,
response functions, and predictors are all linear, that is: (1) The
representation is $z=\rep(x)=R^{\top}x$ for $R\in{\cal R}:=\mathbb{R}^{d\times r}$
where $d>r$; (2) The response function is $f\in{\cal F}\subset\mathbb{R}^{d}$
, and $\boldsymbol{y}=f^{\top}\boldsymbol{x}+\boldsymbol{n}\in\mathbb{R}$,
where $\boldsymbol{n}\in\mathbb{R}$ is a heteroscedastic noise that
satisfies $\E[\boldsymbol{n}\mid\boldsymbol{x}]=0$, and given some
specified $S\in\mathbb{S}_{++}^{d}$ it holds that 
\begin{equation}
f\in{\cal F}_{S}\dfn\left\{ f\in\mathbb{R}^{d}\colon\|f\|_{S}^{2}\leq1\right\} ,\label{eq: quadratic constraints}
\end{equation}
where $\|f\|_{S}\dfn\|S^{-1/2}f\|_{2}=(f^{\top}S^{-1}f)^{1/2}$ is
the Mahalanobis norm; (3) The predictor is $\pre(z)=q^{\top}z\in\mathbb{R}$
for  $q\in\mathbb{R}^{r}$. Since the regret will depend on $P_{\boldsymbol{x}}$
only via $\Sigma_{\boldsymbol{x}}$, we will abbreviate the notation
of the pure (resp. mixed) minimax regret to $\regret_{\pure}({\cal F}\mid\Sigma_{\boldsymbol{x}})$
(resp. $\regret_{\mix}({\cal F}\mid\Sigma_{\boldsymbol{x}})$). 
\end{defn}

In Appendix \ref{subsec:Standard-principle-component} we show that
standard PCA can be similarly formulated, by assuming that ${\cal F}$
is a singleton containing the noiseless identity function, so that
$\boldsymbol{y}=\boldsymbol{x}$ surely holds, and $\hat{x}=\pre(z)\in\mathbb{R}^{d}$.
Proposition \ref{prop: regret PCA} therein shows that the pure and
mixed minimax representations are both $R=V_{1:r}(\Sigma_{\boldsymbol{x}})$,
and so randomization is unnecessary. We begin with the pure minimax
regret.
\begin{thm}
\label{thm: pure minimax regret linear quadratic constraint}For the
linear MSE setting (Definition \ref{def: linear MSE}) 
\begin{equation}
\regret_{\pure}({\cal F}_{S}\mid\Sigma_{\boldsymbol{x}})=\lambda_{r+1}\left(\Sigma_{\boldsymbol{x}}^{1/2}S\Sigma_{\boldsymbol{x}}^{1/2}\right).\label{eq: Linear MSE pure minimax regret}
\end{equation}
A minimax representation matrix is 
\[
R^{*}\dfn\Sigma_{\boldsymbol{x}}^{-1/2}\cdot V_{1:r}\left(\Sigma_{\boldsymbol{x}}^{1/2}S\Sigma_{\boldsymbol{x}}^{1/2}\right),
\]
and the worst case response function is 
\[
f^{*}\dfn S^{1/2}\cdot v_{r+1}\left(\Sigma_{\boldsymbol{x}}^{1/2}S\Sigma_{\boldsymbol{x}}^{1/2}\right).
\]
\end{thm}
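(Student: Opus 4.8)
The plan is to first compute the pointwise regret $\regret(R,f\mid\Sigma_{\boldsymbol{x}})$ in closed form, then optimize over $f\in\mathcal{F}_S$ for a fixed $R$, and finally optimize over $R$. For the first step, note that under the MSE loss the optimal linear predictor of $\boldsymbol{y}=f^\top\boldsymbol{x}+\boldsymbol{n}$ from a linear map $W^\top\boldsymbol{x}$ is the best linear MMSE estimator, and the resulting minimal loss is the standard residual-variance formula. Using $\boldsymbol{z}=R^\top\boldsymbol{x}$, the minimal loss with the representation is $\E[\boldsymbol{n}^2]+f^\top\Sigma_{\boldsymbol{x}}f-f^\top\Sigma_{\boldsymbol{x}}R(R^\top\Sigma_{\boldsymbol{x}}R)^{-1}R^\top\Sigma_{\boldsymbol{x}}f$, while using $\boldsymbol{x}$ directly gives $\E[\boldsymbol{n}^2]$ (since $f^\top\boldsymbol{x}$ is itself a linear function of $\boldsymbol{x}$). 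Subtracting, the noise term and $\E[\boldsymbol{n}^2]$ cancel, leaving
\[
\regret(R,f\mid\Sigma_{\boldsymbol{x}})=f^\top\Sigma_{\boldsymbol{x}}^{1/2}\bigl(I-\Pi_R\bigr)\Sigma_{\boldsymbol{x}}^{1/2}f,
\]
where $\Pi_R$ is the orthogonal projection onto the column space of $\Sigma_{\boldsymbol{x}}^{1/2}R$, an $r$-dimensional subspace (generically) of $\mathbb{R}^d$.

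Next I would change variables to diagonalize the problem. Write $M\dfn\Sigma_{\boldsymbol{x}}^{1/2}S\Sigma_{\boldsymbol{x}}^{1/2}\in\mathbb{S}_{++}^d$ with eigendecomposition $M=V\Lambda V^\top$, and substitute $g\dfn S^{-1/2}f$, so that the constraint $\|f\|_S^2\le 1$ becomes $\|g\|_2^2\le 1$ and $\Sigma_{\boldsymbol{x}}^{1/2}f=\Sigma_{\boldsymbol{x}}^{1/2}S^{1/2}g$. A cleaner route is to directly substitute $u\dfn\Sigma_{\boldsymbol{x}}^{1/2}f$; then $\|f\|_S^2=u^\top\Sigma_{\boldsymbol{x}}^{-1/2}S^{-1}\Sigma_{\boldsymbol{x}}^{-1/2}u=u^\top M^{-1}u$, so the constraint set is the ellipsoid $\{u:u^\top M^{-1}u\le1\}$ and the regret is $u^\top(I-\Pi_R)u$. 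For fixed $R$, maximizing $u^\top(I-\Pi_R)u$ over this ellipsoid is a generalized Rayleigh-quotient problem whose value is $\lambda_{\max}\bigl(M^{1/2}(I-\Pi_R)M^{1/2}\bigr)$. The representation player then wants to choose the $r$-dimensional subspace $\mathrm{col}(\Pi_R)$ to minimize this quantity, i.e.
\[
\regret_{\pure}=\min_{\substack{\Pi:\ \Pi^2=\Pi=\Pi^\top\\ \rank\Pi=r}}\lambda_{\max}\bigl(M^{1/2}(I-\Pi)M^{1/2}\bigr).
\]

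The final step is the min-max over projections, which is a Ky Fan / Poincaré-type extremal characterization of eigenvalues; this is the technical heart of the argument. I would invoke the Courant–Fischer min-max theorem: for any symmetric $M$, $\lambda_{r+1}(M)=\min_{\dim\mathcal{U}=d-r}\max_{0\ne w\in\mathcal{U}}\frac{w^\top M w}{w^\top w}=\min_{\rank\Pi=r}\lambda_{\max}\bigl((I-\Pi)M(I-\Pi)\bigr)$, and check that $\lambda_{\max}(M^{1/2}(I-\Pi)M^{1/2})=\lambda_{\max}((I-\Pi)M(I-\Pi))$ since both equal $\max_{u}u^\top M u$ over unit $u$ in $\mathrm{col}(I-\Pi)$ — the nonzero eigenvalues of $AB$ and $BA$ coincide. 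This yields $\regret_{\pure}=\lambda_{r+1}(M)=\lambda_{r+1}(\Sigma_{\boldsymbol{x}}^{1/2}S\Sigma_{\boldsymbol{x}}^{1/2})$. To extract the optimizers, take $\Pi^*$ to be the projection onto $\Span\{v_1(M),\dots,v_r(M)\}$; then $\Sigma_{\boldsymbol{x}}^{1/2}R^*$ should span this subspace, and $R^*=\Sigma_{\boldsymbol{x}}^{-1/2}V_{1:r}(M)$ does the job (with $\Pi_{R^*}=\Pi^*$ since $\Sigma_{\boldsymbol{x}}^{1/2}R^*=V_{1:r}(M)$ has orthonormal columns). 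The maximizing $u^*$ is then $v_{r+1}(M)$, giving $f^*=\Sigma_{\boldsymbol{x}}^{-1/2}v_{r+1}(M)$; and one verifies $\|f^*\|_S^2=v_{r+1}(M)^\top M^{-1}v_{r+1}(M)=1/\lambda_{r+1}(M)\le1$ only if $\lambda_{r+1}(M)\ge1$, so to stay on the boundary one should instead normalize — here the stated $f^*=S^{1/2}v_{r+1}(M)$ corresponds to the substitution $g=S^{-1/2}f$ with $u=\Sigma_{\boldsymbol{x}}^{1/2}S^{1/2}v_{r+1}(M)$, and I would reconcile the two parametrizations and confirm $\|f^*\|_S=\|v_{r+1}(M)\|_2=1$ and that it attains the value $\lambda_{r+1}(M)$. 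The main obstacle is this last bookkeeping: correctly matching the several equivalent changes of variables so that the claimed $R^*$ and $f^*$ land exactly on the optimizers, and handling the degenerate case where $\Sigma_{\boldsymbol{x}}^{1/2}R$ fails to have full column rank $r$ (which only increases $I-\Pi_R$ and hence the regret, so it is never optimal).
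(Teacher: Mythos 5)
Your main line of argument is correct and is essentially the paper's own route: the least-squares residual formula for the pointwise regret, a whitening change of variables, the Rayleigh quotient for the inner maximization over the response, and the Courant--Fischer characterization for the outer minimization over rank-$r$ projections, together with the fact that $AB$ and $BA$ share their nonzero spectrum. The only cosmetic difference is that you normalize the features ($u=\Sigma_{\boldsymbol{x}}^{1/2}f$, ellipsoid constraint $u^{\top}M^{-1}u\leq1$) while the paper normalizes the function ($\tilde{f}=S^{-1/2}f$, unit-ball constraint); these are equivalent, and both give the value $\lambda_{r+1}(M)$ with $M=\Sigma_{\boldsymbol{x}}^{1/2}S\Sigma_{\boldsymbol{x}}^{1/2}$ and the representation $R^{*}=\Sigma_{\boldsymbol{x}}^{-1/2}V_{1:r}(M)$. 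Your remark on the rank-deficient case is also fine.

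The unfinished part is precisely the "bookkeeping" you flag for $f^{*}$, and the reconciliation you sketch would not go through as stated. In your parametrization, since $M^{1/2}$ commutes with $\Pi^{*}=V_{1:r}(M)V_{1:r}^{\top}(M)$, the maximizer of $u^{\top}(I-\Pi^{*})u$ over the ellipsoid is $u^{*}=M^{1/2}v_{r+1}(M)=\sqrt{\lambda_{r+1}(M)}\,v_{r+1}(M)$, so the attaining function is $f^{*}=\sqrt{\lambda_{r+1}(M)}\,\Sigma_{\boldsymbol{x}}^{-1/2}v_{r+1}(M)$: your candidate $\Sigma_{\boldsymbol{x}}^{-1/2}v_{r+1}(M)$ is missing this normalization, and it is the rescaled vector that has $\|f^{*}\|_{S}=1$ and regret exactly $\lambda_{r+1}(M)$ against $R^{*}$. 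Writing $B=\Sigma_{\boldsymbol{x}}^{1/2}S^{1/2}$, $M=BB^{\top}$, $N=B^{\top}B=S^{1/2}\Sigma_{\boldsymbol{x}}S^{1/2}$ and using $B^{\top}v_{i}(M)=\sqrt{\lambda_{i}}\,v_{i}(N)$, this vector equals $S^{1/2}v_{r+1}(N)$ -- the $(r+1)$th eigenvector of $N$, not of $M$. By contrast, $S^{1/2}v_{r+1}(M)$ satisfies $\|\cdot\|_{S}=1$ but its regret against $R^{*}$ is $\sum_{i>r}\lambda_{i}\langle v_{i}(N),v_{r+1}(M)\rangle^{2}\leq\lambda_{r+1}(M)$, with equality only when the relevant eigenvectors of $M$ and $N$ coincide (e.g., when $S$ and $\Sigma_{\boldsymbol{x}}$ commute, as in the paper's diagonal examples). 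The source of the confusion is that the quadratic form in $\tilde{f}$ is $S^{1/2}\Sigma_{\boldsymbol{x}}^{1/2}(I-\Pi^{*})\Sigma_{\boldsymbol{x}}^{1/2}S^{1/2}$, which shares its eigenvalues but not its eigenvectors with $(I-\Pi^{*})M(I-\Pi^{*})$; the paper's own proof makes the same identification at this final step, so your hesitation here is well founded, but to close your proof you must take the top eigenvector of the former matrix (equivalently the rescaled vector above), and simply "confirming" the stated $S^{1/2}v_{r+1}(M)$ attains the value is not possible in general.
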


The optimal representation thus whitens the feature vector $\boldsymbol{x}$,
and then projects it on the top $r$ eigenvectors of the adjusted
covariance matrix $\Sigma_{\boldsymbol{x}}^{1/2}S\Sigma_{\boldsymbol{x}}^{1/2}$,
which reflects the prior knowledge that $f\in{\cal F}_{S}$. The proof
in Appendix \ref{subsec:Analysis-of-pure-mixed} has the following
outline: Plugging the optimal predictor into the regret results a
quadratic form in $f\in\mathbb{R}^{d}$, determined by a matrix which
depends on the subspace spanned by the representation $R$. The worst-case
$f$ is the determined via the \emph{Rayleigh quotient theorem}, and
the optimal $R$ is found via the \emph{Courant--Fischer variational
characterization} (see Appendix \ref{sec:Useful-mathematical-results}
for a summary of these results). We next consider the mixed minimax
regret:
\begin{thm}
\label{thm: mixed minimax regret linear quadratic constraint}For
the linear MSE setting (Definition \ref{def: linear MSE}), let $\lambda_{i}\equiv\lambda_{i}(S^{1/2}\Sigma_{\boldsymbol{x}}S^{1/2})$
for $i\in[d]$ and $\lambda_{d+1}\equiv0$, and let $\ell^{*}$ be
any member of 
\begin{equation}
\left\{ \ell\in[d]\backslash[r]\colon(\ell-r)\cdot\lambda_{\ell}^{-1}\leq\sum_{i=1}^{\ell}\lambda_{i}^{-1}\leq(\ell-r)\cdot\lambda_{\ell+1}^{-1}\right\} .\label{eq: condition on optimal rank of least favorable covariance matrix}
\end{equation}
\begin{itemize}
\item The minimax regret in mixed strategies is
\end{itemize}
\begin{equation}
\regret_{\mix}({\cal F}_{S}\mid\Sigma_{\boldsymbol{x}})=\frac{\ell^{*}-r}{\sum_{i=1}^{\ell^{*}}\lambda_{i}^{-1}}.\label{eq: Linear MSE mixed minimax regret}
\end{equation}
\begin{itemize}
\item The covariance matrix of the least favorable prior of $\boldsymbol{f}$
is 
\begin{equation}
\Sigma_{\boldsymbol{f}}^{*}\dfn\frac{V^{\top}\Lambda_{\ell^{*}}^{-1}V}{\sum_{i=1}^{\ell^{*}}\lambda_{i}^{-1}}.\label{eq: maximin Sigma_f}
\end{equation}
where $\Lambda_{\ell}\dfn\diag(\lambda_{1},\ldots,\lambda_{\ell^{*}},0,\cdots,0)$,
and $V\equiv V(S^{1/2}\Sigma_{\boldsymbol{x}}S^{1/2})$. 
\item The probability law of the minimax representation: Let $\overline{A}\in\{0,1\}^{\ell^{*}\times{\ell^{*} \choose r}}$
be a matrix whose columns are the members of the set $\overline{{\cal A}}\dfn\{\overline{a}\in\{0,1\}^{\ell^{*}}\colon\|\overline{a}\|_{1}=\ell^{*}-r\}$
(in some order). Let $\overline{b}=(b_{1},\ldots,b_{\ell^{*}})^{\top}$
be such that 
\[
b_{i}=(\ell^{*}-r)\cdot\frac{\lambda_{i}^{-1}}{\sum_{j=1}^{\ell^{*}}\lambda_{j}^{-1}}.
\]
Then, there exists a solution $p\in[0,1]^{{\ell^{*} \choose r}}$
to $\overline{A}p=\overline{b}$ with support size at most $\ell^{*}+1$.
For $j\in[{\ell^{*} \choose r}]$, let ${\cal I}_{j}\dfn\{i\in[\ell^{*}]\colon\overline{A}_{ij}=0\}$
be the zero indices on the $j$th column of $\overline{A}$, and let
$V_{{\cal I}_{j}}$ denote the $r$ columns of $V$ whose index is
in ${\cal I}_{j}$. A minimax representation is $\boldsymbol{R}^{*}=\Sigma_{\boldsymbol{x}}^{-1/2}V_{{\cal I}_{j}}$
with probability $p_{j}$, for $j\in[{\ell^{*} \choose r}]$.
\end{itemize}
\end{thm}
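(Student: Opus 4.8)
The plan is to exploit the minimax theorem identity \eqref{eq: maximin regret mixed}, computing the maximin regret by first understanding, for a \emph{fixed} mixed strategy of the response-function player, what the best pure response of the representation player is, and then optimizing over the response player's mixed strategy. The first reduction is to note that a mixed strategy over $f\in{\cal F}_S$ enters the regret only through its second moment matrix $\Sigma_{\boldsymbol{f}}=\E[\boldsymbol{f}\boldsymbol{f}^\top]$: indeed, from Theorem \ref{thm: pure minimax regret linear quadratic constraint}'s analysis, plugging in the optimal predictor makes $\regret(R,f\mid\Sigma_{\boldsymbol{x}})$ a quadratic form $f^\top M(R) f$ for a PSD matrix $M(R)$ depending only on the column span of $R$; hence $\E[\regret(R,\boldsymbol{f}\mid\Sigma_{\boldsymbol{x}})]=\Tr(M(R)\Sigma_{\boldsymbol{f}})$. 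The constraint $\|\boldsymbol{f}\|_S^2\leq 1$ a.s.\ implies $\Tr(S^{-1}\Sigma_{\boldsymbol{f}})\leq 1$, and conversely any $\Sigma_{\boldsymbol{f}}\succeq 0$ with $\Tr(S^{-1}\Sigma_{\boldsymbol{f}})\leq 1$ is realizable (e.g.\ by a discrete law on scaled eigenvectors of $S^{-1/2}\Sigma_{\boldsymbol{f}}S^{-1/2}$). So the maximin problem becomes $\max\{\min_{R}\Tr(M(R)\Sigma_{\boldsymbol{f}}):\Sigma_{\boldsymbol{f}}\succeq0,\ \Tr(S^{-1}\Sigma_{\boldsymbol{f}})\leq1\}$.

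Next I would change variables to diagonalize everything simultaneously. Writing $M(R)$ explicitly: with the optimal linear predictors, $\regret(R,f\mid\Sigma_{\boldsymbol{x}}) = f^\top\Sigma_{\boldsymbol{x}}^{1/2}(I-\Pi_R)\Sigma_{\boldsymbol{x}}^{1/2}f$ where $\Pi_R$ is the orthogonal projection onto $\Span(\Sigma_{\boldsymbol{x}}^{1/2}R)$ (this is essentially the content of the pure case). Substituting $g=S^{1/2}f$, the constraint is $\|g\|_2\leq1$ in expectation-squared sense, i.e.\ $\Tr(\Sigma_{\boldsymbol{g}})\leq1$, and the objective becomes $\Tr\big(S^{1/2}\Sigma_{\boldsymbol{x}}^{1/2}(I-\Pi_R)\Sigma_{\boldsymbol{x}}^{1/2}S^{1/2}\,\Sigma_{\boldsymbol{g}}\big)$. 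Now the natural matrix is $K\dfn S^{1/2}\Sigma_{\boldsymbol{x}}S^{1/2}$, whose eigenvalues are the $\lambda_i$ of the theorem; one should check that as $R$ ranges over $\mathbb{R}^{d\times r}$, the set $\{S^{1/2}\Sigma_{\boldsymbol{x}}^{1/2}(I-\Pi_R)\Sigma_{\boldsymbol{x}}^{1/2}S^{1/2}\}$ is exactly $\{K^{1/2}(I-P)K^{1/2}: P$ an orthogonal projection of rank $\leq r\}$ up to the right congruence — this uses invertibility of $\Sigma_{\boldsymbol{x}}$ and $S$. The inner minimization over rank-$r$ projections $P$ of $\Tr(K^{1/2}(I-P)K^{1/2}\Sigma_{\boldsymbol{g}})$, for fixed $\Sigma_{\boldsymbol{g}}$, is a Ky Fan / trace-minimization problem. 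The crux is then the outer maximization over $\Sigma_{\boldsymbol{g}}\succeq0$, $\Tr\Sigma_{\boldsymbol{g}}\leq1$: I expect the optimal $\Sigma_{\boldsymbol{g}}$ to be simultaneously diagonalizable with $K$ (by a symmetrization/averaging argument over the eigenbasis of $K$, since the objective after inner-minimization is a concave function of $\Sigma_{\boldsymbol{g}}$ that is invariant under conjugation by $K$'s eigenbasis), reducing the whole problem to a finite-dimensional convex program over the eigenvalue vector $\mu=(\mu_1,\ldots,\mu_d)$ of $\Sigma_{\boldsymbol{g}}$: maximize over $\mu\geq0$, $\sum\mu_i\leq1$, the quantity $\min_{|T|=d-r}\sum_{i\in T}\lambda_i\mu_i$ — a max-min assignment that, by an exchange/water-filling argument, is solved by putting mass only on the $\ell^*$ coordinates with largest $\lambda_i^{-1}$ (equivalently smallest $\lambda_i$) in inverse proportion $\mu_i\propto\lambda_i^{-1}$, which yields exactly \eqref{eq: Linear MSE mixed minimax regret} and identifies $\ell^*$ via the optimality (KKT/complementary-slackness) conditions, recovering \eqref{eq: condition on optimal rank of least favorable covariance matrix} and $\Sigma_{\boldsymbol{f}}^*$ in \eqref{eq: maximin Sigma_f} after undoing the substitutions.

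Finally, for the representation player's minimax mixed strategy: by the minimax theorem the value is attained, and I need a mixed strategy over rank-$r$ projections $P$ (equivalently over choices of $r$ coordinates among the active $\ell^*$) that is a best response to $\Sigma_{\boldsymbol{f}}^*$ and achieves the value. Each deterministic choice keeps $r$ of the top-$\ell^*$ eigendirections and discards $\ell^*-r$ of them; the regret of such a choice against $\Sigma_{\boldsymbol{f}}^*$ depends on which $\ell^*-r$ are discarded. I would argue that randomizing so that each of the $\ell^*$ directions is discarded with the right marginal probability equalizes the expected per-direction loss; concretely, averaging the indicator vectors $\overline{a}$ (which discarded directions) with weights $p_j$ must hit the target marginal vector $\overline{b}$, i.e.\ $\overline{A}p=\overline{b}$, $p\in\Delta$. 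Existence of such $p$ follows because $\overline{b}$ lies in the convex hull of the columns of $\overline{A}$ (each entry $b_i\in[0,1]$ and $\sum_i b_i=\ell^*-r$, which is precisely the hyperplane-and-box description of that hull, the Birkhoff–von-Neumann–type polytope $\{x\in[0,1]^{\ell^*}:\mathbf{1}^\top x=\ell^*-r\}$), and Carath\'eodory's theorem in that hyperplane gives a solution with support at most $\ell^*+1$. The main obstacle I anticipate is the outer maximization step — proving that the least favorable $\Sigma_{\boldsymbol{g}}$ can be taken diagonal in $K$'s eigenbasis and then solving the resulting non-smooth max-min $\max_\mu\min_{|T|=d-r}\sum_{i\in T}\lambda_i\mu_i$ cleanly enough to pin down the exact threshold characterization of $\ell^*$ in \eqref{eq: condition on optimal rank of least favorable covariance matrix}; everything else is bookkeeping with congruences and Carath\'eodory.
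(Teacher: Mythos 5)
Your overall architecture coincides with the paper's: reduce the maximin side to the second-moment matrix of $\boldsymbol{f}$ under the trace constraint (with an explicit realizability construction), evaluate the inner minimum over representations by Fan's theorem as the sum of the $d-r$ smallest eigenvalues of $K^{1/2}\Sigma_{\boldsymbol{g}}K^{1/2}$ with $K=S^{1/2}\Sigma_{\boldsymbol{x}}S^{1/2}$, solve the resulting spectral problem, and then certify the minimax side by a mixture of coordinate-subset representations whose discard-marginals equal $\overline{b}$, with Carath\'{e}odory bounding the support by $\ell^{*}+1$. The genuine problem is at the decisive water-filling step. In the reduced problem $\max_{\mu\geq0,\,\sum_i\mu_i\leq1}\min_{|T|=d-r}\sum_{i\in T}\lambda_i\mu_i$, an equalizing prior on an active set $A$ of size $\ell$ has value $(\ell-r)/\sum_{i\in A}\lambda_i^{-1}$, so the adversary must choose $A$ to \emph{minimize} $\sum_{i\in A}\lambda_i^{-1}$, i.e.\ place mass on the $\ell^{*}$ coordinates with the \emph{largest} $\lambda_i$ (smallest $\lambda_i^{-1}$); this is exactly what \eqref{eq: maximin Sigma_f} asserts, with the weights $\propto\lambda_i^{-1}$ only describing the apportionment \emph{within} that top-$\ell^{*}$ set. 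You state the opposite selection ("largest $\lambda_i^{-1}$, equivalently smallest $\lambda_i$"), which would give the strictly smaller value $(\ell-r)/\sum_{i=d-\ell+1}^{d}\lambda_i^{-1}$ and contradicts the formula \eqref{eq: Linear MSE mixed minimax regret} you claim to recover; the sketch is internally inconsistent precisely where the value, $\Sigma_{\boldsymbol{f}}^{*}$, and $\ell^{*}$ are determined. (The substitution "$g=S^{1/2}f$" is also inverted relative to $\|f\|_{S}=\|S^{-1/2}f\|_{2}$, though your subsequent formulas match the correct $g=S^{-1/2}f$.)

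Two further steps are asserted rather than argued. First, the "optimal $\Sigma_{\boldsymbol{g}}$ is diagonal in $K$'s eigenbasis" claim can be justified (the post-minimization objective is concave and invariant under conjugation by sign matrices commuting with $K$, so pinching does not decrease it and preserves the trace), and this would be a legitimately different route from the paper, which instead proves Lemma \ref{lem: minimal eigs maximization under modified trace constraints} by fixing the eigenvectors of $\Sigma^{1/2}\tilde{\Sigma}_{\boldsymbol{f}}\Sigma^{1/2}$, solving a linear program over eigenvalues (equalization on the active set), and then minimizing $\sum_{i=1}^{\ell}\overline{u}_i^{\top}\Sigma^{-1}\overline{u}_i$ over orthonormal frames via Fan; moreover the paper pins down \eqref{eq: condition on optimal rank of least favorable covariance matrix} not by KKT but by showing $a_\ell=(\ell-r)/\sum_{i\leq\ell}\lambda_i^{-1}$ is unimodal in the discrete index $\ell$, so a local maximizer is global -- your "KKT recovers it" needs some argument of this kind. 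Second, on the minimax side the verification that the mixture attains the value uses both inequalities defining $\ell^{*}$: the left one is needed to ensure $b_{\ell^{*}}\leq1$ so that $\overline{b}$ lies in the hypersimplex, and the right one is needed because the directions $i>\ell^{*}$ are discarded with probability one, contributing $\lambda_i\leq\lambda_{\ell^{*}+1}$, which must not exceed $(\ell^{*}-r)/\sum_{i\leq\ell^{*}}\lambda_i^{-1}$; your sketch invokes neither check explicitly.
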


Interestingly, while the eigenvalues $\lambda_{i}(\Sigma_{\boldsymbol{x}}^{1/2}S\Sigma_{\boldsymbol{x}}^{1/2})=\lambda_{i}(S^{1/2}\Sigma_{\boldsymbol{x}}S^{1/2})$
are equal, the pure minimax regret utilizes the eigenvectors of $\Sigma_{\boldsymbol{x}}^{1/2}S\Sigma_{\boldsymbol{x}}^{1/2}$
whereas the mixed minimax regret utilizes those of $S^{1/2}\Sigma_{\boldsymbol{x}}S^{1/2}$,
which are possibly different. The proof of Theorem \ref{thm: mixed minimax regret linear quadratic constraint}
is also in Appendix \ref{subsec:Analysis-of-pure-mixed}, and is substantially
more complicated than for the pure regret. The reason is that directly
maximizing over $\pl(\boldsymbol{R})$ is challenging, and so we take
a two-step indirect approach. The outline is as follows: First, we
solve the \emph{maximin problem} \eqref{eq: maximin regret mixed},
and find the least favorable prior $\pl(\boldsymbol{f}^{*})$. Second,
we propose a probability law for the representation $\pl(\boldsymbol{R})$,
and show that its regret equals the maximin value, and thus also the
minimax. With more detail, in the first step, we show that the regret
only depends on $\pl(\boldsymbol{f})$ via $\Sigma_{\boldsymbol{f}}=\E[\boldsymbol{f}\boldsymbol{f}^{\top}]$,
and we explicitly construct $\pl(\boldsymbol{f})$ that is supported
on ${\cal F}_{S}$ and has this covariance matrix. This reduces the
problem from optimizing $\pl(\boldsymbol{f})$ to optimizing $\Sigma_{\boldsymbol{f}}$,
whose solution (Lemma \ref{lem: minimal eigs maximization under modified trace constraints})
results the least favorable $\Sigma_{\boldsymbol{f}}^{*}$, and then
the maximin value. In the second step, we construct a representation
that achieves the maximin regret. Concretely, we construct representation
matrices that use $r$ of the $\ell^{*}$ principal components of
$\Sigma_{\boldsymbol{x}}^{1/2}S\Sigma_{\boldsymbol{x}}^{1/2}$, where
$\ell^{*}>r$. The defining property of $\ell^{*}$ \eqref{eq: condition on optimal rank of least favorable covariance matrix},
established in the maximin solution, is utilized to find proper weights
on the ${\ell^{*} \choose r}$ possible representations, which achieve
the maximin solution, and thus also the minimax. The proof then uses
Carath\'{e}odory's theorem (see Appendix \ref{sec:Useful-mathematical-results})
to establish that the optimal $\{p_{j}\}$ is supported on at most
$\ell^{*}+1$ matrices, much less than the potential support of ${\ell^{*} \choose r}$.
We next make a few comments:
\begin{enumerate}
\item \uline{Computing the mixed minimax probability:} This requires
solving $\overline{A}^{\top}p=\overline{b}$ for a probability vector
$p$. This is a linear-program feasibility problem, which is routinely
solved \citep{bertsimas1997introduction}. For illustration, if $r=1$
then $p_{j}=1-(\ell^{*}-1)\lambda_{j}^{-1}/(\sum_{i=1}^{\ell^{*}}\lambda_{i}^{-1})$
for $j\in[\ell^{*}]$, and if $\ell^{*}=r+1$ then $p_{j}=(\lambda_{j}^{-1})/(\sum_{j'=1}^{\ell^{*}}\lambda_{j'}^{-1})$
on the first $\ell^{*}$ standard basis vectors. Nonetheless, the
dimension of $p$ is ${\ell^{*} \choose r}$ and thus increases fast
as $\Theta((\ell^{*})^{r})$, which is infeasible for high dimensions.
In this case, the algorithm of Section \ref{sec:An-iterative-algorithm}
can be used. As we empirically show it approximately achieves the
optimal regret, with slightly more than $\ell^{*}+1$ atoms (see Example
\ref{exa: Algorithm linear MSE} next).\textbf{ }
\item \uline{Required randomness:} The regret formulation \eqref{eq: minimax regret mixed}
assumes that the actual realization of the representation rule is
known to the predictor. Formally, this can be conveyed to the predictor
using an small header of less than $\log_{2}(\ell^{*}+1)\leq\log(d+1)$
bits. Practically, this is unnecessary and an efficient predictor
can be learned from a labeled dataset $(\boldsymbol{z},\boldsymbol{y})$.
\item \uline{\label{enu:The-rank-of}The rank of \mbox{$\Sigma_{\boldsymbol{f}}^{*}$}:}
The rank of the covariance matrix of the least favorable prior is
an \emph{effective dimension}, satisfying (see \eqref{eq: Linear MSE mixed minimax regret})
\begin{equation}
\ell^{*}=\argmax_{\ell\in[d]\backslash[r]}\frac{1-(r/\ell)}{\frac{1}{\ell}\sum_{i=1}^{\ell}\lambda_{i}^{-1}}.\label{eq: determinig ell star}
\end{equation}
By convention, $\{\lambda_{i}^{-1}\}_{i\in[d]}$ is a monotonic non-decreasing
sequence, and so is the partial Cesàro mean $\psi(\ell)\dfn\frac{1}{\ell}\sum_{i=1}^{\ell}\lambda_{i}^{-1}$.
For example, if $\lambda_{i}=i^{-\alpha}$ with $\alpha>0$ then $\psi(\ell)=\Theta(\ell^{\alpha})$.
So, if, e.g., $\psi(\ell)=\ell^{\alpha}$, then it is easily derived
that $\ell^{*}\approx\min\{\frac{\alpha+1}{\alpha}r,d\}$. Hence,
if $\alpha\geq\frac{r}{d-r}$ is large enough and the decay rate of
$\{\lambda_{i}\}$ is fast enough then $\ell^{*}<d$, but otherwise
$\ell^{*}=d$. As the decay rate of $\{\lambda_{i}\}$ becomes faster,
the rank of $\Sigma_{\boldsymbol{f}}^{*}$ decreases from $d$ to
$r$. Importantly, $\ell^{*}\geq r+1$ always holds, and so the optimal
mixed representation is not deterministic even if $S^{1/2}\Sigma_{\boldsymbol{x}}S^{1/2}$
has less than $r$ significant eigenvalues (which can be represented
by a single matrix $R\in\mathbb{R}^{d\times r}$). Hence, the mixed
minimax regret is always \emph{strictly lower} than the pure minimax
regret. Thus, even when $S=I_{d}$, and no valuable prior knowledge
is known on the response function, the mixed minimax representation
is different from the standard PCA solution of top $r$ eigenvectors
of $\Sigma_{\boldsymbol{x}}$. 
\item \uline{\label{enu:Uniqueness-of-the}Uniqueness of the optimal
representation:} Since one can always post-multiply $R^{\top}x$ by
some invertible matrix, and then pre-multiply $z=R^{\top}x$ by its
inverse, the following holds: If ${\cal R}$ and ${\cal Q}$ are not
further restricted, then if $\boldsymbol{R}$ is a minimax representation,
and $W(\boldsymbol{R})\in\mathbb{R}^{r\times r}$ is an invertible
matrix, then $\boldsymbol{R}\cdot W(\boldsymbol{R})$ is also a minimax
representation. 
\item \uline{Infinite-dimensional features:} Theorems \ref{thm: pure minimax regret linear quadratic constraint}
and \ref{thm: mixed minimax regret linear quadratic constraint} assume
a finite dimensional feature space. We show in Appendix \ref{sec:The-Hilbert-space}
that the results can be generalized to an infinite dimensional Hilbert
space ${\cal X}$, in the more restrictive setting that the noise
$\boldsymbol{n}$ is statistically independent of $\boldsymbol{x}$. 
\end{enumerate}
\begin{example}
\label{exa: Identity of response weights}Assume $S=I_{d}$, and denote,
for brevity, $V\equiv V(\Sigma_{\boldsymbol{x}})\dfn[v_{1},\ldots,v_{d}]$
and $\Lambda\equiv\Lambda(\Sigma_{\boldsymbol{x}})\dfn\diag(\lambda_{1},\ldots,\lambda_{d})$.
The optimal minimax representation in pure strategies (Theorem \ref{thm: pure minimax regret linear quadratic constraint})
is \textbf{
\begin{align}
R^{*}=\Sigma_{\boldsymbol{x}}^{-1/2}\cdot V_{1:r}=V\Lambda_{\boldsymbol{x}}^{-1/2}V^{\top}V_{1:r} & =V\Lambda_{\boldsymbol{x}}^{-1/2}\cdot[e_{1},\ldots,e_{r}]=\left[\lambda_{1}^{-1/2}\cdot v_{1},\ldots,\lambda_{r}^{-1/2}\cdot v_{r}\right],\label{eq: pure minimax representation S is identity}
\end{align}
}which is comprised of the top $r$ eigenvectors of $\Sigma_{\boldsymbol{x}}$,
scaled so that $v_{i}^{\top}\boldsymbol{x}$ has unit variance. By
Comment \ref{enu:Uniqueness-of-the} above, $V_{1:r}$ is also an
optimal minimax representation. The worst case response is $f=v_{r+1}(\Sigma_{\boldsymbol{x}})$
and, as expected, since $\rep$ uses the first $r$ principal directions
\[
\regret_{\pure}({\cal F}\mid\Sigma_{\boldsymbol{x}})=\lambda_{r+1}.
\]
The minimax regret in mixed strategies (Theorem \ref{thm: mixed minimax regret linear quadratic constraint})
is different, and given by
\[
\regret_{\mix}({\cal F}\mid\Sigma_{\boldsymbol{x}})=\frac{\ell^{*}-r}{\sum_{i=1}^{\ell^{*}}\lambda_{i}^{-1}},
\]
where $\ell^{*}$ is determined by the decay rate of the eigenvalues
of $\Sigma_{\boldsymbol{x}}$ (see \eqref{eq: condition on optimal rank of least favorable covariance matrix}).
It can be easily verified that the mixed minimax regret is \emph{strictly
lower} than the pure minimax regret. Indeed, it holds for any $\ell>r$
that 
\begin{equation}
\frac{\ell-r}{\sum_{i=1}^{\ell}\lambda_{i}^{-1}}\leq\frac{\ell-r}{\ell\lambda_{r+1}^{-1}}\leq\lambda_{r+1}.\label{eq: maximin regret is less than minimax regret}
\end{equation}

The least favorable covariance matrix is given by (Theorem \ref{thm: mixed minimax regret linear quadratic constraint})\textbf{
\[
\Sigma_{\boldsymbol{f}}^{*}=\left[\sum_{i=1}^{\ell^{*}}\lambda_{i}^{-1}\right]^{-1}\cdot V\diag\left(\lambda_{1}^{-1},\ldots,\lambda_{\ell^{*}}^{-1},0,\cdots,0\right)\cdot V^{\top}.
\]
}Intuitively, $\Sigma_{\boldsymbol{f}}^{*}$ equalizes the first $\ell^{*}$
eigenvalues of $\Sigma_{\boldsymbol{x}}\Sigma_{\boldsymbol{f}}^{*}$
(and nulls the other $d-\ell^{*}$), to make the representation indifferent
to these $\ell^{*}$ directions. As evident from the regret, the ``equalization''
of the $i$th eigenvalue adds a term of $\lambda_{i}^{-1}$ to the
denominator, and if $\lambda_{i}$ is too small then $v_{i}$ is not
chosen for the representation. This agrees with Comment \ref{enu:The-rank-of},
which states that a fast decay of $\{\lambda_{i}\}$ reduces $\ell_{*}$
away from $d$. A derivation similar to \eqref{eq: pure minimax representation S is identity}
shows that the mixed minimax representation sets 
\[
\boldsymbol{R}^{*}=\Sigma_{\boldsymbol{x}}^{-1/2}\cdot V_{{\cal I}_{j}}=\left[\lambda_{i_{j,1}}^{-1/2}\cdot v_{i_{j,1}},\ldots,\lambda_{i_{j,r}}^{-1/2}\cdot v_{i_{j,r}}\right]
\]
with probability $p_{j}$, where ${\cal I}_{j}\equiv\{i_{j,1},\ldots,i_{j,r}\}$.
Thus, the optimal representation chooses a random subset of $r$ vectors
from $\{v_{1},\ldots,v_{\ell^{*}}\}$. See the left panel of Figure
\ref{fig: linear MSE regret examples} for a numerical example.

\begin{figure}
\centering{}\includegraphics[scale=0.18]{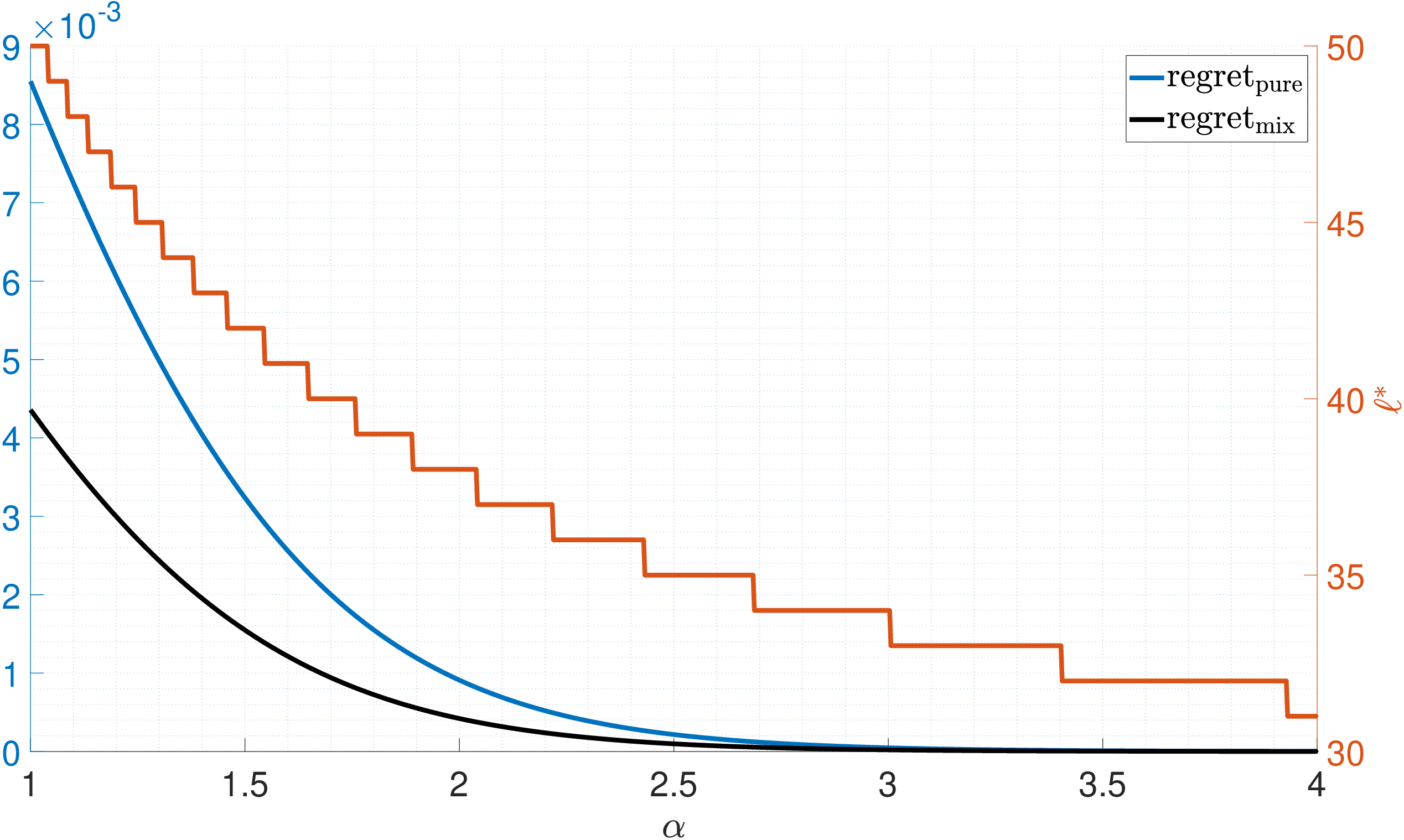}\includegraphics[scale=0.18]{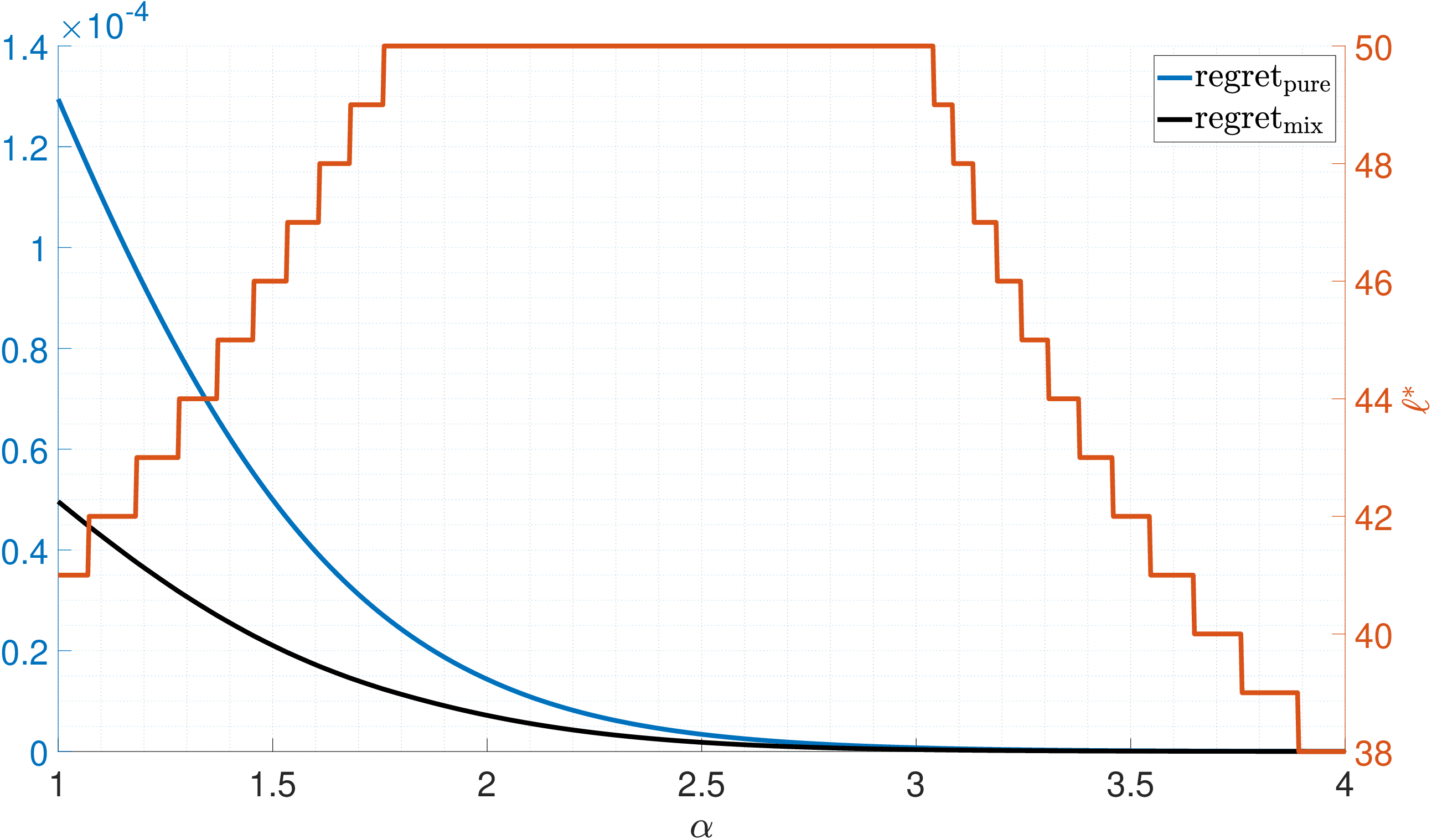}\caption{Left: Pure and mixed minimax regret and $\ell_{*}$ for Example \ref{exa: Identity of response weights},
for $d=50,r=25$, with $\text{\ensuremath{\lambda_{i}=\sigma_{i}^{2}\propto i^{-\alpha}}}$.
Right: Pure and mixed minimax regret and $\ell_{*}$ for Example \ref{exa: diagonal case},
for $d=50,r=25$, with $\text{\ensuremath{\sigma_{i}^{2}\propto i^{-\alpha}} and \ensuremath{s_{i}\propto i^{2}}}$.
The trend of $\ell_{*}$ is reversed for $\alpha>2$. \label{fig: linear MSE regret examples}}
\end{figure}
\end{example}

\begin{example}
\label{exa: diagonal case}To demonstrate the effect of prior knowledge
on the response function, we assume $\Sigma_{\boldsymbol{x}}=\diag(\sigma_{1}^{2},\ldots,\sigma_{d}^{2})$
and $S=\diag(s_{1},\ldots,s_{d})$, where $\sigma_{1}^{2}\geq\sigma_{2}^{2}\geq\cdots\geq\sigma_{d}^{2}$
(but $\{s_{i}\}_{i\in[d]}$ are not necessarily ordered). Letting
$f=(f_{1},\ldots,f_{d})$, the class of response functions is ${\cal F}_{S}\dfn\{f\in\mathbb{R}^{d}\colon\sum_{i=1}^{d}(f_{i}^{2}/s_{i})\leq1\}$,
and so coordinates $i\in[d]$ with a large $s_{i}$ have large influence
on the response. Let $(i_{(1)},\ldots,i_{(d)})$ be a permutation
of $[d]$ so that $\sigma_{i(j)}^{2}s_{i(j)}$ it the $j$th largest
value of $(\sigma_{i}^{2}s_{i})_{i\in[d]}$. The pure minimax regret
is (Theorem \ref{thm: pure minimax regret linear quadratic constraint})
\[
\regret_{\pure}({\cal F}\mid\Sigma_{\boldsymbol{x}})=\sigma_{i_{r+1}}^{2}s_{i_{r+1}}.
\]
The optimal representation is $R=[e_{i_{(1)}},e_{i_{(2)}},\ldots,e_{i_{(r)}}]$,
that is, uses the most influential coordinates, according to $\{s_{i}\}$,
which may be different from the $r$ principal directions of $\Sigma_{\boldsymbol{x}}$.
For the minimax regret in mixed strategies, Theorem \ref{thm: mixed minimax regret linear quadratic constraint}
results
\[
\regret_{\mix}({\cal F}\mid\Sigma_{\boldsymbol{x}})=\frac{\ell^{*}-r}{\sum_{j=1}^{\ell^{*}}(s_{i_{j}}\sigma_{i_{j}}^{2})^{-1}}
\]
for $\ell^{*}\in[d]\backslash[r]$ satisfying \eqref{eq: condition on optimal rank of least favorable covariance matrix},
and the covariance matrix of the least favorable prior is given by
\[
\Sigma_{\boldsymbol{f}}^{*}=\frac{\sum_{j=1}^{\ell^{*}}\sigma_{i_{j}}^{-2}\cdot e_{i_{j}}e_{i_{j}}^{\top}}{\sum_{j=1}^{\ell^{*}}(s_{i_{j}}\sigma_{i_{j}}^{2})^{-1}}.
\]
That is, the matrix is diagonal, and the $k$th term on the diagonal
is $\Sigma_{\boldsymbol{f}}^{*}(k,k)\propto\sigma_{k}^{-2}$ if $k=i_{j}$
for some $j\in[\ell^{*}]$ and $\Sigma_{\boldsymbol{f}}^{*}(k,k)=0$
otherwise. As in Example \ref{exa: Identity of response weights},
$\Sigma_{\boldsymbol{f}}^{*}$ equalizes the first $\ell^{*}$ eigenvalues
of $\Sigma_{\boldsymbol{x}}\Sigma_{\boldsymbol{f}}$ (and nulls the
other $d-\ell^{*}$). However, it does so in a manner that chooses
them according to their influence on the response $\boldsymbol{f}^{\top}\boldsymbol{x}$.
The minimax representation in mixed strategies is 
\[
\boldsymbol{R}^{*}=\left[\sigma_{i_{j,1}}^{-1}\cdot e_{i_{j,1}},\ldots,\sigma_{i_{j,r}}^{-1}\cdot e_{i_{j,r}}\right]
\]
with probability $p_{j}$. Again, the first $\ell^{*}$ coordinates
are used, and not just the top $r$. See the right panel of Figure
\ref{fig: linear MSE regret examples} for a numerical example. Naturally,
in the non-diagonal case, the minimax regret will also depend on the
relative alignment between $S$ and $\Sigma_{\boldsymbol{x}}$.
\end{example}

\section{An algorithm for general classes and loss functions \label{sec:An-iterative-algorithm}}

In this section, we propose an iterative algorithm for optimizing
the representation in mixed strategies, i.e., solving \eqref{eq: minimax regret mixed}
for general classes and loss functions. The algorithm will find a
\emph{finite} mixture of $m$ representations, and so we let $\boldsymbol{\rep}=\rep^{(j)}\in{\cal R}$
with probability $p^{(j)}$ for $j\in[m]$ (this suffices for the
linear MSE setting of Section \ref{sec:The-linear-setting}, but only
approximate solution in general). The main idea is to \emph{incrementally}
add representations $\rep^{(j)}$ to the mixture. Loosely speaking,
the algorithm is initialized with a single representation rule $\rep^{(1)}$.
Then, the response function in ${\cal F}$ which is most poorly predicted
when $\boldsymbol{x}$ is represented by $\rep^{(1)}(\boldsymbol{x})$
is found. The added representation component $\rep^{(2)}$ aims to
allow for accurate prediction of this function. At the next iteration,
the function in ${\cal F}$ which is most poorly predicted by a mixed
representation that randomly uses either $\rep^{(1)}$ or $\rep^{(2)}$
is found, and $\rep^{(3)}$ is then optimized to reduce the regret
for this function, and so on. The idea of optimizing mixture models
by gradually adding components to the mixture is common, and is used
in boosting and generative adversarial network (GANs) (See Appendix
\ref{sec:Additional-related-work} for a review). 

The actual algorithm is more complicated, since it also finds proper
weights $\{p^{(j)}\}_{j\in[m]}$, and also randomizes the response
player. Thus we set $\boldsymbol{f}=f^{(i)}\in{\cal F}$ with probability
$o^{(i)}$ where $i\in[\overline{m}]$, and $\overline{m}=m_{0}+m$
for some $m_{0}\geq0$. The resulting optimization problem then becomes
\begin{equation}
\min_{\{p^{(j)},\rep^{(j)}\in{\cal R}\}}\max_{\{o^{(i)},f^{(i)}\in{\cal F}\}}\sum_{j\in[m]}\sum_{i\in[\overline{m}]}p^{(j)}\cdot o^{(i)}\cdot\regret(\rep^{(j)},f^{(i)}\mid P_{\boldsymbol{x}}),\label{eq: mixed optimization for online alg}
\end{equation}
where $\{p^{(j)}\}_{j\in[m]}$ and $\{o^{(i)}\}_{i\in[\overline{m}]}$
are probability vectors. Note that the prediction rule $\pre^{(j,i)}$
is determined based on both $\rep^{(j)}$ and $f^{(i)}$, and that
the ultimate goal of solving \eqref{eq: mixed optimization for online alg}
is just to extract the optimal $\boldsymbol{\rep}$. 

Henceforth, the index $k\in[m]$ will denote the current number of
representations. Initialization requires a representation $\rep^{(1)}$,
as well as a\emph{ set }of functions $\{f^{(i)}\}_{i\in m_{0}}$,
so that the final support size of $\boldsymbol{f}$ will be $\overline{m}=m_{0}+m$.
Finding this initial representation and the set of functions is based
on the specific loss function and the set of representation/predictors
(see Appendix \ref{sec:Iterative-algorithms-for} for examples). The
algorithm has two phases for each iteration. In the first phase, a
new adversarial function is added to the set of functions, as the
worse function for the current random representation. In the second
phase, a new representation atom is added to the set of possible representations.
This representation is determined based on the given set of functions.
Concretely, at iteration $k$:
\begin{itemize}
\item Phase 1 (Finding adversarial function): Given $k$ representations
$\{\rep^{(j)}\}_{j\in(k)}$ with weights $\{p^{(j)}\}_{j\in[k]}$,
the algorithm determines $f^{(m_{0}+k)}$ as the worst function, by
solving
\begin{equation}
\text{reg}_{k}:=\regret_{\mix}(\{\rep^{(j)},p^{(j)}\}_{j\in[k]},{\cal F}\mid P_{\boldsymbol{x}}):=\max_{f\in{\cal F}}\sum_{j\in[k]}p^{(j)}\cdot\regret(\rep^{(j)},f\mid P_{\boldsymbol{x}}),\label{eq: phase 1 iterative alg}
\end{equation}
and $f^{(m_{0}+k)}$ is set to be the maximizer. This simplifies \eqref{eq: mixed optimization for online alg}
in the sense that $m$ is replaced by $k$, the random representation
$\boldsymbol{\rep}$ is kept fixed, and $f\in{\cal F}$ is optimized
as a pure strategy (the previous functions $\{f^{(i)}\}_{i\in[m_{0}+k-1]}$
are ignored).
\item Phase 2 (Adding a representation atom): Given fixed $\{f^{(j)}\}_{j\in[m_{0}+k]}$
and $\{\rep^{(j)}\}_{j\in[k]}$, a new representation $\rep^{(k+1)}$
is found as the most incrementally valuable representation atom, by
solving
\begin{align}
 & \min_{\rep^{(k+1)}\in{\cal R}}\regret_{\mix}(\{\rep^{(j_{1})}\},\{f^{(j_{2})}\}\mid P_{\boldsymbol{x}}):=\nonumber \\
 & \min_{\rep^{(k+1)}\in{\cal R}}\min_{\{p^{(j_{1})}\}}\max_{\{o^{(j_{2})}\}}\sum_{j_{1}}\sum_{j_{2}}p^{(j_{1})}\cdot o^{(j_{2})}\cdot\regret(\rep^{(j_{1})},f^{(j_{2})}\mid P_{\boldsymbol{x}})\label{eq: phase 2 iterative alg}
\end{align}
where $j_{1}\in[k+1]$ and $j_{2}\in[m_{0}+k]$, the solution $\rep^{(k+1)}$
is added to the representations, and the weights are updated to the
optimal $\{p^{(j_{1})}\}$. Compared to \eqref{eq: mixed optimization for online alg},
the response functions and first $k$ representations are kept fixed,
and only the weights $\{p^{(j_{1})}\}$ $\{o^{(j_{2})}\}$ and $\rep^{(k+1)}$
are optimized. 
\end{itemize}
The procedure is described in Algorithm \ref{alg: Iterative algorithm},
where, following the main loop, $m^{*}=\argmin_{k\in[m]}\text{reg}_{k}$
representation atoms are chosen and the output is $\{\rep^{(j)},p^{(j)}\}_{j\in[m^{*}]}$.
Algorithm \ref{alg: Iterative algorithm} relies on solvers for the
Phase 1 \eqref{eq: phase 1 iterative alg} and Phase 2 \eqref{eq: phase 2 iterative alg}
problems. In Appendix \ref{sec:Iterative-algorithms-for} we propose
two algorithms for these problems, which are based on gradient steps
for updating the adversarial response (assuming ${\cal F}$ is a continuous
class) and the new representation, and on the MWU\emph{ }algorithm
\citep{freund1999adaptive} for updating the weights. In short, the
Phase 1 algorithm updates the response function $f$ via a projected
gradient step of the expected loss, and then adjusts the predictors
$\{\pre^{(j)}\}$ to the updated response function $f$ and the current
representations $\{\rep^{(j)}\}_{j\in[k]}$. The Phase 2 algorithm
only updates the new representation $R^{(k+1)}$ via projected gradient
steps, while keeping $\{R^{(j)}\}_{j\in[k]}$ fixed. Given the representations
$\{\rep^{(j)}\}_{j\in[k+1]}$ and the functions $\{f^{(i)}\}_{i\in[m_{0}+k]}$,
a predictor $\pre^{(j,i)}$ is then fitted to each representation-function
pair, which also determines the loss for this pair. The weights $\{p^{(j)}\}_{j\in[k+1]}$
and $\{o^{(i)}\}_{i\in[m_{0}+k]}$ are updated towards the equilibrium
of the two-player game determined by the loss of the predictors $\{\pre^{(j,i)}\}_{j\in[k+1],i\in[m_{0}+k]}$
via the MWU\emph{ }algorithm. In a multi-task learning setting, in
which the class ${\cal F}$ is finite, the Phase 1 solver can be replaced
by a performing a simple maximization over the functions. 
\begin{algorithm*}
\begin{algorithmic}[1]

\State \textbf{input }$P_{\boldsymbol{x}},{\cal R},{\cal F},{\cal Q},d,r,m,m_{0}$\Comment{Feature
distribution, classes, dimensions and parameters}

\State \textbf{input }$\rep^{(1)}$ $,\{f^{(j)}\}_{j\in[m_{0}]}$\Comment{Initial
representation and initial function (set)}

\State \textbf{begin }

\For{$k=1$ to $m$}\textbf{ }

\State \textbf{phase 1:} $f^{(m_{0}+k)}$ is set by a solver of \eqref{eq: phase 1 iterative alg}
and: \Comment{Solved using Algorithm \ref{alg: phase 1 sol} }
\[
\text{reg}_{k}\leftarrow\regret_{\mix}(\{\rep^{(j)},p^{(j)}\}_{j\in[k]},{\cal F}\mid P_{\boldsymbol{x}})
\]

\State \textbf{phase 2:} $\rep^{(k+1)},\{p_{k}^{(j)}\}_{j\in[k+1]}$
is set by a solver of \eqref{eq: phase 2 iterative alg}\Comment{Solved
using Algorithm \ref{alg: phase 2 sol}}

\EndFor  

\State \textbf{set} $m^{*}=\argmin_{k\in[m]}\text{reg}_{k}$

\State \textbf{return $\{\rep^{(j)}\}_{j\in[m^{*}]}$ and $p_{m_{*}}=\{p_{k}^{(j)}\}_{j\in[m^{*}]}$}

\end{algorithmic}

\caption{Solver of \eqref{eq: mixed optimization for online alg}: An iterative
algorithm for learning mixed representations. \label{alg: Iterative algorithm}}
\end{algorithm*}

Since Algorithm \ref{alg: Iterative algorithm} aims to solve a non
convex-concave minimax game, deriving theoretical bounds on its convergence
seems to be elusive at this point (see Appendix \ref{sec:Additional-related-work}
for a discussion). We next describe a few experiments with the algorithm
(See Appendix \ref{sec:Details-for-the} for details). 
\begin{example}
\label{exa: Algorithm linear MSE}We validated Algorithm \ref{alg: Iterative algorithm}
in the linear MSE setting (Section \ref{sec:The-linear-setting}),
for which a closed-form solution exists. We ran Algorithm \ref{alg: Iterative algorithm}
on randomly drawn diagonal $\Sigma_{\boldsymbol{x}}$, and computed
the ratio between the regret obtained by the algorithm to the theoretical
value. The left panel of Figure \ref{fig: theory-alg} shows that
the ratio is between $1.15-1.2$ in a wide range of $d$ values. Algorithm
\ref{alg: Iterative algorithm} is also useful for this setting since
finding an $(\ell^{*}+1)$-sparse solution to $\overline{A}p=\overline{b}$
is computationally difficult when ${\ell^{*} \choose r}$ is very
large (e.g., in the largest dimension of the experiment ${d \choose r}={19 \choose 5}=11,628$). 
\end{example}

Our next example pertains to a logistic regression setting, under
the cross-entropy loss function.
\begin{defn}[The linear cross-entropy setting]
\label{def: linear logistic log loss }Assume that ${\cal X}=\mathbb{R}^{d}$,
that ${\cal Y}=\{\pm1\}$ and that $\E[\boldsymbol{x}]=0$. Assume
that the class of representation is linear $z=\rep(x)=R^{\top}x$
for some $R\in{\cal R}:=\mathbb{R}^{d\times r}$ where $d>r$. Assume
that a response function and a prediction rule determine the probability
that $y=1$ via logistic regression modeling, as $f(\boldsymbol{y}=\pm1\mid x)=1/[1+\exp(\mp f^{\top}x)]$.
Assume the cross-entropy loss function, where given that the prediction
that $\boldsymbol{y}=1$ with probability $q$ results the loss $\loss(y,q)\dfn-\frac{1}{2}(1+y)\log q-\frac{1}{2}\left(1-y\right)\log(1-q)$.
The set of predictor functions is ${\cal Q}\dfn\left\{ \pre(z)=1/[1+\exp(-q^{\top}\boldsymbol{z})],\;q\in\mathbb{R}^{r}\right\} $.
As for the linear case, we assume that $f\in{\cal F}_{S}$ for some
$S\in\mathbb{S}_{++}^{d}$. The regret is then given by the expected
binary Kullback-Leibler (KL) divergence
\begin{align}
\regret(\rep,f\mid P_{\boldsymbol{x}}) & =\min_{q\in\mathbb{R}^{r}}\E\left[D_{\text{KL}}\left([1+\exp(-f^{\top}\boldsymbol{x})]^{-1}\mid\mid[1+\exp(-q^{\top}R^{\top}\boldsymbol{x})]^{-1}\right)\right].
\end{align}
\end{defn}

\begin{example}
\label{exa: Algorithm linear logistic}We drawn empirical distributions
of features from an isotropic normal distribution, in the linear cross-entropy
setting. We ran Algorithm \ref{alg: Iterative algorithm} using the
closed-form regret gradients from Appendix \ref{sec:Details-for-the}.
The right panel of Figure \ref{fig: theory-alg} shows the reduced
regret obtained by increasing the support size $m$ of the random
representation, and thus the effectiveness of mixed representations.
\end{example}

The last example compares the optimized representation with that of
PCA. 
\begin{example}[Comparison with PCA for multi-label Classification]
\label{exa: images with 4 shapes}We constructed a dataset of images,
each containing $4$ shapes randomly selected from a dictionary of
$6$ shapes. The class ${\cal F}$ is finite and contains the $t=6$
binary classification functions given by indicators for each shape.
The representation is linear and the predictor is based on logistic
regression. In this setting, Algorithm \ref{alg: Iterative algorithm}
can be simplified; See Appendix \ref{subsec:An-experiment-with} for
details (specifically Definition \ref{def:muli-label classification setting}
of the setting and Figure \ref{fig:image with shapes} for an image
example). We ran the simplified version of Algorithm \ref{alg: Iterative algorithm}
on a dataset of $1000$ images, and compared the cross-entropy loss
and the accuracy of optimized representation to that of PCA on a fresh
dataset of $1000$ images. The results in Figure \ref{fig: Optimized representation vs PCA}
show that the regret of PCA is much larger, not only for the worse-case
function but also for the average-case function. For example, using
the representation obtained by the algorithm with $r=3<t=6$ is as
good as PCA with at least $r=12$. From a different point of view,
in order to achieve the loss of the optimized representation output
by our algorithm with $r=1$ more than $r=15$ dimensions are required
for PCA. 
\end{example}

\begin{figure}
\begin{centering}
\includegraphics[scale=0.5]{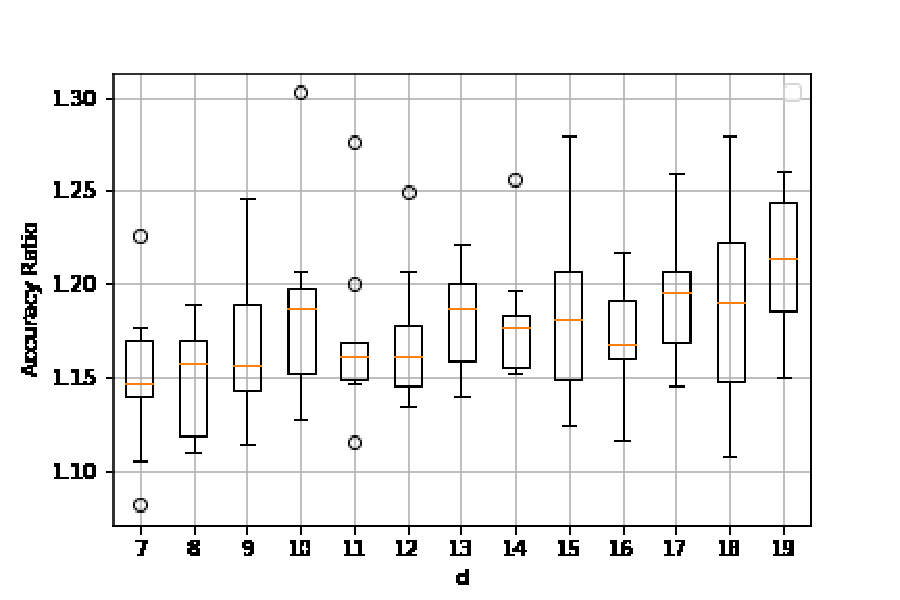}\includegraphics[scale=0.5]{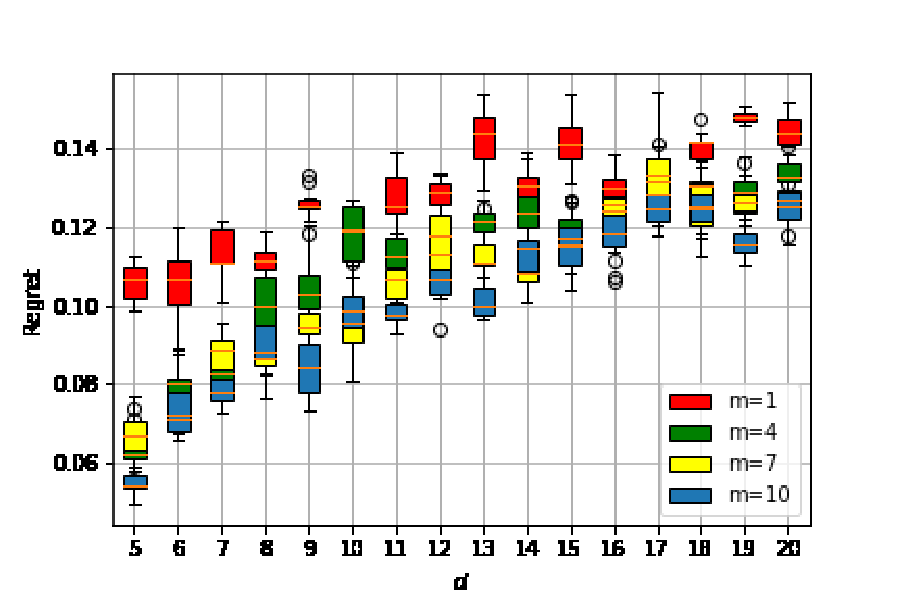}
\par\end{centering}
\caption{Results of Algorithm \ref{alg: Iterative algorithm}. Left: $r=5$,
varying $d$. The ratio between the regret achieved by Algorithm \ref{alg: Iterative algorithm}
and the theoretical regret in the linear MSE setting. Right: $r=3$,
varying $d$. The regret achieved by Algorithm \ref{alg: Iterative algorithm}
in the linear cross-entropy setting, various $m$. \label{fig: theory-alg}}
\end{figure}

\begin{figure}
\centering{}\includegraphics[scale=0.5]{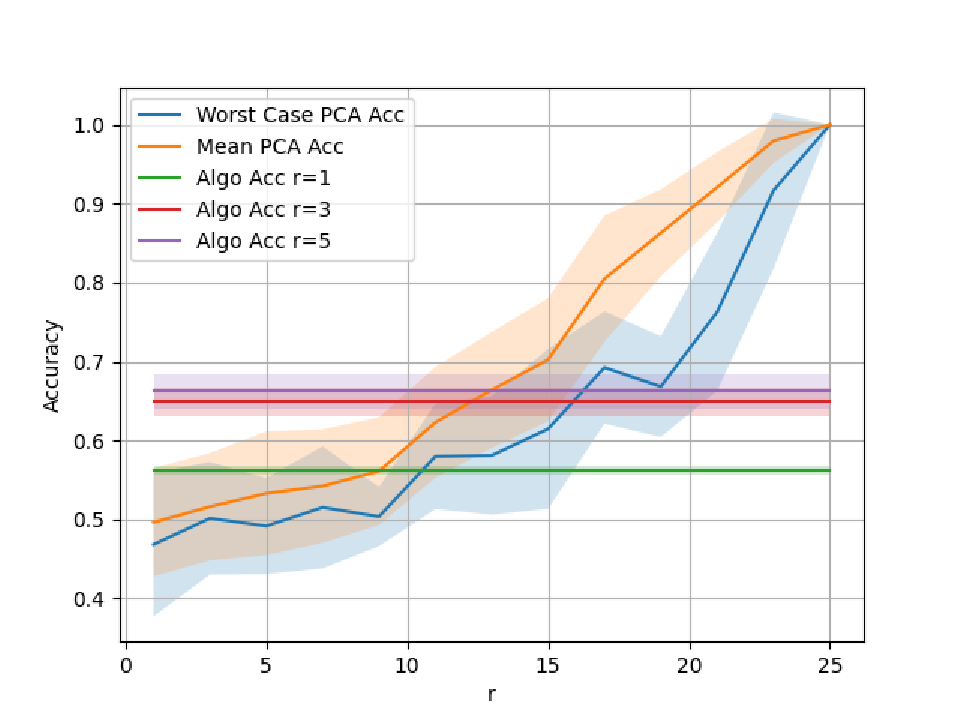}\includegraphics[scale=0.5]{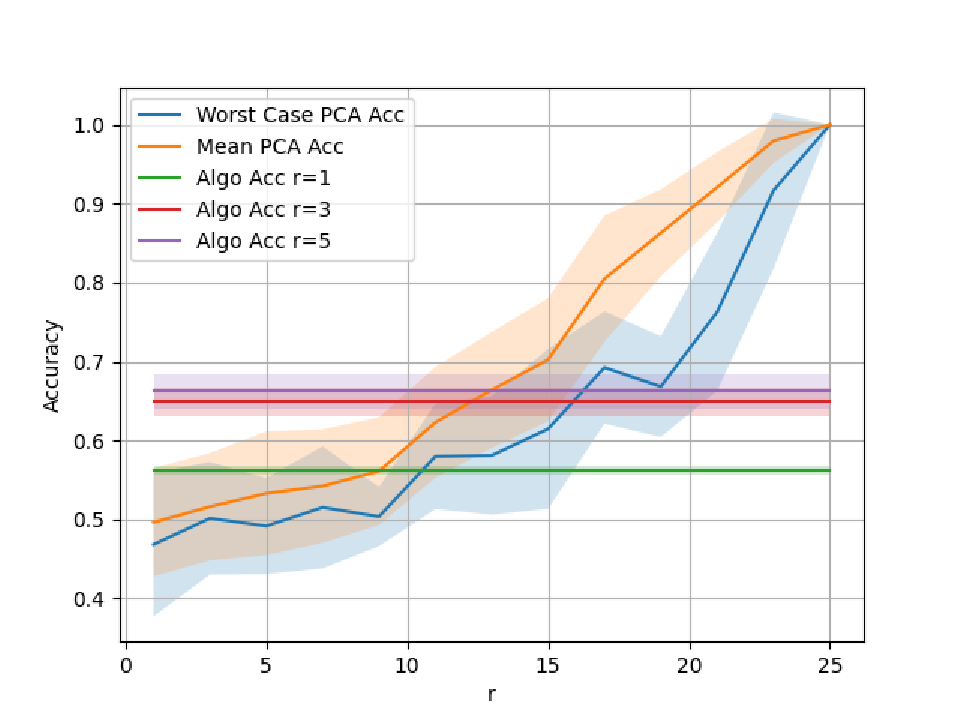}\caption{Results on the dataset of images. Comparison between optimized minimax
representation (simplified version of Algorithm \ref{alg: Iterative algorithm})
vs. PCA. Worst-case function in blue, and average-case function in
orange. Left: Cross entropy loss. Right: Accuracy. \label{fig: Optimized representation vs PCA}}
\end{figure}

\section{Conclusion \label{sec:Conclusion}}

We proposed a game-theoretic formulation for learning feature representations
when prior knowledge on the class of downstream prediction tasks is
available. We derived the optimal solution for the fundamental linear
MSE setting, which quantify the gain of prior knowledge and the possibility
to randomize the representation. We then proposed an iterative algorithm
suitable for general classes of functions and losses, and exemplified
its effectiveness. Interestingly, our formulation links between the
problem of finding jointly efficient representations for multiple
prediction tasks, and the problem of finding saddle points of non-convex/non-concave
games. The latter is a challenging problem and is under active research
(see Appendix \ref{sec:Additional-related-work}). So, any advances
in that domain can be translated to improve representation learning.
Similarly to our problem, foundation models also adapt to wide range
of downstream prediction tasks, however, efficient learning is achieved
therein without explicit prior knowledge on the class of downstream
tasks, albeit using fine-tuning. It is interesting to incorporate
minimax formulations into the training procedure of foundation models,
or to explain them using similar game formulations. 

For future research it would be interesting: (1) To generalize the
class ${\cal F}_{S}=\{f\colon\|f\|_{S}\leq1\}$ used for linear functions
to the class ${\cal F}_{S_{x}}\dfn\E\left[\|\nabla_{x}f(\boldsymbol{x})\|_{S_{\boldsymbol{x}}}^{2}\right]\leq1$
for non-linear functions, where $\{S_{x}\}_{x\in\mathbb{R}^{d}}$
is now locally specified (somewhat similarly to the regularization
term used in contractive AE \citep{rifai2011contractive}, though
for different reasons). (2) To efficiently learn $S$ from previous
experience, e.g., improving $S$ from one episode to another in a
meta-learning setup \citep{hospedales2021meta}. (3) To evaluate the
effectiveness of the learned representation in our formulation, as
an initialization for further optimization when labeled data is collected.
One may postulate that our learned representation may serve as a \emph{universal}
initialization for such training.

\appendix
The outline of the appendix follows. In Appendix \ref{sec:Classes-of-response}
we discuss in detail the origin of prior knowledge of classes of prediction
problems. In Appendix \ref{sec:Additional-related-work} we review
additional related work. In Appendix \ref{sec:Notation-conventions}
we set our notation conventions. In Appendix \ref{sec:Useful-mathematical-results}
we summarize a few mathematical results that are used in later proofs.
In Section \ref{sec:The-linear-MSE} we show that PCA can be cast
as a degenerate setting of our formulation, and provide the proofs
of the main theorems in the paper (the linear MSE setting). In Appendix
\ref{sec:The-Hilbert-space} we generalize these results to an infinite
dimensional Hilbert space. In Appendix \ref{sec:Iterative-algorithms-for}
we provide two algorithms for solving the Phase 1 and Phase 2 problems
in Algorithm \ref{alg: Iterative algorithm}. In Appendix \ref{sec:Details-for-the}
we provide details on the examples for the experiments with the iterative
algorithm, and additional experiments.

\section{Classes of response functions \label{sec:Classes-of-response}}

Our approach to optimal representation is based on the assumption
that a class ${\cal F}$ of downstream prediction tasks is known.
This assumption may represent prior knowledge or constraints on the
response function, and can stem from various considerations. To begin,
it might be hypothesized that some features are less relevant than
others. As a simple intuitive example, the outer pixels in images
are typically less relevant to the classification of photographed
objects, regardless of their variability (which may stem from other
affects, such as lighting conditions). Similarly, non-coding regions
of the genotype are irrelevant for predicting phenotype. The prior
knowledge may encode softer variations in relevance. Moreover, such
prior assumption may be imposed on the learned function, e.g., it
may be assumed that the response function respects the privacy of
some features, or only weakly depends on features which provide an
unfair advantage. In domain adaptation \citep{mansour2009domain},
one may solve the prediction problem for feature distribution $P_{\boldsymbol{x}}$
obtaining a optimal response function $f_{1}$. Then, after a change
of input distribution to $Q_{\boldsymbol{x}}$, the response function
learned for this feature distribution $f_{2}$ may be assumed to belong
to functions which are ``compatible'' with $f_{1}$. For example,
if $P_{\boldsymbol{x}}$ and $Q_{\boldsymbol{x}}$ are supported on
different subsets of $\mathbb{R}^{d}$, the learned response function
$f_{1}(x)$ and $f_{2}(x)$ may be assumed to satisfy some type of
continuity assumptions. Similar assumptions may hold for the more
general setting of transfer learning \citep{ben2010theory}. Furthermore,
such assumptions may hold in a \emph{continual learning} setting \citep{zenke2017continual,nguyen2017variational,van2019three,aljundi2019task},
in which a sequence of response functions is learned one task at a
time. Assuming that \emph{catastrophic forgetting} is aimed to be
avoided, then starting from the second task, the choice of representation
may assume that the learned response function is accurate for all
previously learned tasks. 

\section{Additional related work \label{sec:Additional-related-work}}

\paragraph*{The information bottleneck principle}

The IB principle is a prominent approach to feature relevance in the
design of representations \citep{tishby2000information,chechik2003information,slonim2006multivariate,harremoes2007information},
and proposes to optimize the representation in order to maximize its
relevance to the response $\boldsymbol{y}$. Letting $I(\boldsymbol{z};\boldsymbol{y})$
and $I(\boldsymbol{x};\boldsymbol{\boldsymbol{z}})$ denote the corresponding
mutual information terms \citep{Cover:2006:EIT:1146355}, the IB principle
aims to maximize the former while constraining the latter from above,
and this is typically achieved via a Lagrangian formulation \citep{boyd2004convex}.
The resulting representation, however, is tailored to the joint distribution
of $(\boldsymbol{x},\boldsymbol{y})$, i.e., to a specific prediction
task. In practice, this is achieved using a labeled dataset (generalization
bounds were derived by \citet{shamir2010learning}). As in our mixed
representation approach, the use of randomized representation dictated
by a probability kernel $\boldsymbol{z}\sim f(\cdot\mid\boldsymbol{x}=x)$
is inherent to the IB principle. 

A general observation made from exploring deep neural networks (DNNs)
\citep{goodfellow2016deep} used for classification, is that practically
good predictors in fact first learn an efficient representation $\boldsymbol{z}$
of the features, and then just train a simple predictor from $\boldsymbol{z}$
to the response $\boldsymbol{y}=f(\boldsymbol{x})$. This is claimed
to be quantified by the IB where low mutual information $I(\boldsymbol{x};\boldsymbol{z})$
indicates an efficient representation, and high mutual information
$I(\boldsymbol{z};\boldsymbol{y})$ indicates that the representation
allows for efficient prediction. This idea has been further developed
using the IB principle, by hypothesizing that modern prediction algorithms
must intrinsically include learning an efficient representation \citep{tishby2015deep,shwartz2017opening,shwartz2022information,achille2018emergence,achille2018information}
(this spurred a debate, see, e.g., \citep{saxe2019information,geiger2021information}). 

However, this approach is inadequate in our setting, since the prediction
task is not completely specified to the learner, and in the IB formulation
the optimal representation depends on the response variable (so that
labeled data should be provided to the learner). In addition, as explained
by \citet{dubois2020learning}, while the resulting IB solution provides
a fundamental limit for the problem, it also suffers from multiple
theoretical and practical issues. The first main issue is that the
mutual information terms are inherently difficult to estimate from
finite samples \citep{shamir2010learning,nguyen2010estimating,poole2018variational,wu2020learnability,mcallester2020formal},
especially at high dimensions, and thus require resorting to approximations,
e.g., variational bounds \citep{chalk2016relevant,alemi2016deep,belghazi2018mutual,razeghi2022bottlenecks}.
The resulting generalization bounds \citep{shamir2010learning,vera2018role}
are still vacuous for modern settings \citep{rodriguez2019information}.
The second main issue is that the IB formulation does not constrain
the complexity of the representation and the prediction rule, which
can be arbitrarily complex. These issues were addressed by \citet{dubois2020learning}
using the notion of \emph{usable information}, previously introduced
by \citet{xu2020theory}: The standard mutual information $I(\boldsymbol{z};\boldsymbol{y})$
can be described as the log-loss difference between a predictor for
$\boldsymbol{z}$ which does not use or does use $\boldsymbol{y}$
(or vice-versa, since mutual information is symmetric). Usable information,
or ${\cal F}$-information $I_{{\cal F}}(\boldsymbol{z}\to\boldsymbol{y})$,
restricts the predictor to a class ${\cal F}$, which is computationally
constrained. Several desirable properties were established in \citet{xu2020theory}
for the ${\cal F}$-information, e.g., probably approximate correct
(PAC) bounds via Rademacher-complexity based bounds \citep{bartlett2002rademacher}
\citep[Chapter 5]{wainwright2019high}\citep[Chapters 26-28]{shalev2014understanding}. 

\citet{dubois2020learning} used the notion of ${\cal F}$-information
to define the \emph{decodable IB }problem, with the goal of assessing
the generalization capabilities of this IB problem, and shedding light
on the necessity of efficient representation for generalization. To
this end, a game was proposed, in which the data available to the
learner are feature-response pairs $(\boldsymbol{x},\boldsymbol{y})$
(in our notation $\boldsymbol{y}=f(\boldsymbol{x})$). In the game
proposed by \citet{dubois2020learning}, Alice chooses a prediction
problem $f(\cdot)$, i.e., a feature-response pair $(\boldsymbol{x};\boldsymbol{y})$,
and Bob chooses a representation $\boldsymbol{z}=\rep(\boldsymbol{x})$.
For comparison, in this paper, we assume that the learner is given
features $\boldsymbol{x}$ and a \emph{class }${\cal F}$ of prediction
problems. We ask how to choose the representation $\rep$ if it is
only known that the response function $Y=f(X)$ can be chosen adversarially
from ${\cal F}$. In our formulated game, the order of plays is thus
different. First, the representation player chooses $\rep$, and then
the adversarial function player chooses $f\in{\cal F}$. Beyond those
works, the IB framework has drawn a significant recent attention,
and a remarkable number of extensions and ramifications have been
proposed \citep{sridharan2008information,amjad2019learning,kolchinsky2019nonlinear,strouse2019information,pensia2020extracting,asoodeh2020bottleneck,ngampruetikorn2021perturbation,yu2021deep,ngampruetikorn2022information,gunduz2022beyond,razeghi2022bottlenecks,ngampruetikorn2023generalized}.
An IB framework for self-supervised learning was recently discussed
in \citep{ngampruetikorn2020information}. 

\paragraph*{Multitask learning and learning-to-learn}

In the problem of multitask learning and learning-to-learn \citep{baxter2000model,argyriou2006mulfti,maurer2016benefit,du2020few,tripuraneni2020theory,tripuraneni2021provable},
the goal of the learner is to jointly learn multiple downstream prediction
tasks. The underlying implicit assumption is that the tasks are similar
in some way, specifically, that they share a common low dimensional
representation. In the notation of this paper, a class ${\cal F}=\{f_{1},f_{2},\cdots,f_{t}\}$
of $t$ prediction task is given, and a predictor from $\rep\colon\mathbb{R}^{d}\to{\cal Y}$
is decomposed as $\pre_{i}(\rep(\boldsymbol{x}))$, where the representation
$\boldsymbol{z}=\rep(\boldsymbol{x})\in\mathbb{R}^{r}$ is common
to all tasks. The learner is given a dataset for each of the tasks,
and its goal is to learn the common representation, as well as the
$t$ individual predictors in the multitask setting. In the learning-to-learn
setting, the learner will be presented with a new prediction tasks,
and so only the representation is retained. 

\citet{maurer2016benefit} assumed that the representation is chosen
from a class ${\cal R}$ and the predictors from a class ${\cal Q}$
(in our notation), and generalization bounds on the average excess
risk for an empirical risk minimization (ERM) learning algorithm were
derived, highlighting the different complexity measures associated
with the problem. In the context of learning-to-learn, the bound in
\citep[Theorem 2]{maurer2016benefit} scales as $O(1/\sqrt{t})+O(1/\sqrt{m})$,
where $m$ is the number of samples provided to the learner from the
new task. In practice, it is often the case that learning-to-learn
is efficient even for small $t$, which implies that this bound is
loose. It was then improved by \citet{tripuraneni2020theory}, whose
main statement is that a proper notation of task diversity is crucial
for generalization. They then propose an explicit notion of task diversity,
controlled by a term $\nu$ and obtained a bound of the form $\tilde{O}(\frac{1}{\nu}\left(\sqrt{\frac{C({\cal R})+tC({\cal Q})}{nt}}+\sqrt{\frac{C({\cal Q})}{m}}\right))$
(in our notation), where $C({\cal R})$ and $C({\cal Q})$ are complexity
measures for the representation class and the prediction class. For
comparison, in this paper we focus on finding the optimal representation,
either theoretically (in the fundamental linear MSE setting) or algorithmically,
rather than relying on a generic ERM. To this end we side-step the
generalization error, and so the regret we define can be thought of
as an \emph{approximation error}. One direct consequence of this difference
is that in our case task diversity (rich ${\cal F}$) leads to a large
regret, whereas in \citep{tripuraneni2020theory} task diversity leads
to low generalization bound. From this aspect, our results complement
those of \citet{tripuraneni2020theory} for the non-realizable case
(that is, when the prediction tasks cannot be decomposed as a composition
$f(\boldsymbol{x})=\pre(\rep(\boldsymbol{x}))$.

\paragraph*{Randomization in representation learning }

Randomization is classically used in data representation, most notably,
utilizing the seminal Johnson-Lindenstrauss Lemma \citet{johnson1984extensions}
or more generally, \emph{sketching} algorithms (e.g., \citep{vempala2005random,mahoney2011randomized,woodruff2014sketching,yang2021reduce}).
Our use of randomization is different and is inspired by the classical
Nash equilibrium \citep{nash1950equilibrium}. Rather than using a
single deterministic representation that was randomly chosen, we consider
randomizing multiple representation rules. Such randomization is commonly
used in the face of future uncertainty, which in our setting is the
downstream prediction task. 

\paragraph*{Game-theoretic formulations in statistics and machine-learning }

The use of game-theoretic formulations in statistics, between a player
choosing a prediction algorithm and an adversary choosing a prediction
problem (typically Nature), was established by \citet{wald1939contributions}
in his classical statistical decision theory (see, e.g., \citep[Chapter 12]{wasserman2004all}).
It is a common approach both in classic statistics and learning theory
\citep{yang1999information,Grunwald2004Game,haussler1997mutual,farnia2016minimax},
as well as in modern high-dimensional statistics \citep{wainwright2019high}.
The effect of the representation (quantizer) on the consistency of
learning algorithms when a surrogate convex loss function replaces
the loss function of interest was studied in \citep{Nguyen2009OnSurrogate,Duchi2018Multiclass,Grunwald2004Game}
(for binary and multiclass classification, respectively). A relation
between information loss and minimal error probability was recently
derived by \citet{silva2022interplay}. 

Iterative algorithms for the solution of minimax games have drawn
much attention in the last few years due to their importance in optimizing
GANs \citep{goodfellow2020generative,creswell2018generative}, adversarial
training \citep{madry2017towards}, and robust optimization \citep{ben2009robust}.
The notion of convergence is rather delicate, even for the basic convex-concave
two-player setting \citep{salimans2016improved}. While the value
output by the MWU algorithm \citep{freund1999adaptive}, or improved
versions \citep{daskalakis2011near,rakhlin2013optimization} converges
to a no-regret solution, the actual strategies used by the players
are, in fact, repelled away from the equilibrium point to the boundary
of the probability simplex \citep{bailey2018multiplicative}. For
general games, the gradient descent ascent (GDA) is a natural and
practical choice, yet despite recent advances, its theory is still
partial \citep{zhang2022near}. Various other algorithms have been
proposed, e.g., \citep{schafer2019competitive,mescheder2017numerics,letcher2019differentiable,gidel2019negative,zhang2021suboptimality}. 

\paragraph*{Incremental learning of mixture models }

Our proposed Algorithm \ref{alg: Iterative algorithm} operates iteratively,
and each main iteration adds a representation rule as a new component
to the existing mixture of representation rules. More broadly, the
efficiency of algorithms that follow this idea is based on two principles:
(i) A powerful model can be obtained from a mixture of a few weak
models; (ii) Mixture models can be efficiently learned by a gradual
addition of components to the mixture, if the new component aims to
address the most challenging problem instance. As a classic example,
this idea is instantiated by the boosting method \citep{Schapire2012Boosting}
for classification, and was adapted for generative models by \citet{tolstikhin2017adagan}
for GANs. In boosting, the final classifier is a mixture of simpler
classifiers. Large weights are put on data points which are wrongly
classified with the current mixture of classifiers, and the new component
(classifier) is trained to cope with these samples. In GANs, the generated
distribution is a mixture is of generative models. Large weights are
put on examples which are easily discerned by the discriminator of
the true and generated distributions, and the new component (generative
distribution) optimizes the GAN objective on this weighted data. In
our setting, the final representation is a mixture of representation
rules. Weights are put on adversarial functions that cannot be accurately
predicted with the current representation matrices. The new representation
component aims to allow for accurate prediction of these functions.
Overall, the common intuitive idea is very natural: The learning algorithm
sees what is most lacking in the current mixture, and adds a new component
that directly aims to minimize this shortage. We refer the reader
to \citep{tolstikhin2017adagan} for a more in-depth exposition of
this idea. As mentioned by \citet{tolstikhin2017adagan}, this idea
dates back to the use of boosting for density estimation \citep{welling2002self}. 

\paragraph*{Unsupervised pretraining }

From a broader perspective, our method is essentially an \emph{unsupervised
pretraining} method, similar to the methods which currently enable
the recent success in natural language processing. Our model is much
simplified compared to transformer architecture \citep{vaswani2017attention},
but the unsupervised training aspect used for prediction tasks \citep{devlin2018bert}
is common, and our results may shed light on these methods. For example,
putting more weight on some words compared to others during training
phase that uses the masked-token prediction objective. 

\section{Notation conventions \label{sec:Notation-conventions}}

For an integer $d$, $[d]\dfn\{1,2,\ldots,d\}$. For $p\geq1$, $\|x\|_{p}\dfn(\sum_{i=1}^{d}|x_{i}|^{p})^{1/p}$
is the $\ell_{p}$ norm of $x\in\mathbb{R}^{d}$. The Frobenius norm
of the matrix $A$ is denoted by $\|A\|_{F}=\sqrt{\Tr[A^{T}A]}$ .
The non-negative (resp. positive) definite cone of symmetric matrices
is given by $\mathbb{S}_{+}^{d}$ (resp. $\mathbb{S}_{++}^{d}$).
For a given positive-definite matrix $S\in\mathbb{S}_{++}^{d}$, the
Mahalanobis norm of $x\in\mathbb{R}^{d}$ is given by $\|x\|_{S}\dfn\|S^{-1/2}x\|_{2}=(x^{\top}S^{-1}x)^{1/2}$,
where $S^{1/2}$ is the symmetric square root of $S$. The matrix
$W\dfn[w_{1},\ldots,w_{r}]\in\mathbb{R}^{d\times r}$ is comprised
from the column vectors $\{w_{i}\}_{i\in[r]}\subset\mathbb{R}^{d}$.
For a real symmetric matrix $S\in\mathbb{S}^{d}$, $\lambda_{i}(S)$
is the $i$th largest eigenvalue, so that $\lambda_{\text{max}}(S)\equiv\lambda_{1}(S)\geq\lambda_{2}(S)\geq\cdots\geq\lambda_{d}(S)=\lambda_{\text{min}}(S)$,
and in accordance, $v_{i}(S)$ denote an eigenvector corresponding
to $\lambda_{i}(S)$ (these are unique if there are no two equal eigenvalues,
and otherwise arbitrarily chosen, while satisfying orthogonality $v_{i}^{\top}v_{j}=\langle v_{i},v_{j}\rangle=\delta_{ij}$).
Similarly, $\Lambda(S)\dfn\diag(\lambda_{1}(S),\lambda_{2}(S),\cdots,\lambda_{d}(S))$
and $V(S)\dfn[v_{1}(S),v_{2}(S),\cdots,v_{d}(S)]$, so that $S=V(S)\Lambda(S)V^{\top}(S)$
is an eigenvalue decomposition. For $j\geq i$, $V_{i:j}\dfn[v_{i},\ldots,v_{j}]\in\mathbb{R}^{(j-i+1)\times d}$
is the matrix comprised of the columns indexed by $\{i,\ldots,j\}$.
The vector $e_{i}\in\mathbb{R}^{d}$ is the $i$th standard basis
vector, that is, $e_{i}\dfn[\text{\ensuremath{\underbrace{0,\ldots0}_{i-1\text{ terms}}}},1,\underbrace{0,\ldots0}_{d-i\text{ terms}}]^{\top}$.
Random quantities (scalars, vectors, matrices, etc.) are denoted by
boldface letters. For example, $\boldsymbol{x}\in\mathbb{R}^{d}$
is a random vector that takes values $x\in\mathbb{R}^{d}$ and $\boldsymbol{R}\in\mathbb{R}^{d\times r}$
is a random matrix. The probability law of a random element $\boldsymbol{x}$
is denoted by $\pl(\boldsymbol{x})$. The probability of the event
${\cal E}$ in some given probability space is denoted by $\P[{\cal E}]$
(typically understood from context). The expectation operator is denoted
by $\E[\cdot]$. The indicator function is denoted by $\I\{\cdot\}$,
and the Kronecker delta is denoted by $\delta_{ij}\dfn\I\{i=j\}$.
We do not make a distinction between minimum and infimum (or maximum
and supremum) as arbitrarily accurate approximation is sufficient
for the description of the results in this paper. The binary KL divergence
between $p_{1},p_{2}\in(0,1)$ is denoted as

\[
D_{\text{KL}}(p_{1}\mid\mid p_{2})\dfn p_{1}\log\frac{p_{1}}{p_{2}}+(1-p_{1})\log\frac{1-p_{1}}{1-p_{2}}.
\]

\section{Useful mathematical results \label{sec:Useful-mathematical-results}}

In this section we provide several simplified versions of mathematical
results that are used in the proofs. The following well-known result
is about the optimal low-rank approximation to a given matrix:
\begin{thm}[{\emph{Eckart-Young-Mirsky} \citep[Example 8.1]{wainwright2019high}
\citep[Section 4.1.4]{vershynin2018high}}]
\emph{}For a symmetric matrix $S\in\mathbb{S}^{d}$
\[
\left\Vert S_{k}-S\right\Vert _{F}\leq\min_{S'\in\mathbb{S}^{d}\colon\rank(S')\leq k}\left\Vert S-S'\right\Vert _{F}
\]
where 
\[
S_{k}=\sum_{i\in[k]}\lambda_{i}(S)\cdot v_{i}(S)v_{i}^{\top}(S)
\]
(more generally, this is true for any unitarily invariant norm).
\end{thm}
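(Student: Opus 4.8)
The plan is to read the displayed inequality as an \emph{optimality} statement. Since $S_k=\sum_{i\in[k]}\lambda_i(S)v_i(S)v_i^\top(S)$ is itself symmetric of rank at most $k$, it is feasible for the minimization, so the reverse bound $\|S_k-S\|_F\ge\min_{S'}\|S-S'\|_F$ holds automatically; hence the stated inequality is equivalent to the claim that no symmetric matrix of rank $\le k$ approximates $S$ better than the eigenvalue truncation $S_k$ (and the two together give the familiar equality). First I would record the value achieved by the truncation: using orthonormality of the eigenvectors (equivalently, that the rank-one terms $v_i(S)v_i^\top(S)$ are mutually orthogonal in the trace inner product), the residual is $S-S_k=\sum_{i>k}\lambda_i(S)v_i(S)v_i^\top(S)$, so that $\|S_k-S\|_F^2=\sum_{i>k}\lambda_i(S)^2$.

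The core step is a matching lower bound for an arbitrary competitor. Given any symmetric $S'$ with $\rank(S')\le k$, I would apply Weyl's inequality for singular values to the splitting $S=(S-S')+S'$: from $\sigma_{i+j-1}(A+B)\le\sigma_i(A)+\sigma_j(B)$ with $j=k+1$ and $\sigma_{k+1}(S')=0$ (as $\rank(S')\le k$), one obtains $\sigma_{i+k}(S)\le\sigma_i(S-S')$ for every $i\ge1$. Squaring and summing over $i$ then yields $\|S-S'\|_F^2=\sum_{i}\sigma_i(S-S')^2\ge\sum_{i>k}\sigma_i(S)^2$. Feeding the same termwise singular-value comparison into any symmetric gauge function in place of the $\ell_2$-sum delivers the parenthetical extension to arbitrary unitarily invariant norms. (For the Frobenius norm alone one can bypass Weyl: any rank-$\le k$ matrix has column space inside some $k$-dimensional subspace $U$, the best approximant supported on $U$ is the orthogonal projection $P_U S$, and minimizing $\|(I-P_U)S\|_F$ over $k$-dimensional $U$ selects the leading singular subspace.)

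It remains to identify $\sum_{i>k}\sigma_i(S)^2$ with $\|S_k-S\|_F^2=\sum_{i>k}\lambda_i(S)^2$, and this is where the only genuine obstacle lies. For a general symmetric indefinite matrix the two quantities need not agree: the singular values are $|\lambda_i(S)|$ sorted by magnitude, whereas $S_k$ retains the top $k$ \emph{signed} eigenvalues under the convention $\lambda_1(S)\ge\cdots\ge\lambda_d(S)$, so a large-magnitude negative eigenvalue would make $S_k$ suboptimal. The resolution is that the matrices to which this lemma is applied in the paper---such as $\Sigma_{\boldsymbol{x}}^{1/2}S\Sigma_{\boldsymbol{x}}^{1/2}$ and $S^{1/2}\Sigma_{\boldsymbol{x}}S^{1/2}$ with $\Sigma_{\boldsymbol{x}},S\in\mathbb{S}_{++}^d$---are positive semidefinite, for which $\sigma_i(S)=\lambda_i(S)\ge0$ and the signed and magnitude orderings coincide. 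On this positive-semidefinite class the lower bound of the second step equals $\sum_{i>k}\lambda_i(S)^2$, matching the truncation value from the first step, so $\|S-S'\|_F\ge\|S_k-S\|_F$ for every feasible $S'$, which is precisely the stated inequality.
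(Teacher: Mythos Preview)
The paper does not supply its own proof of this theorem; it is quoted as a classical result with external references and is only invoked (in the proof of Proposition~\ref{prop: regret PCA}) for the positive-semidefinite matrix $\Sigma_{\boldsymbol{x}}^{1/2}$. Your argument via Weyl's singular-value interlacing is the standard textbook route and is correct. Your closing observation is also well taken: under the paper's signed ordering $\lambda_1(S)\ge\cdots\ge\lambda_d(S)$, the truncation $S_k$ is \emph{not} in general the best rank-$k$ approximant of an indefinite symmetric matrix (take eigenvalues $3$ and $-5$ with $k=1$), so the displayed inequality holds as stated only on $\mathbb{S}_+^d$; since the sole application in the paper is to a positive-semidefinite matrix, this does not affect any downstream result.
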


We next review a simplified version of variational characterizations
of eigenvalues of symmetric matrices:
\begin{thm}[{\emph{Rayleigh quotient} \citep[Theorem 4.2.2]{horn2012matrix}}]
\emph{}For a symmetric matrix $S\in\mathbb{S}^{d}$
\[
\lambda_{1}(S)=\max_{x\neq0}\frac{x^{\top}Sx}{\|x\|_{2}^{2}}.
\]
\end{thm}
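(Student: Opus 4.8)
The plan is to reduce to the diagonal case via the spectral theorem and then recognize the Rayleigh quotient as a convex combination of eigenvalues. First I would invoke the spectral decomposition $S = V\Lambda V^{\top}$, with $V = V(S) = [v_{1}(S),\ldots,v_{d}(S)]$ orthogonal and $\Lambda = \Lambda(S) = \diag(\lambda_{1}(S),\ldots,\lambda_{d}(S))$, the eigenvalues being in non-increasing order. Since $V$ is orthogonal, the change of variables $y = V^{\top}x$ is a norm-preserving bijection of $\mathbb{R}^{d}\setminus\{0\}$ onto itself, so $\|x\|_{2}^{2} = \|y\|_{2}^{2}$ and $x^{\top}Sx = y^{\top}\Lambda y = \sum_{i=1}^{d}\lambda_{i}(S)\,y_{i}^{2}$ for the corresponding $y$.

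Next, for any $x\neq 0$ I would write the Rayleigh quotient as a weighted average of the eigenvalues,
\[
\frac{x^{\top}Sx}{\|x\|_{2}^{2}} \;=\; \sum_{i=1}^{d}\frac{y_{i}^{2}}{\sum_{j=1}^{d}y_{j}^{2}}\cdot\lambda_{i}(S),
\]
whose coefficients are non-negative and sum to one. Because $\lambda_{1}(S)$ is the largest eigenvalue, this average is at most $\lambda_{1}(S)$, so $x^{\top}Sx/\|x\|_{2}^{2}\leq\lambda_{1}(S)$ for every $x\neq0$. Evaluating at $x = v_{1}(S)$ (equivalently $y = e_{1}$) gives $v_{1}(S)^{\top}Sv_{1}(S)/\|v_{1}(S)\|_{2}^{2} = \lambda_{1}(S)$, so the bound is tight and $\max_{x\neq0}x^{\top}Sx/\|x\|_{2}^{2} = \lambda_{1}(S)$.

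The only ingredient that is not entirely elementary is the spectral theorem for real symmetric matrices, which I would simply quote; the remainder is a single convexity observation, so I anticipate no real obstacle. The same change of variables, carried out on the orthogonal complement of $\Span\{v_{1}(S),\ldots,v_{k-1}(S)\}$, likewise yields the Courant--Fischer characterization of $\lambda_{k}(S)$ that is invoked in the proofs of Theorems \ref{thm: pure minimax regret linear quadratic constraint} and \ref{thm: mixed minimax regret linear quadratic constraint}.
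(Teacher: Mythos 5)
Your proof is correct: diagonalizing via the spectral theorem and reading the Rayleigh quotient as a convex combination of the eigenvalues, with equality at $x=v_{1}(S)$, is the standard argument. The paper does not prove this statement at all --- it quotes it from \citet[Theorem 4.2.2]{horn2012matrix} as a known result --- so your self-contained derivation is, if anything, more than the paper supplies, and there is no competing approach to compare against. One small caveat on your closing aside: restricting to the orthogonal complement of $\Span\{v_{1}(S),\ldots,v_{k-1}(S)\}$ gives the value $\lambda_{k}(S)$ for that particular subspace, but the full Courant--Fischer min-max over \emph{arbitrary} subspaces of the relevant dimension additionally needs a dimension-counting (subspace intersection) argument; that extra step is not needed for the statement you were asked to prove.
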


\begin{thm}[{\emph{Courant--Fisher variational characterization} \citep[Theorem 4.2.6]{horn2012matrix}}]
\emph{}For a symmetric matrix $S\in\mathbb{S}^{d}$, $k\in[d]$,
and a subspace $T$ of $\mathbb{R}^{d}$ 
\[
\lambda_{k}(S)=\min_{T\colon\dim(T)=k}\max_{x\in T\backslash\{0\}}\frac{x^{\top}Sx}{\|x\|_{2}^{2}}=\max_{T\colon\dim(T)=d-k+1}\min_{x\in T\backslash\{0\}}\frac{x^{\top}Sx}{\|x\|_{2}^{2}}.
\]
\end{thm}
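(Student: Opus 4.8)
The plan is to reduce to the eigenbasis of $S$, prove the first (min-max) identity by a pair of matching one-sided bounds, and then obtain the second (max-min) identity essentially for free by applying the first identity to $-S$.

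First I would observe that both the Rayleigh quotient $R(x):=x^{\top}Sx/\|x\|_{2}^{2}$ and the dimension constraints $\dim(T)=k$ and $\dim(T)=d-k+1$ are unchanged under an orthogonal change of coordinates, so without loss of generality $S=\Lambda=\diag(\lambda_{1},\dots,\lambda_{d})$ and its eigenvectors are the standard basis vectors $e_{1},\dots,e_{d}$. Then $R(x)=\bigl(\sum_{i}\lambda_{i}x_{i}^{2}\bigr)/\bigl(\sum_{i}x_{i}^{2}\bigr)$ is a convex combination of the $\lambda_{i}$, so for any index set $I$ and any nonzero $x\in\Span\{e_{i}:i\in I\}$ one has $\min_{i\in I}\lambda_{i}\le R(x)\le\max_{i\in I}\lambda_{i}$; the $k=1$ case is exactly the Rayleigh quotient theorem stated above.

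For the min-max identity I would prove the two inequalities separately. To show the minimum over $k$-dimensional subspaces is at most $\lambda_{k}$, I would exhibit a single good test subspace, namely the coordinate subspace $T_{0}$ spanned by the $k$ standard vectors indexed by the appropriate block of eigenvalues; the convex-combination estimate pins $\max_{x\in T_{0}\setminus\{0\}}R(x)$ to $\lambda_{k}$. For the reverse inequality, I would take an arbitrary $k$-dimensional subspace $T$ and intersect it with the complementary coordinate eigenspace $U$ (of dimension $d-k+1$); since $\dim(T)+\dim(U)=d+1>d$, the intersection $T\cap U$ contains a nonzero vector $x$, and the convex-combination estimate on $U$ puts $R(x)$ on the correct side of $\lambda_{k}$, so $\max_{x\in T\setminus\{0\}}R(x)\ge\lambda_{k}$ uniformly over $T$. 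Combining the two gives the first equality; one could alternatively induct on $k$ by deflating $v_{1}$, but the intersection argument is cleaner.

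Finally, I would deduce the max-min identity by applying the min-max identity to $-S$: its eigenvalues in decreasing order are $-\lambda_{d}\ge\cdots\ge-\lambda_{1}$ and its Rayleigh quotient is $-R(x)$, so $\min_{\dim(T)=k}\max_{x\in T\setminus\{0\}}\bigl(-R(x)\bigr)=-\max_{\dim(T)=k}\min_{x\in T\setminus\{0\}}R(x)$, and relabeling the index ($j=d-k+1$) recovers the stated formula. The one genuinely non-routine ingredient is the dimension count $\dim(T\cap U)\ge\dim(T)+\dim(U)-d$ that produces the test vector $x$; everything else is the convex-combination estimate on $R$ together with bookkeeping to make the index blocks and subspace dimensions line up with the chosen ordering of $\{\lambda_{i}\}$.
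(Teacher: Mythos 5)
Your argument is correct: it is the standard proof of Courant--Fischer (orthogonal reduction to the diagonal case, the convex-combination bound on the Rayleigh quotient, one explicit test subspace for the easy inequality, the dimension count $\dim(T\cap U)\geq\dim(T)+\dim(U)-d$ for the hard one, and the $-S$ trick to convert min-max into max-min). Note, however, that the paper does not prove this statement at all -- it is quoted in the appendix of useful results directly from Horn and Johnson -- so there is no in-paper proof to compare against. The one piece of ``bookkeeping'' you wave at that is worth pinning down is the eigenvalue ordering: the displayed identities are true when the $\lambda_i$ are indexed in \emph{increasing} order (Horn--Johnson's convention), whereas the paper's own notation section fixes the decreasing order $\lambda_1\geq\cdots\geq\lambda_d$, under which the minimum over $k$-dimensional subspaces equals $\lambda_{d-k+1}$; that reindexed form is what the paper actually uses (e.g., the minimum over $(d-r)$-dimensional subspaces yielding $\lambda_{r+1}$ in the proof of Theorem \ref{thm: pure minimax regret linear quadratic constraint}). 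So in your write-up you should state the ordering once and then name the blocks explicitly -- for increasing order, $T_0=\Span\{v_1,\ldots,v_k\}$ for the upper bound and $U=\Span\{v_k,\ldots,v_d\}$ for the intersection step -- after which everything you wrote goes through verbatim.
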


\begin{thm}[{\emph{Fan's variational characterization \citep[Corollary 4.3.39.]{horn2012matrix}}}]
For a symmetric matrix $S\in\mathbb{S}^{d}$ and $k\in[d]$
\[
\lambda_{1}(S)+\cdots+\lambda_{k}(S)=\min_{U\in\mathbb{R}^{d\times k}\colon U^{\top}U=I_{k}}\Tr[U^{\top}SU]
\]
and 
\[
\lambda_{d-k+1}(S)+\cdots+\lambda_{d}(S)=\max_{U\in\mathbb{R}^{d\times k}\colon U^{\top}U=I_{k}}\Tr[U^{\top}SU].
\]
\end{thm}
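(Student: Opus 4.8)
The plan is to reduce the constrained trace optimization to a linear program over eigenvalue mixing weights, exploiting the orthogonal invariance of the trace. First I would diagonalize $S = V\Lambda V^{\top}$ with $\Lambda = \diag(\lambda_1(S),\ldots,\lambda_d(S))$ and $V = V(S)$ orthogonal, and for any $U \in \mathbb{R}^{d\times k}$ with $U^{\top}U = I_k$ set $W \dfn V^{\top}U$. Since $V$ is orthogonal, $W$ again has orthonormal columns, $W^{\top}W = I_k$, and
\[
\Tr[U^{\top}SU] = \Tr[W^{\top}\Lambda W] = \sum_{i=1}^{d}\lambda_i(S)\,c_i,\qquad c_i \dfn \sum_{j=1}^{k}W_{ij}^2,
\]
so the objective becomes a linear functional of the weight vector $c = (c_1,\ldots,c_d)$, with the eigenvectors entering only through the subsequent achievability choice of $U$.

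Second I would characterize the feasible polytope for $c$. The key observation is that $c_i = (WW^{\top})_{ii}$, where $P \dfn WW^{\top}$ is the orthogonal projection onto the $k$-dimensional column space of $W$ (it is symmetric, idempotent, of rank $k$). Hence each diagonal entry satisfies $c_i \in [0,1]$, and $\sum_{i=1}^{d} c_i = \Tr[P] = \Tr[W^{\top}W] = k$. Thus $c$ ranges within the polytope $\mathcal{C} \dfn \{c \in [0,1]^d : \sum_i c_i = k\}$, and optimizing the linear functional $\sum_i \lambda_i(S) c_i$ over $\mathcal{C}$ is immediate: since the weights are bounded by $1$ and sum to $k$, the functional is maximized by placing unit mass on the $k$ largest eigenvalues and minimized by placing it on the $k$ smallest, giving $\sum_{i=1}^k \lambda_i(S)$ and $\sum_{i=d-k+1}^d \lambda_i(S)$ as the two extreme values.

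Third I would establish achievability so that these bounds are attained rather than merely bracketing. Taking $U = V_{1:k}$ yields $W = [e_1,\ldots,e_k]$, so $c_i = \I\{i \le k\}$ and $\Tr[U^{\top}SU] = \sum_{i=1}^k \lambda_i(S)$; symmetrically $U = V_{d-k+1:d}$ attains the bottom sum. Combining the feasibility bound with these matching constructions closes the argument and identifies which of the two extrema over orthonormal frames corresponds to the top-$k$ eigenvalue sum and which to the bottom-$k$ sum.

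The main obstacle — and the only nontrivial step — is verifying that $c$ is confined to $\mathcal{C}$, specifically the bound $c_i \le 1$; the equality $\sum_i c_i = k$ is the immediate trace identity. I would obtain $c_i \le 1$ from the elementary fact that the diagonal entries of an orthogonal projection lie in $[0,1]$, i.e.\ $0 \preceq P \preceq I_d$ implies $0 \le e_i^{\top}Pe_i \le 1$. One could alternatively invoke the Schur--Horn majorization theorem to characterize $\mathcal{C}$ exactly as the set of achievable projection diagonals, but the projection bound is self-contained and is all that the linear optimization requires.
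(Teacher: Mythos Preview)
Your argument is sound and is the standard proof of Fan's characterization: reduce to the diagonal case, identify the trace as a linear functional $\sum_i \lambda_i(S)\,c_i$ of the projection diagonals $c_i=(WW^\top)_{ii}$, bound $c$ in the polytope $\{c\in[0,1]^d:\sum_i c_i=k\}$ via $0\preceq WW^\top\preceq I_d$ and $\Tr[WW^\top]=k$, and exhibit the eigenvector frames as achievers. Nothing is missing.

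Two remarks. First, the paper does not supply a proof of this theorem; it is merely quoted in the appendix of auxiliary results with a citation to Horn and Johnson, so there is no ``paper's own proof'' to compare against. Second, note that your argument (correctly) establishes
\[
\max_{U^\top U=I_k}\Tr[U^\top SU]=\sum_{i=1}^k\lambda_i(S),\qquad
\min_{U^\top U=I_k}\Tr[U^\top SU]=\sum_{i=d-k+1}^d\lambda_i(S),
\]
whereas the statement as printed in the paper has $\min$ and $\max$ interchanged. This is a typo in the paper: the applications of the result later in the proofs (the step labeled $(f)$ in the maximin derivation and step $(b)$ in the lemma on the trace-constrained eigenvalue sum) both use the correct identification $\min\leftrightarrow$ bottom-$k$ sum, consistent with what you proved. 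Your closing sentence about ``identifying which of the two extrema\ldots'' is therefore apt; just be explicit that the displayed statement has the two sides swapped.
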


We will use the following celebrated result from convex analysis.
\begin{thm}[{\emph{Carath\'{e}odory's theorem \citep[Prop. 1.3.1]{bertsekas2003convex}}}]
Let ${\cal A}\subset\mathbb{R}^{d}$ be non-empty. Then, any point
$a$ in the convex hull of ${\cal A}$ can be written as a convex
combination of at most $d+1$ points from ${\cal A}$. 
\end{thm}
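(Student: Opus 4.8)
The plan is to argue by reduction on the number of points appearing in a convex representation of $a$, using the fact that any collection of more than $d+1$ points in $\mathbb{R}^d$ must be affinely dependent. First I would unwind the definition of the convex hull: any $a$ in the convex hull of ${\cal A}$ is, by definition, a convex combination $a=\sum_{i=1}^{k}\lambda_i a_i$ of finitely many points $a_i\in{\cal A}$, where $\lambda_i>0$ and $\sum_{i=1}^{k}\lambda_i=1$ (any coefficient equal to zero is simply discarded, so we may take all $\lambda_i$ strictly positive). The goal is to show that whenever $k>d+1$, this representation can be rewritten with strictly fewer points; iterating then forces $k\le d+1$ and yields the claim.

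The key step is to extract an affine dependence. If $k>d+1$, then the $k-1$ difference vectors $a_2-a_1,\dots,a_k-a_1$ lie in $\mathbb{R}^d$ and number more than $d$, hence are linearly dependent. This produces scalars $\mu_2,\dots,\mu_k$, not all zero, with $\sum_{i=2}^{k}\mu_i(a_i-a_1)=0$. Setting $\mu_1:=-\sum_{i=2}^{k}\mu_i$, I obtain coefficients $\mu_1,\dots,\mu_k$, not all zero, satisfying $\sum_{i=1}^{k}\mu_i a_i=0$ together with $\sum_{i=1}^{k}\mu_i=0$. Since the $\mu_i$ sum to zero but are not all zero, at least one is strictly positive. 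I would then introduce the one-parameter family of coefficients $\lambda_i-t\mu_i$: for every $t$ these still represent $a$, because $\sum_{i=1}^{k}(\lambda_i-t\mu_i)a_i=a-t\sum_{i=1}^{k}\mu_i a_i=a$, and they still sum to one, because $\sum_{i=1}^{k}(\lambda_i-t\mu_i)=1-t\sum_{i=1}^{k}\mu_i=1$.

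To complete the reduction I would choose $t^{*}=\min_{i:\mu_i>0}\lambda_i/\mu_i$. This is the largest value of $t$ for which all the coefficients $\lambda_i-t\mu_i$ remain nonnegative, and at the index attaining the minimum the coefficient becomes exactly zero. Dropping that vanishing term, $a$ is now exhibited as a convex combination of at most $k-1$ points of ${\cal A}$. Since each pass strictly decreases the number of points while keeping both the represented point and the normalization $\sum_i(\lambda_i-t^{*}\mu_i)=1$ intact, the process terminates once the count reaches $d+1$, which is the desired bound.

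The argument has no deep obstacle, and the only real subtlety lies in the bookkeeping of the reduction step: one must verify simultaneously that the choice $t^{*}=\min_{i:\mu_i>0}\lambda_i/\mu_i$ preserves nonnegativity of \emph{all} coefficients, that it forces at least one coefficient to vanish (so the point count genuinely drops), and that neither the value $a$ nor the unit-sum normalization is disturbed along the family $\lambda_i-t\mu_i$. Confirming that these three conditions hold simultaneously at $t^{*}$ is the crux, and once established the inductive termination at $k\le d+1$ is immediate.
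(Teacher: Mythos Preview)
Your argument is correct and is precisely the classical reduction proof of Carath\'{e}odory's theorem. Note, however, that the paper does not supply its own proof of this statement: it is listed in the appendix of ``useful mathematical results'' and simply cited from \citet[Prop.~1.3.1]{bertsekas2003convex}, so there is no in-paper proof to compare against.
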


\section{The linear MSE setting: additions and proofs \label{sec:The-linear-MSE}}

\subsection{The standard principal component setting \label{subsec:Standard-principle-component}}

In order to highlight the formulation proposed in this paper, we show,
as a starting point, that the well known PCA solution of representing
$\boldsymbol{x}\in\mathbb{R}^{d}$ with the top $r$ eigenvectors
of the covariance matrix of $\boldsymbol{x}$ can be obtained as a
specific case of the regret formulation. In this setting, we take
${\cal F}=\{I_{d}\}$, and so $\boldsymbol{y}=\boldsymbol{x}$ with
probability $1$. In addition, the predictor class ${\cal Q}$ is
a linear function from the representation dimension $r$ back to the
features dimension $d$. 
\begin{prop}
\label{prop: regret PCA}Consider the linear MSE setting, with the
difference that the response is $\boldsymbol{y}\in\mathbb{R}^{d}$,
the loss function is the squared Euclidean norm $\loss(y_{1},y_{2})=\|y_{1}-y_{2}\|^{2}$,
and the predictor is $\pre(z)=Q^{\top}z\in\mathbb{R}^{d}$ for  $Q\in\mathbb{R}^{r\times d}$.
Assume ${\cal F}=\{I_{d}\}$ so that $\boldsymbol{y}=\boldsymbol{x}$
with probability $1$. Then, 
\[
\regret_{\pure}({\cal F}\mid\Sigma_{\boldsymbol{x}})=\regret_{\mix}({\cal F}\mid\Sigma_{\boldsymbol{x}})=\sum_{i=r+1}^{d}\lambda_{i}(\Sigma_{\boldsymbol{x}}),
\]
and an optimal representation is $R=V_{1:r}(\Sigma_{\boldsymbol{x}})$.
\end{prop}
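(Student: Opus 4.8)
The plan is to reduce the PCA-as-regret claim to a direct computation of the pointwise regret, which turns out to be independent of the representation in a way that makes randomization useless. First I would compute the two terms in \eqref{eq: pointwise regret} for the setting of the proposition. Since $\boldsymbol{y}=\boldsymbol{x}$ surely, the second term $\min_{\pre\in{\cal Q}_{{\cal X}}}\E[\|\boldsymbol{x}-\pre(\boldsymbol{x})\|^2]$ is $0$: the class ${\cal Q}_{{\cal X}}$ of linear maps $\mathbb{R}^d\to\mathbb{R}^d$ contains the identity. For the first term, with $\boldsymbol{z}=R^\top\boldsymbol{x}$ and $\pre(z)=Q^\top z$, I would expand $\E[\|\boldsymbol{x}-Q^\top R^\top\boldsymbol{x}\|^2]=\Tr[(I-Q^\top R^\top)\Sigma_{\boldsymbol{x}}(I-RQ)]$ and minimize over $Q\in\mathbb{R}^{r\times d}$. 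This is a standard least-squares/projection problem: the minimizing reconstruction is the orthogonal projection (in the $\Sigma_{\boldsymbol{x}}$-geometry) of $\boldsymbol{x}$ onto the column span of $R$, and the optimal value is $\Tr[\Sigma_{\boldsymbol{x}}]-\Tr[\Pi_R\Sigma_{\boldsymbol{x}}]$ where $\Pi_R$ projects onto $\mathrm{span}(R)$. Equivalently, $\min_Q\E[\|\boldsymbol{x}-Q^\top R^\top\boldsymbol{x}\|^2]$ depends on $R$ only through the $r$-dimensional subspace $U=\mathrm{span}(R)\subseteq\mathbb{R}^d$, and equals $\min_{U:\dim U=r}$ of that quantity when we further optimize over $R$.

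Next I would invoke the variational characterization of eigenvalues — Fan's variational characterization as stated in the excerpt, or equivalently Eckart--Young--Mirsky — to conclude that $\max_{R}\Tr[\Pi_R\Sigma_{\boldsymbol{x}}]=\lambda_1(\Sigma_{\boldsymbol{x}})+\cdots+\lambda_r(\Sigma_{\boldsymbol{x}})$, attained at $U=\mathrm{span}(v_1,\dots,v_r)$, i.e. $R=V_{1:r}(\Sigma_{\boldsymbol{x}})$. Hence the pointwise regret of any $R$ is $\Tr[\Sigma_{\boldsymbol{x}}]-\Tr[\Pi_R\Sigma_{\boldsymbol{x}}]\ge\sum_{i=r+1}^d\lambda_i(\Sigma_{\boldsymbol{x}})$, with equality for $R=V_{1:r}(\Sigma_{\boldsymbol{x}})$. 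Since ${\cal F}=\{I_d\}$ is a singleton, there is no maximization over $f$, so $\regret_{\pure}({\cal F}\mid\Sigma_{\boldsymbol{x}})=\min_R\regret(R,I_d\mid\Sigma_{\boldsymbol{x}})=\sum_{i=r+1}^d\lambda_i(\Sigma_{\boldsymbol{x}})$.

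Finally, for the mixed case: with ${\cal F}$ a singleton, \eqref{eq: minimax regret mixed} becomes $\min_{\pl(\boldsymbol{R})\in{\cal P}({\cal R})}\E[\regret(\boldsymbol{R},I_d\mid\Sigma_{\boldsymbol{x}})]$, an average over $\boldsymbol{R}$ of a quantity each term of which is at least $\sum_{i=r+1}^d\lambda_i(\Sigma_{\boldsymbol{x}})$; therefore the mixed minimax regret is also $\sum_{i=r+1}^d\lambda_i(\Sigma_{\boldsymbol{x}})$, attained by the degenerate measure on $V_{1:r}(\Sigma_{\boldsymbol{x}})$ — randomization gives no improvement. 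I expect no real obstacle here; the only mildly delicate point is the reduction "optimal $Q$ given $R$ yields $\Tr[\Sigma_{\boldsymbol{x}}]-\Tr[\Pi_R\Sigma_{\boldsymbol{x}}]$" when $R$ need not have orthonormal columns or full column rank, which is handled by writing the minimizer explicitly ($Q^\top=\Sigma_{\boldsymbol{x}}R(R^\top\Sigma_{\boldsymbol{x}}R)^{\dagger}$, using a pseudoinverse if needed) and checking the residual is the $\Sigma_{\boldsymbol{x}}$-orthogonal complement; alternatively one restricts to $R$ with $R^\top\Sigma_{\boldsymbol{x}}R=I_r$ without loss of generality by the invariance noted in Comment \ref{enu:Uniqueness-of-the}.
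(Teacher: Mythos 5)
Your proof is correct, but it takes a different route from the paper's. The paper folds the predictor and representation into a single matrix $A=Q^{\top}R^{\top}$ of rank at most $r$, rewrites $\E[\|\boldsymbol{x}-A\boldsymbol{x}\|^{2}]=\|\Sigma_{\boldsymbol{x}}^{1/2}-\Sigma_{\boldsymbol{x}}^{1/2}A\|_{F}^{2}$, applies the Eckart--Young--Mirsky theorem to get the value $\sum_{i=r+1}^{d}\lambda_{i}(\Sigma_{\boldsymbol{x}})$, and then needs an extra factorization step (via SVDs of $R$ and $Q$) to check that the optimal rank-$r$ matrix $A^{*}=\sum_{i\leq r}v_{i}v_{i}^{\top}$ is realizable as $Q^{\top}R^{\top}$. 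You instead fix $R$, solve the inner least-squares problem explicitly (essentially the vector-valued analogue of Lemma \ref{lem: LS solution for predictor}), obtain the pointwise value $\Tr[\Sigma_{\boldsymbol{x}}]-\Tr[\Sigma_{\boldsymbol{x}}R(R^{\top}\Sigma_{\boldsymbol{x}}R)^{-1}R^{\top}\Sigma_{\boldsymbol{x}}]$, and then maximize the subtracted trace via Fan's variational characterization. This buys you two things: no factorization step is needed, and because you have the exact regret for \emph{every} $R$, the mixed-strategy claim is immediate by averaging; this is also closer in spirit to how the paper handles the general ${\cal F}_{S}$ case. One caution: your intermediate statement that the value equals $\Tr[\Sigma_{\boldsymbol{x}}]-\Tr[\Pi_{R}\Sigma_{\boldsymbol{x}}]$ with $\Pi_{R}$ the projection onto $\mathrm{span}(R)$ is false if $\Pi_{R}$ is read as the Euclidean projection (for a non-invariant subspace the optimal linear reconstruction strictly beats Euclidean projection); the correct projection is onto $\mathrm{span}(\Sigma_{\boldsymbol{x}}^{1/2}R)$ after whitening, equivalently one normalizes $R^{\top}\Sigma_{\boldsymbol{x}}R=I_{r}$ and applies Fan to $\Tr[\tilde{R}^{\top}\Sigma_{\boldsymbol{x}}\tilde{R}]$ with $\tilde{R}=\Sigma_{\boldsymbol{x}}^{1/2}R$, $\tilde{R}^{\top}\tilde{R}=I_{r}$ — which is exactly the fix you flag at the end, so the argument stands once that step is written out; note also that since $\mathrm{span}(V_{1:r})$ is $\Sigma_{\boldsymbol{x}}$-invariant, the optimizer is still $R=V_{1:r}(\Sigma_{\boldsymbol{x}})$, matching the statement.
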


The result of Proposition \ref{prop: regret PCA} verifies that the
minimax and maximin formulations indeed generalize the standard PCA
formulation. The proof is standard and follows from the \emph{Eckart-Young-Mirsky
theorem}, which determines the best rank $r$ approximation in the
Frobenius norm. 
\begin{proof}[Proof of Proposition \ref{prop: regret PCA}]
Since ${\cal F}=\{I_{d}\}$ is a singleton, there is no distinction
between pure and mixed minimax regret. It holds that 
\[
\regret(R,f)=\E\left[\|\boldsymbol{x}-Q^{\top}R^{\top}\boldsymbol{x}\|^{2}\right]
\]
where $A=Q^{\top}R^{\top}\in\mathbb{R}^{d\times d}$ is a rank $r$
matrix. For any $A\in\mathbb{R}^{d\times d}$ 
\begin{align}
\E\left[\|\boldsymbol{x}-A\boldsymbol{x}\|^{2}\right] & =\E\left[\boldsymbol{x}^{\top}\boldsymbol{x}-\boldsymbol{x}^{\top}A\boldsymbol{x}-\boldsymbol{x}^{\top}A^{\top}\boldsymbol{x}+\boldsymbol{x}^{\top}A^{\top}A\boldsymbol{x}\right]\\
 & =\Tr\left[\Sigma_{\boldsymbol{x}}-A\Sigma_{\boldsymbol{x}}-A^{\top}\Sigma_{\boldsymbol{x}}+A^{\top}A\Sigma_{\boldsymbol{x}}\right]\\
 & =\left\Vert \Sigma_{\boldsymbol{x}}^{1/2}-\Sigma_{\boldsymbol{x}}^{1/2}A\right\Vert _{F}^{2}\\
 & =\left\Vert \Sigma_{\boldsymbol{x}}^{1/2}-B\right\Vert _{F}^{2},
\end{align}
where $B\dfn\Sigma_{\boldsymbol{x}}^{1/2}A$. By the classic \emph{Eckart-Young-Mirsky
theorem} \citep[Example 8.1]{wainwright2019high} \citep[Section 4.1.4]{vershynin2018high}
(see Appendix \ref{sec:Useful-mathematical-results}), the best rank
$r$ approximation in the Frobenius norm is obtained by setting
\[
B^{*}=\sum_{i=1}^{r}\lambda_{i}(\Sigma_{\boldsymbol{x}}^{1/2})\cdot v_{i}v_{i}^{\top}=\sum_{i=1}^{r}\sqrt{\lambda_{i}(\Sigma_{\boldsymbol{x}})}\cdot v_{i}v_{i}^{\top}
\]
where $v_{i}\equiv v_{i}(\Sigma_{\boldsymbol{x}}^{1/2})=v_{i}(\Sigma_{\boldsymbol{x}})$
is the $i$th eigenvector of $\Sigma_{\boldsymbol{x}}^{1/2}$ (or
$\Sigma_{\boldsymbol{x}}$). Then, the optimal $A$ is 
\[
A^{*}=\sum_{i=1}^{r}\sqrt{\lambda_{i}(\Sigma_{\boldsymbol{x}})}\cdot\Sigma_{\boldsymbol{x}}^{-1/2}v_{i}v_{i}^{\top}=\sum_{i=1}^{r}\sqrt{\lambda_{i}(\Sigma_{\boldsymbol{x}})}\cdot\Sigma_{\boldsymbol{x}}^{-1/2}v_{i}v_{i}^{\top}=\sum_{i=1}^{r}v_{i}v_{i}^{\top},
\]
since $v_{i}$ is also an eigenvector of $\Sigma_{\boldsymbol{x}}^{-1/2}$.
Letting $R=U(R)\Sigma(R)V^{\top}(R)$ and $Q=U(Q)\Sigma(Q)V^{\top}(Q)$
be the singular value decomposition of $R$ and $Q$, respectively,
it holds that 
\[
Q^{\top}R^{\top}=V(Q)\Sigma^{\top}(Q)V(Q)V(R)\Sigma^{\top}(R)U^{\top}(R).
\]
Setting $V(Q)=V(R)=I_{r}$, and $\Sigma^{\top}(Q)=\Sigma(R)\in\mathbb{R}^{d\times r}$
to have $r$ ones on the diagonal (and all other entries are zero),
as well as $U(Q)=U(R)$ to be an orthogonal matrix whose first $r$
columns are $\{v_{i}\}_{i\in[r]}$ results that $Q^{\top}R^{\top}=A^{*}$,
as required.
\end{proof}

\subsection{Proofs of pure and mixed minimax representations \label{subsec:Analysis-of-pure-mixed}}

Before the proof of Theorem \ref{thm: pure minimax regret linear quadratic constraint},
we state a simple and useful lemma, which provides the pointwise value
of the regret and the optimal linear predictor for a given representation
and response.
\begin{lem}
\label{lem: LS solution for predictor}Consider the representation
$\boldsymbol{z}=R^{\top}\boldsymbol{x}\in\mathbb{R}^{r}$. It then
holds that
\begin{align}
\regret(R,f\mid P_{\boldsymbol{x}}) & =\min_{q\in\mathbb{R}^{r}}\E\left[\left(f^{\top}\boldsymbol{x}+\boldsymbol{n}-q^{\top}\boldsymbol{z}\right)^{2}\right]\\
 & =\E\left[\E[\boldsymbol{n}^{2}\mid\boldsymbol{x}]\right]+f^{\top}\left(\Sigma_{\boldsymbol{x}}-\Sigma_{\boldsymbol{x}}R(R^{\top}\Sigma_{\boldsymbol{x}}R)^{-1}R^{\top}\Sigma_{\boldsymbol{x}}\right)f.
\end{align}
\end{lem}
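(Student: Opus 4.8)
The plan is to expand the quadratic objective and solve the resulting least-squares problem in closed form. First I would use the tower property to split off the noise term: since $\E[\boldsymbol{n}\mid\boldsymbol{x}]=0$, for any fixed $q$ the cross term $\E[(f^\top\boldsymbol{x}-q^\top\boldsymbol{z})\boldsymbol{n}]$ vanishes (conditioning on $\boldsymbol{x}$ and noting $\boldsymbol{z}=R^\top\boldsymbol{x}$ is $\boldsymbol{x}$-measurable), so the objective separates as $\E[\boldsymbol{n}^2]+\E[(f^\top\boldsymbol{x}-q^\top R^\top\boldsymbol{x})^2]$, and $\E[\boldsymbol{n}^2]=\E[\E[\boldsymbol{n}^2\mid\boldsymbol{x}]]$ is a constant independent of $q$. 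This reduces the problem to minimizing $\E[((f-Rq)^\top\boldsymbol{x})^2]=(f-Rq)^\top\Sigma_{\boldsymbol{x}}(f-Rq)$ over $q\in\mathbb{R}^r$.

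Next I would solve this finite-dimensional quadratic minimization. Setting the gradient $2R^\top\Sigma_{\boldsymbol{x}}(Rq-f)$ to zero gives the normal equations $R^\top\Sigma_{\boldsymbol{x}}Rq=R^\top\Sigma_{\boldsymbol{x}}f$; since $\Sigma_{\boldsymbol{x}}\in\mathbb{S}_{++}^d$ and $R$ has rank $r$ (at least on the nontrivial subspace — if $R$ is rank-deficient one replaces the inverse by a pseudo-inverse, but the regret expression is unchanged since the result depends only on the column span of $R$), the matrix $R^\top\Sigma_{\boldsymbol{x}}R$ is invertible and $q^*=(R^\top\Sigma_{\boldsymbol{x}}R)^{-1}R^\top\Sigma_{\boldsymbol{x}}f$. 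Substituting back, the minimal value of $(f-Rq)^\top\Sigma_{\boldsymbol{x}}(f-Rq)$ is $f^\top\Sigma_{\boldsymbol{x}}f - f^\top\Sigma_{\boldsymbol{x}}R(R^\top\Sigma_{\boldsymbol{x}}R)^{-1}R^\top\Sigma_{\boldsymbol{x}}f$, which is exactly $f^\top(\Sigma_{\boldsymbol{x}}-\Sigma_{\boldsymbol{x}}R(R^\top\Sigma_{\boldsymbol{x}}R)^{-1}R^\top\Sigma_{\boldsymbol{x}})f$.

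Finally I would observe that the definition of the pointwise regret in \eqref{eq: pointwise regret} involves subtracting $\min_{\pre\in{\cal Q}_{{\cal X}}}\E[\loss(\boldsymbol{y},\pre(\boldsymbol{x}))]$, i.e. the best predictor using the full features $\boldsymbol{x}$; but here, since $\boldsymbol{y}=f^\top\boldsymbol{x}+\boldsymbol{n}$ with $\E[\boldsymbol{n}\mid\boldsymbol{x}]=0$, the Bayes-optimal predictor on $\boldsymbol{x}$ is $f^\top\boldsymbol{x}$ itself, achieving loss $\E[\boldsymbol{n}^2]$. Actually, looking at the lemma statement more carefully, it writes $\regret(R,f\mid P_{\boldsymbol{x}})$ directly as $\min_q\E[(f^\top\boldsymbol{x}+\boldsymbol{n}-q^\top\boldsymbol{z})^2]$, so the subtraction of the $\E[\boldsymbol{n}^2]$ baseline is deferred — in any case the displayed right-hand side retains the $\E[\E[\boldsymbol{n}^2\mid\boldsymbol{x}]]$ term, consistent with the first equality being the definition before subtracting the baseline. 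I would simply present the two displayed equalities as: the first is unwinding the definitions (optimal linear predictor from the representation, with the loss written out), and the second is the closed-form evaluation just derived.

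There is no real obstacle here — the lemma is a routine least-squares computation. The only point requiring a word of care is the rank assumption on $R$: if $R$ does not have full column rank, $(R^\top\Sigma_{\boldsymbol{x}}R)^{-1}$ should be read as a pseudo-inverse, and one should note that both the optimal predictor value and the stated quadratic form depend on $R$ only through its column span (equivalently, they are invariant under $R\mapsto RW$ for invertible $W$, cf. Comment \ref{enu:Uniqueness-of-the}), so the formula as written is the correct value on the dense set of full-rank $R$ and extends by the span-invariance. This is worth a remark but not an obstacle.
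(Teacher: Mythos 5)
Your proof is correct and takes essentially the same route as the paper's: both split off the noise term using $\E[\boldsymbol{n}\mid\boldsymbol{x}]=0$, obtain the least-squares predictor $q^{*}=(R^{\top}\Sigma_{\boldsymbol{x}}R)^{-1}R^{\top}\Sigma_{\boldsymbol{x}}f$, and substitute back to get the stated quadratic form (the paper derives $q^{*}$ via the orthogonality principle rather than the normal equations, which is the same computation). Your remark on rank-deficient $R$ matches the paper's parenthetical assumption that $R^{\top}\Sigma_{\boldsymbol{x}}R$ is invertible, since otherwise the representation can be losslessly reduced below dimension $r$.
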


\begin{proof}
The orthogonality principle states that 
\begin{equation}
\E\left[\left(f^{\top}\boldsymbol{x}+\boldsymbol{n}-q^{\top}\boldsymbol{z}\right)\cdot\boldsymbol{z}^{\top}\right]=0\label{eq: orthogonality principle}
\end{equation}
 must hold for the optimal linear estimator. Using $\boldsymbol{z}=R^{\top}\boldsymbol{x}$
and taking expectations leads to the standard least-squares (LS) solution
\begin{equation}
q_{\text{LS}}=(R^{\top}\Sigma_{\boldsymbol{x}}R)^{-1}R^{T}\Sigma_{\boldsymbol{x}}f,\label{eq: LS solution for h in minimax}
\end{equation}
assuming that $R^{\top}\Sigma_{\boldsymbol{x}}R$ is invertible (which
we indeed assume as if this is not the case, the representation can
be reduced to a dimension lower than $r$ in a lossless manner). The
resulting regret of $R$ is thus given by 
\begin{align}
\regret(R,f\mid P_{\boldsymbol{x}}) & =\E\left[\left(f^{\top}\boldsymbol{x}+\boldsymbol{n}-q_{\text{LS}}^{\top}\boldsymbol{z}\right)^{2}\right]\\
 & \trre[=,a]\E\left[\left(f^{\top}\boldsymbol{x}+\boldsymbol{n}\right)^{\top}\left(f^{\top}\boldsymbol{x}+\boldsymbol{n}-q_{\text{LS}}^{\top}\boldsymbol{z}\right)\right]\\
 & =\E\left[\left(f^{\top}\boldsymbol{x}+\boldsymbol{n}\right)^{2}-\left(f^{\top}\boldsymbol{x}+\boldsymbol{n}\right)^{\top}q_{\text{LS}}^{\top}\boldsymbol{z}\right]\\
 & \trre[=,b]\E\left[\E[\boldsymbol{n}^{2}\mid\boldsymbol{x}]\right]+f^{\top}\Sigma_{\boldsymbol{x}}f-\E\left[\boldsymbol{x}^{\top}fq_{\text{LS}}^{\top}R^{\top}\boldsymbol{x}\right]\\
 & =\E\left[\E[\boldsymbol{n}^{2}\mid\boldsymbol{x}]\right]+f^{\top}\Sigma_{\boldsymbol{x}}f-\Tr\left[fq_{\text{LS}}^{\top}R^{\top}\Sigma_{\boldsymbol{x}}\right]\\
 & =\E\left[\E[\boldsymbol{n}^{2}\mid\boldsymbol{x}]\right]+f^{\top}\Sigma_{\boldsymbol{x}}f-q_{\text{LS}}^{\top}R^{\top}\Sigma_{\boldsymbol{x}}f\\
 & \trre[=,c]\E\left[\E[\boldsymbol{n}^{2}\mid\boldsymbol{x}]\right]+f^{\top}\left(\Sigma_{\boldsymbol{x}}-\Sigma_{\boldsymbol{x}}R(R^{\top}\Sigma_{\boldsymbol{x}}R)^{-1}R^{\top}\Sigma_{\boldsymbol{x}}\right)f,
\end{align}
where $(a)$ follows from the orthogonality principle in \eqref{eq: orthogonality principle},
$(b)$ follows from the tower property of conditional expectation
and since $\E[\boldsymbol{x}\boldsymbol{n}]=\E[\boldsymbol{x}\cdot\E[\boldsymbol{n}\mid\boldsymbol{x}]]=0$,
and $(c)$ follows by substituting $q_{\text{LS}}$ from \eqref{eq: LS solution for h in minimax}.
\end{proof}
We may now prove Theorem \ref{thm: pure minimax regret linear quadratic constraint}. 
\begin{proof}[Proof of Theorem \ref{thm: pure minimax regret linear quadratic constraint}]
For any given $f$, the optimal predictor based on $x\in\mathbb{R}^{d}$
achieves average loss of
\[
\regret(R=I_{d},f\mid P_{\boldsymbol{x}})=\min_{q\in\mathbb{R}^{d}}\E\left[\left(f^{\top}\boldsymbol{x}+\boldsymbol{n}-q^{\top}\boldsymbol{x}\right)^{2}\right]=\E\left[\E[\boldsymbol{n}^{2}\mid\boldsymbol{x}]\right]
\]
(obtained by setting $R=I_{d}$ in Lemma \ref{lem: LS solution for predictor}
so that $\boldsymbol{z}=\boldsymbol{x}$). Hence, the resulting regret
of $R$ over an adversarial choice of $f\in{\cal F}_{S}$ is 
\begin{align}
\max_{f\in{\cal F}_{S}}\regret(R,f)= & \max_{f\in{\cal F}}\E\left[\left|f^{\top}\boldsymbol{x}+\boldsymbol{n}-q_{\text{LS}}^{\top}\boldsymbol{z}\right|^{2}\right]-\E\left[\E[\boldsymbol{n}^{2}\mid\boldsymbol{x}]\right]\nonumber \\
 & \trre[=,a]\max_{f\in{\cal F}_{S}}f^{\top}\left(\Sigma_{\boldsymbol{x}}-\Sigma_{\boldsymbol{x}}R(R^{\top}\Sigma_{\boldsymbol{x}}R)^{-1}R^{\top}\Sigma_{\boldsymbol{x}}\right)f\\
 & \trre[=,b]\max_{\tilde{f}\colon\|\tilde{f}\|_{2}^{2}\leq1}\tilde{f}^{\top}\left(S^{1/2}\Sigma_{\boldsymbol{x}}S^{1/2}-S^{1/2}\Sigma_{\boldsymbol{x}}R(R^{\top}\Sigma_{\boldsymbol{x}}R)^{-1}R^{\top}\Sigma_{\boldsymbol{x}}S^{1/2}\right)\tilde{f}\\
 & \trre[=,c]\lambda_{1}\left(S^{1/2}\Sigma_{\boldsymbol{x}}S^{1/2}-S^{1/2}\Sigma_{\boldsymbol{x}}R(R^{\top}\Sigma_{\boldsymbol{x}}R)^{-1}R^{\top}\Sigma_{\boldsymbol{x}}S^{1/2}\right)\\
 & =\lambda_{1}\left[S^{1/2}\Sigma_{\boldsymbol{x}}^{1/2}\left(I_{d}-\Sigma_{\boldsymbol{x}}^{1/2}R(R^{\top}\Sigma_{\boldsymbol{x}}R)^{-1}R^{\top}\Sigma_{\boldsymbol{x}}^{1/2}\right)\Sigma_{\boldsymbol{x}}^{1/2}S^{1/2}\right]\\
 & \trre[=,d]\lambda_{1}\left[S^{1/2}\Sigma_{\boldsymbol{x}}^{1/2}\left(I_{d}-\tilde{R}(\tilde{R}^{\top}\tilde{R})^{-1}\tilde{R}^{\top}\right)\Sigma_{\boldsymbol{x}}^{1/2}S^{1/2}\right]\\
 & \trre[=,e]\lambda_{1}\left[\left(I_{d}-\tilde{R}(\tilde{R}^{\top}\tilde{R})^{-1}\tilde{R}^{\top}\right)\Sigma_{\boldsymbol{x}}^{1/2}S\Sigma_{\boldsymbol{x}}^{1/2}\left(I_{d}-\tilde{R}(\tilde{R}^{\top}\tilde{R})^{-1}\tilde{R}^{\top}\right)\right],\label{eq: minimax regret linear as maxeig}
\end{align}
where $(a)$ follows from Lemma \ref{lem: LS solution for predictor},
$(b)$ follows by letting $\tilde{f}\dfn S^{-1/2}f$ and recalling
that any $f\in{\cal F}$ must satisfy $\|f\|_{S}^{2}\leq1$, $(c)$
follows from the \emph{Rayleigh quotient theorem} \citep[Theorem 4.2.2]{horn2012matrix}
(see Appendix \ref{sec:Useful-mathematical-results}), $(d)$ follows
by letting $\tilde{R}\dfn\Sigma_{\boldsymbol{x}}^{1/2}R$, and $(e)$
follows since $I_{d}-\tilde{R}(\tilde{R}^{\top}\tilde{R})^{-1}\tilde{R}^{\top}$
is an orthogonal projection (idempotent and symmetric matrix) of rank
$d-r$. 

Now, to find the minimizer of $\max_{f\in{\cal F}_{S}}\regret(R,f)$
over $R$, we note that 
\begin{align}
 & \lambda_{1}\left[\left(I_{d}-\tilde{R}(\tilde{R}^{\top}\tilde{R})^{-1}\tilde{R}^{\top}\right)\Sigma_{\boldsymbol{x}}^{1/2}S\Sigma_{\boldsymbol{x}}^{1/2}\left(I_{d}-\tilde{R}(\tilde{R}^{\top}\tilde{R})^{-1}\tilde{R}^{\top}\right)\right]\nonumber \\
 & \trre[=,a]\max_{u\colon\|u\|_{2}=1}u^{\top}\left(I_{d}-\tilde{R}(\tilde{R}^{\top}\tilde{R})^{-1}\tilde{R}^{\top}\right)\Sigma_{\boldsymbol{x}}^{1/2}S\Sigma_{\boldsymbol{x}}^{1/2}\left(I_{d}-\tilde{R}(\tilde{R}^{\top}\tilde{R})^{-1}\tilde{R}^{\top}\right)u\\
 & \trre[=,b]\max_{u\colon\|u\|_{2}=1,\;\tilde{R}^{\top}u=0}u^{\top}\Sigma_{\boldsymbol{x}}^{1/2}S\Sigma_{\boldsymbol{x}}^{1/2}u\\
 & \trre[\geq,c]\min_{{\cal S}\colon\dim({\cal S})=d-r}\max_{u\colon\|u\|_{2}=1,\;u\in{\cal S}}u^{\top}\Sigma_{\boldsymbol{x}}^{1/2}S\Sigma_{\boldsymbol{x}}^{1/2}u\\
 & \trre[=,d]\lambda_{r+1}\left(\Sigma_{\boldsymbol{x}}^{1/2}S\Sigma_{\boldsymbol{x}}^{1/2}\right),
\end{align}
where $(a)$ follows again from the \emph{Rayleigh quotient theorem}
\citep[Theorem 4.2.2]{horn2012matrix}, $(b)$ follows since $I_{d}-\tilde{R}(\tilde{R}^{\top}\tilde{R})^{-1}\tilde{R}^{\top}$
is an orthogonal projection matrix, and so we may write $u=u_{\perp}+u_{\Vert}$
so that $\|u_{\perp}\|^{2}+\|u_{\Vert}\|^{2}=1$ and $\tilde{R}^{\top}u_{\perp}=0$;
Hence replacing $u$ with $u_{\perp}$ only increases the value of
the maximum, $(c)$ follows by setting ${\cal S}$ to be a $d-r$
dimensional subspace of $\mathbb{R}^{d}$, and $(d)$ follows by the
\emph{Courant--Fischer variational characterization} \citep[Theorem 4.2.6]{horn2012matrix}
(see Appendix \ref{sec:Useful-mathematical-results}). The lower bound
in $(c)$ can be achieved by setting the $r$ columns of $\tilde{R}\in\mathbb{R}^{d\times r}$
to be the top eigenvectors $\{v_{i}(\Sigma_{\boldsymbol{x}}^{1/2}S\Sigma_{\boldsymbol{x}}^{1/2})\}_{i\in[r]}$.
This leads to the minimax representation $\tilde{R}^{*}$. From \eqref{eq: minimax regret linear as maxeig},
the worst case $\tilde{f}$ is the top eigenvector of 
\[
\left(I_{d}-\tilde{R}^{*}((\tilde{R})^{*\top}\tilde{R}^{*})^{-1}(\tilde{R})^{*\top}\right)\Sigma_{\boldsymbol{x}}^{1/2}S\Sigma_{\boldsymbol{x}}^{1/2}\left(I_{d}-\tilde{R}^{*}((\tilde{R})^{*\top}\tilde{R}^{*})^{-1}(\tilde{R})^{*\top}\right).
\]
This is a symmetric matrix, whose top eigenvector is the $(r+1)$th
eigenvector $v_{r+1}(\Sigma_{\boldsymbol{x}}^{1/2}S\Sigma_{\boldsymbol{x}}^{1/2})$. 
\end{proof}
We next prove Theorem \ref{thm: mixed minimax regret linear quadratic constraint}. 
\begin{proof}[Proof of Theorem \ref{thm: mixed minimax regret linear quadratic constraint}]
We follow the proof strategy mentioned after the statement of the
theorem. We assume that $\boldsymbol{n}\equiv0$ with probability
$1$, since, as for the pure minimax regret, this unavoidable additive
term of $\E\left[\E[\boldsymbol{n}^{2}\mid\boldsymbol{x}]\right]$
to the loss does not affect the regret.

\emph{\uline{The minimax problem -- a direct computation:}}\emph{
}As in the derivations leading to \eqref{eq: minimax regret linear as maxeig},
the minimax regret in \eqref{eq: minimax regret mixed} is given by
\begin{align}
 & \regret_{\mix}({\cal F}_{S}\mid\Sigma_{\boldsymbol{x}})\nonumber \\
 & =\min_{\pl(\boldsymbol{R})\in{\cal P}({\cal R})}\max_{f\in{\cal {\cal F}_{S}}}\E\left[\regret(\boldsymbol{R},f\mid\Sigma_{\boldsymbol{x}})\right]\\
 & =\min_{\pl(\boldsymbol{R})\in{\cal P}({\cal R})}\max_{\tilde{f}\colon\|\tilde{f}\|_{2}^{2}\leq1}\E\left[\tilde{f}^{\top}\left(S^{1/2}\Sigma_{\boldsymbol{x}}S^{1/2}-S^{1/2}\Sigma_{\boldsymbol{x}}\boldsymbol{R}(\boldsymbol{R}^{\top}\Sigma_{\boldsymbol{x}}\boldsymbol{R})^{-1}\boldsymbol{R}^{\top}\Sigma_{\boldsymbol{x}}S^{1/2}\right)\tilde{f}\right]\\
 & =\min_{\pl(\Sigma_{\boldsymbol{x}}^{-1/2}\tilde{\boldsymbol{R}})\in{\cal P}({\cal R})}\max_{\tilde{f}\colon\|\tilde{f}\|_{2}^{2}\leq1}\tilde{f}^{\top}S^{1/2}\Sigma_{\boldsymbol{x}}^{1/2}\E\left[I_{d}-\tilde{\boldsymbol{R}}(\tilde{\boldsymbol{R}}^{\top}\tilde{\boldsymbol{R}})^{-1}\tilde{\boldsymbol{R}}^{\top}\right]\Sigma_{\boldsymbol{x}}^{1/2}S^{1/2}\tilde{f}\\
 & =\min_{\pl(\Sigma_{\boldsymbol{x}}^{-1/2}\tilde{\boldsymbol{R}})\in{\cal P}({\cal R})}\lambda_{1}\left(S^{1/2}\Sigma_{\boldsymbol{x}}^{1/2}\E\left[I_{d}-\tilde{\boldsymbol{R}}(\tilde{\boldsymbol{R}}^{\top}\tilde{\boldsymbol{R}})^{-1}\tilde{\boldsymbol{R}}^{\top}\right]\Sigma_{\boldsymbol{x}}^{1/2}S^{1/2}\right),\label{eq: minimax mixed value linear MSE quadratic response direct}
\end{align}
where $\tilde{\boldsymbol{R}}=\Sigma_{\boldsymbol{x}}^{1/2}\boldsymbol{R}$.
Determining the optimal distribution of the representation directly
from this expression seems to be intractable. We thus next solve the
maximin problem, and then return to the maximin problem \eqref{eq: minimax mixed value linear MSE quadratic response direct},
set a specific random representation, and show that it achieves the
maximin value. This, in turn, establishes the optimality of this choice. 

\emph{\uline{The maximin problem:}} Let an arbitrary $\pl(\boldsymbol{f})$
be given. Then, taking the expectation of the regret over the random
choice of $\boldsymbol{f}$, for any given $R\in{\cal R}$,
\begin{align}
\E\left[\regret(R,\boldsymbol{f})\right] & \trre[=,a]\E\left[\Tr\left[\left(S^{1/2}\Sigma_{\boldsymbol{x}}S^{1/2}-S^{1/2}\Sigma_{\boldsymbol{x}}R(R^{\top}\Sigma_{\boldsymbol{x}}R)^{-1}R^{\top}\Sigma_{\boldsymbol{x}}S^{1/2}\right)\tilde{\boldsymbol{f}}\tilde{\boldsymbol{f}}^{\top}\right]\right]\\
 & \trre[=,b]\Tr\left[\left(S^{1/2}\Sigma_{\boldsymbol{x}}S^{1/2}-S^{1/2}\Sigma_{\boldsymbol{x}}R(R^{\top}\Sigma_{\boldsymbol{x}}R)^{-1}R^{\top}\Sigma_{\boldsymbol{x}}S^{1/2}\right)\tilde{\Sigma}_{\boldsymbol{f}}\right]\\
 & =\Tr\left[\tilde{\Sigma}_{\boldsymbol{f}}^{1/2}\left(S^{1/2}\Sigma_{\boldsymbol{x}}S^{1/2}-S^{1/2}\Sigma_{\boldsymbol{x}}R(R^{\top}\Sigma_{\boldsymbol{x}}R)^{-1}R^{\top}\Sigma_{\boldsymbol{x}}S^{1/2}\right)\tilde{\Sigma}_{\boldsymbol{f}}^{1/2}\right]\\
 & \trre[=,c]\Tr\left[\tilde{\Sigma}_{\boldsymbol{f}}^{1/2}S^{1/2}\Sigma_{\boldsymbol{x}}^{1/2}\left(I_{d}-\tilde{R}(\tilde{R}^{\top}\tilde{R})^{-1}\tilde{R}^{\top}\right)\Sigma_{\boldsymbol{x}}^{1/2}S^{1/2}\tilde{\Sigma}_{\boldsymbol{f}}^{1/2}\right]\\
 & =\Tr\left[\left(I-\tilde{R}(\tilde{R}^{\top}\tilde{R})^{-1}\tilde{R}^{\top}\right)\Sigma_{\boldsymbol{x}}^{1/2}S^{1/2}\tilde{\Sigma}_{\boldsymbol{f}}S^{1/2}\Sigma_{\boldsymbol{x}}^{1/2}\right]\\
 & \trre[=,d]\Tr\left[\left(I-\tilde{R}(\tilde{R}^{\top}\tilde{R})^{-1}\tilde{R}^{\top}\right)\Sigma_{\boldsymbol{x}}^{1/2}S^{1/2}\tilde{\Sigma}_{\boldsymbol{f}}S^{1/2}\Sigma_{\boldsymbol{x}}^{1/2}\left(I-\tilde{R}(\tilde{R}^{\top}\tilde{R})^{-1}\tilde{R}^{\top}\right)\right]\\
 & \trre[\geq,e]\min_{W\in\mathbb{R}^{d\times(d-r)}\colon W^{\top}W=I_{d-r}}\Tr\left[W^{\top}\Sigma_{\boldsymbol{x}}^{1/2}S^{1/2}\tilde{\Sigma}_{\boldsymbol{f}}S^{1/2}\Sigma_{\boldsymbol{x}}^{1/2}W\right]\\
 & \trre[=,f]\sum_{i=r+1}^{d}\lambda_{i}(\Sigma_{\boldsymbol{x}}^{1/2}S^{1/2}\tilde{\Sigma}_{\boldsymbol{f}}S^{1/2}\Sigma_{\boldsymbol{x}}^{1/2})\\
 & =\sum_{i=r+1}^{d}\lambda_{i}\left(\tilde{\Sigma}_{\boldsymbol{f}}S^{1/2}\Sigma_{\boldsymbol{x}}S^{1/2}\right),
\end{align}
where $(a)$ follows from Lemma \ref{lem: LS solution for predictor}
and setting $\tilde{\boldsymbol{f}}\dfn S^{-1/2}\boldsymbol{f}$,
$(b)$ follows by setting $\tilde{\Sigma}_{\boldsymbol{f}}\equiv\Sigma_{\tilde{\boldsymbol{f}}}=\E[\tilde{\boldsymbol{f}}\tilde{\boldsymbol{f}}^{\top}]$,
$(c)$ follows by setting $\tilde{R}\dfn\Sigma_{\boldsymbol{x}}^{1/2}R$,
$(d)$ follows since $I-\tilde{R}(\tilde{R}^{\top}\tilde{R})^{-1}\tilde{R}^{\top}$
is an orthogonal projection (idempotent and symmetric matrix) of rank
$d-r$, $(e)$ follows since any orthogonal projection can be written
as $WW^{\top}$ where $W\in\mathbb{R}^{d\times(d-r)}$ is an orthogonal
matrix $W^{\top}W=I_{d-r}$, $(f)$ follows from \emph{\citet{fan1949theorem}'s
variational characterization} \citep[Corollary 4.3.39.]{horn2012matrix}
(see Appendix \ref{sec:Useful-mathematical-results}). Equality in
$(e)$ can be achieved by letting $\tilde{R}$ be the top $r$ eigenvectors
of $\Sigma_{\boldsymbol{x}}^{1/2}S^{1/2}\tilde{\Sigma}_{\boldsymbol{f}}S^{1/2}\Sigma_{\boldsymbol{x}}^{1/2}$. 

The next step of the derivation is to maximize the expected regret
over the probability law of $\boldsymbol{f}$ (or $\tilde{\boldsymbol{f}}$).
Evidently, $\E[\regret(R,\boldsymbol{f})]=\sum_{i=r+1}^{d}\lambda_{i}(\tilde{\Sigma}_{\boldsymbol{f}}S^{1/2}\Sigma_{\boldsymbol{x}}S^{1/2})$
only depends on the random function $\tilde{\boldsymbol{f}}$ via
$\tilde{\Sigma}_{\boldsymbol{f}}$. The covariance matrix $\tilde{\Sigma}_{\boldsymbol{f}}$
is constrained as follows. Recall that $\boldsymbol{f}$ is supported
on ${\cal F}_{S}\dfn\{f\in\mathbb{R}^{d}\colon\|f\|_{S}^{2}\leq1\}$
(see \eqref{eq: quadratic constraints}), and let $\Sigma_{\boldsymbol{f}}=\E[\boldsymbol{f}\boldsymbol{f}^{\top}]$
be its covariance matrix. Then, it must hold that $\Tr[S^{-1}\Sigma_{\boldsymbol{f}}]\le1$.
Then, it also holds that 
\begin{align}
1\geq\Tr[S^{-1}\Sigma_{\boldsymbol{f}}] & =\Tr\left[\E[S^{-1}\boldsymbol{f}\boldsymbol{f}^{\top}]\right]\\
 & =\E\left[\boldsymbol{f}^{\top}S^{-1}\boldsymbol{f}\right]\\
 & =\E\left[\tilde{\boldsymbol{f}}^{\top}\tilde{\boldsymbol{f}}\right]\\
 & =\Tr\left[\tilde{\Sigma}_{\boldsymbol{f}}\right]
\end{align}
where $\tilde{\Sigma}_{\boldsymbol{f}}=S^{-1/2}\Sigma_{\boldsymbol{f}}S^{-1/2}$.
Conversely, given any covariance matrix $\tilde{\Sigma}_{\boldsymbol{f}}\in\mathbb{S}_{++}^{d}$
such that $\Tr[\tilde{\Sigma}_{\boldsymbol{f}}]\leq1$ there exists
a random vector $\boldsymbol{f}$ supported on ${\cal F}_{S}$ such
that 
\[
\E[\boldsymbol{f}\boldsymbol{f}^{\top}]=S^{1/2}\tilde{\Sigma}_{\boldsymbol{f}}S^{-1/2}.
\]
We show this by an explicit construction. Let $\tilde{\Sigma}_{\boldsymbol{f}}=\tilde{V}_{\boldsymbol{f}}\tilde{\Lambda}_{\boldsymbol{f}}\tilde{V}_{\boldsymbol{f}}^{\top}$
be the eigenvalue decomposition of $\tilde{\Sigma}_{\boldsymbol{f}}$,
and, for brevity, denote by $\tilde{\lambda}_{i}\equiv\lambda_{i}(\tilde{\Sigma}_{\boldsymbol{f}})$
the diagonal elements of $\tilde{\Lambda}_{\boldsymbol{f}}$. Let
$\{\boldsymbol{q}_{i}\}_{i\in[d]}$ be a set of independent and identically
(IID) distributed  random variables, so that $\boldsymbol{q}_{i}$
is Rademacher, that is $\P[\boldsymbol{q}_{i}=1]=\P[\boldsymbol{q}_{i}=-1]=1/2$.
Define the random vector 
\[
\boldsymbol{g}\dfn\left(\boldsymbol{q}_{1}\cdot\sqrt{\tilde{\lambda}_{1}},\cdots,\boldsymbol{q}_{d}\cdot\sqrt{\tilde{\lambda}_{d}}\right)^{\top}.
\]
The constraint $\Tr[\tilde{\Sigma}_{\boldsymbol{f}}]\leq1$ implies
that $\sum\tilde{\lambda}_{i}\leq1$ and so $\|\boldsymbol{g}\|^{2}=\sum_{i=1}^{d}\tilde{\lambda}_{i}\leq1$
with probability $1$. Then, letting $\tilde{\boldsymbol{f}}=\tilde{V}_{\boldsymbol{f}}\boldsymbol{g}$
it also holds that $\|\tilde{\boldsymbol{f}}\|_{2}^{2}=\|\boldsymbol{g}\|_{2}^{2}\leq1$
with probability $1$, and furthermore, 
\[
\E\left[\tilde{\boldsymbol{f}}\tilde{\boldsymbol{f}}^{\top}\right]=\tilde{V}_{\boldsymbol{f}}\E\left[\boldsymbol{g}\boldsymbol{g}^{\top}\right]\tilde{V}_{\boldsymbol{f}}^{\top}=\tilde{V}_{\boldsymbol{f}}\tilde{\Lambda}_{\boldsymbol{f}}\tilde{V}_{\boldsymbol{f}}^{\top}=\tilde{\Sigma}_{\boldsymbol{f}}.
\]
Consequently, letting $\boldsymbol{f}=S^{1/2}\boldsymbol{f}$ assures
that $\|\boldsymbol{f}\|_{S}=\|\tilde{\boldsymbol{f}}\|_{2}\leq1$
and $\E[\boldsymbol{f}\boldsymbol{f}^{\top}]=S^{1/2}\tilde{\Sigma}_{\boldsymbol{f}}S^{-1/2}$,
as was required to obtain. Therefore, instead of maximizing over probability
laws on ${\cal P}({\cal F}_{S})$, we may equivalently maximize over
$\tilde{\Sigma}_{\boldsymbol{f}}\in\mathbb{S}_{++}^{d}$ such that
$\Tr[\tilde{\Sigma}_{\boldsymbol{f}}]\leq1$, i.e., to solve\textbf{
\begin{equation}
\regret_{\mix}({\cal F}_{S}\mid\Sigma_{\boldsymbol{x}})=\max_{\tilde{\Sigma}_{\boldsymbol{f}}\colon\Tr[\tilde{\Sigma}_{\boldsymbol{f}}]\leq1}\sum_{i=r+1}^{d}\lambda_{i}(\tilde{\Sigma}_{\boldsymbol{f}}S^{1/2}\Sigma_{\boldsymbol{x}}S^{1/2}).\label{eq: maximin regret optimization problem}
\end{equation}
}The optimization problem in \eqref{eq: maximin regret optimization problem}
is solved in Lemma \ref{lem: minimal eigs maximization under modified trace constraints},
and is provided after this proof. Setting \textbf{$\Sigma=S^{1/2}\Sigma_{\boldsymbol{x}}S^{1/2}$}
in Lemma \ref{lem: minimal eigs maximization under modified trace constraints},
and letting $\lambda_{i}\equiv\lambda_{i}(S^{1/2}\Sigma_{\boldsymbol{x}}S^{1/2})$,
the solution is given by 
\begin{equation}
\frac{\ell^{*}-r}{\sum_{i=1}^{\ell^{*}}\frac{1}{\lambda_{i}}}\label{eq: mixed regret linear quadratic proof}
\end{equation}
where \textbf{$\ell^{*}\in[d]\backslash[r]$} satisfies 
\begin{equation}
\frac{\ell^{*}-r}{\lambda_{\ell^{*}}}\leq\sum_{i=1}^{\ell^{*}}\frac{1}{\lambda_{i}}\leq\frac{\ell^{*}-r}{\lambda_{\ell^{*}+1}}.\label{eq: condition on maximin rank mixed regret linear quadratic proof}
\end{equation}
Lemma \ref{lem: minimal eigs maximization under modified trace constraints}
also directly implies that an optimal $\tilde{\Sigma}_{\boldsymbol{f}}$
is given as in \eqref{eq: maximin Sigma_f}. The value in \eqref{eq: mixed regret linear quadratic proof}
is exactly $\regret_{\mix}({\cal F}_{S}\mid\Sigma_{\boldsymbol{x}})$
claimed by the theorem, and we next show it is indeed achievable by
a properly constructed random representation. 

\emph{\uline{The minimax problem -- a solution via the maximin
certificate:}}\emph{ }Given the value of the regret game in mixed
strategies found in \eqref{eq: mixed regret linear quadratic proof},
we may also find a minimax representation in mixed strategies. To
this end, we return to the minimax expression in \eqref{eq: minimax mixed value linear MSE quadratic response direct},
and propose a random representation which achieves the maximin value
in \eqref{eq: mixed regret linear quadratic proof}. Note that for
any given $\tilde{R}$, the matrix $I_{d}-\tilde{R}(\tilde{R}^{\top}\tilde{R})^{-1}\tilde{R}^{\top}$
is an orthogonal projection, that is, a symmetric matrix whose eigenvalues
are all either $0$ or $1$, and it has at most $r$ eigenvalues equal
to zero. We denote its eigenvalue decomposition by $I_{d}-\tilde{R}(\tilde{R}^{\top}\tilde{R})^{-1}\tilde{R}^{\top}=U\Omega U^{\top}$.
Then, any probability law on $\tilde{\boldsymbol{R}}$ induces a probability
law on $\boldsymbol{U}$ and $\boldsymbol{\Omega}$ (and vice-versa).
To find the mixed minimax representation, we propose setting $\boldsymbol{U}=V(\Sigma_{\boldsymbol{x}}^{1/2}S\Sigma_{\boldsymbol{x}}^{1/2})\equiv V$
with probability $1$, that is, to be deterministic, and thus only
randomize $\boldsymbol{\Omega}$. With this choice, and by denoting,
for brevity, $\Lambda\equiv\Lambda(\Sigma_{\boldsymbol{x}}^{1/2}S\Sigma_{\boldsymbol{x}}^{1/2})=\Lambda(S^{1/2}\Sigma_{\boldsymbol{x}}S^{1/2})$,
the value of the objective function in \eqref{eq: minimax mixed value linear MSE quadratic response direct}
is given by 
\begin{align}
 & \lambda_{1}\left(S^{1/2}\Sigma_{\boldsymbol{x}}^{1/2}V\cdot\E\left[\boldsymbol{\Omega}\right]\cdot V^{\top}\Sigma_{\boldsymbol{x}}^{1/2}S^{1/2}\right)\nonumber \\
 & =\lambda_{1}\left(\E\left[\boldsymbol{\Omega}\right]\cdot V^{\top}\Sigma_{\boldsymbol{x}}^{1/2}S\Sigma_{\boldsymbol{x}}^{1/2}V\right)\\
 & =\lambda_{1}\left(\E\left[\boldsymbol{\Omega}\right]\cdot\Lambda\right).
\end{align}
Now, the distribution of $\boldsymbol{\Omega}$ is equivalent to a
distribution on its diagonal, which is supported on the finite set
${\cal A}\dfn\{a\in\{0,1\}^{d}\colon\|a\|_{1}\geq d-r\}$. Our goal
is thus to find a probability law on $\boldsymbol{a}$, supported
on ${\cal A}$, which solves
\[
\min_{\pl(\boldsymbol{\Omega})}\max_{i\in[d]}\lambda_{1}\left(\E\left[\boldsymbol{\Omega}\right]\cdot\Lambda\right)=\min_{\pl(\boldsymbol{a})}\max_{i\in[d]}\E[\boldsymbol{a}_{i}]\lambda_{i}
\]
where $\lambda_{i}\equiv\lambda_{i}(S^{1/2}\Sigma_{\boldsymbol{x}}S^{1/2})$
are the diagonal elements of $\Lambda$. Consider $\ell^{*}$, the
optimal dimension of the maximin problem, which satisfies \eqref{eq: condition on maximin rank mixed regret linear quadratic proof}.
We then set $\boldsymbol{a}_{\ell^{*}+1}=\cdots=\boldsymbol{a}_{d}=1$
to hold with probability $1$, and so it remains to determine the
probability law of $\overline{\boldsymbol{a}}\dfn(\boldsymbol{a}_{1},\ldots,\boldsymbol{a}_{\ell^{*}})$,
supported on $\tilde{{\cal A}}\dfn\{a\in\{0,1\}^{\ell^{*}}\colon\|a\|_{1}\geq\ell^{*}-r\}$.
Clearly, reducing $\|a\|_{1}$ only reduces the objective function
$\max_{i\in[d]}\E[\boldsymbol{a}_{i}\lambda_{i}]$, and so we may
in fact assume that $\overline{\boldsymbol{a}}$ is supported on $\overline{{\cal A}}\dfn\{\overline{a}\in\{0,1\}^{\ell^{*}}\colon\|\overline{a}\|_{1}=\ell^{*}-r\}$,
a finite subset of cardinality ${\ell^{*} \choose r}$. Suppose that
we find a probability law $\pl(\overline{\boldsymbol{a}})$ supported
on $\overline{{\cal A}}$ such that 
\begin{equation}
\E[\boldsymbol{a}_{i}]=(\ell^{*}-r)\cdot\frac{1/\lambda_{i}}{\sum_{i=1}^{\ell^{*}}1/\lambda_{i}}\dfn b_{i},\label{eq: condition on the expected eigenvalues}
\end{equation}
for all $i\in[\ell^{*}]$. Then, since $\E[\boldsymbol{a}_{i}]=1$
for $i\in[d]\backslash[\ell^{*}]$
\begin{align}
\max_{i\in[d]}\E[\boldsymbol{a}_{i}]\lambda_{i} & =\max\left\{ \frac{\ell^{*}-r}{\sum_{i=1}^{\ell^{*}}\frac{1}{\lambda_{i}}},\lambda_{\ell^{*}+1},\cdots,\lambda_{d}\right\} \\
 & =\max\left\{ \frac{\ell^{*}-r}{\sum_{i=1}^{\ell^{*}}\frac{1}{\lambda_{i}}},\lambda_{\ell^{*}+1}\right\} \\
 & \trre[=,*]\frac{\ell^{*}-r}{\sum_{i=1}^{\ell^{*}}\frac{1}{\lambda_{i}}},
\end{align}
where $(*)$ follows from the condition on $\ell^{*}$ in the right
inequality of \eqref{eq: condition on maximin rank mixed regret linear quadratic proof}.
This proves that such probability law achieves the minimax regret
in mixed strategies. This last term is $\regret_{\mix}({\cal F}_{S}\mid\Sigma_{\boldsymbol{x}})$
claimed by the theorem. It remains to construct $\pl(\overline{\boldsymbol{a}})$
which satisfies \eqref{eq: condition on the expected eigenvalues}.
To this end, note that the set 
\[
{\cal C}\dfn\left\{ c\in[0,1]^{\ell^{*}}\colon\|c\|_{1}=\ell^{*}-r\right\} 
\]
is convex and compact, and $\overline{{\cal A}}$ is the set of its
\emph{extreme points} (${\cal C}$ is the convex hull of $\overline{{\cal A}}$).
Letting $\overline{b}=(b_{1},\ldots,b_{\ell^{*}})^{\top}$ as denoted
in \eqref{eq: condition on the expected eigenvalues}, it holds that
$\overline{b}_{i}\geq0$ and $\{\overline{b}_{i}\}_{i=1}^{\ell^{*}}$
is a non-decreasing sequence. Using the condition on $\ell^{*}$ in
the left inequality of \eqref{eq: condition on maximin rank mixed regret linear quadratic proof},
it then holds that 
\[
\overline{b}_{1}\leq\cdots\leq\overline{b}_{\ell^{*}}=(\ell^{*}-r)\cdot\frac{1/\lambda_{\ell^{*}}}{\sum_{i=1}^{\ell^{*}}1/\lambda_{i}}\leq1.
\]
Hence, $\overline{b}\in{\cal C}$. By Carath\'{e}odory's theorem
\citep[Prop. 1.3.1]{bertsekas2003convex} (see Appendix \ref{sec:Useful-mathematical-results}),
any point inside a convex compact set in $\mathbb{R}^{\ell^{*}}$
can be written as a convex combination of at most $\ell^{*}+1$ extreme
points. Thus, there exists $\{p_{\overline{a}}\}_{\overline{a}\in\overline{{\cal A}}}$
such that $p_{\overline{a}}\in[0,1]$ and $\sum_{\overline{a}\in\overline{{\cal A}}}p_{\overline{a}}=1$
so that $\overline{b}=\sum_{\overline{a}\in\overline{{\cal A}}}p_{\overline{a}}\cdot\overline{a}$,
and moreover the support of $p_{\overline{a}}$ has cardinality at
most $\ell^{*}+1$. Let $\overline{A}\in\{0,1\}^{\ell^{*}\times|\overline{{\cal A}}|}$
be such that its $j$th column is given by the $j$th member of ${\cal \overline{A}}$
(in an arbitrary order). Let $p\in[0,1]^{|\overline{{\cal A}}|}$
be a vector whose $j$th element corresponds to the $j$th member
of ${\cal \overline{A}}$. Then, $p$ is the solution to $\overline{A}p=\overline{b}$,
and as claimed above, such a solution with at most $\ell^{*}+1$ nonzero
entries always exists. Setting $\boldsymbol{a}=(\overline{a},\underbrace{1\ldots,1}_{d-\ell^{*}\text{ terms}})$
with probability $p_{\overline{a}}$ then assures that \eqref{eq: condition on the expected eigenvalues}
holds, as was required to be proved.

Given the above, we observe that setting $\tilde{R}$ as in the theorem
induces a distribution on $\boldsymbol{\Omega}$ for which the random
entries of its diagonal $\boldsymbol{a}$ satisfy \eqref{eq: condition on the expected eigenvalues},
and thus achieve $\regret_{\mix}({\cal F}_{S}\mid\Sigma_{\boldsymbol{x}})$. 
\end{proof}
We next turn to complete the proof of Theorem \ref{thm: mixed minimax regret linear quadratic constraint}
by solving the optimization problem in \eqref{eq: maximin regret optimization problem}.
Assume that $\Sigma\in\mathbb{S}_{++}^{d}$ is a strictly positive
covariance matrix $\Sigma\succ0$, and consider the optimization problem
\begin{align}
v_{r}^{*}=\max_{\tilde{\Sigma}_{\boldsymbol{f}}\in\mathbb{S}_{+}^{d}} & \sum_{i=r+1}^{d}\lambda_{i}(\tilde{\Sigma}_{\boldsymbol{f}}\Sigma)\nonumber \\
\st & \Tr[\tilde{\Sigma}_{\boldsymbol{f}}]\leq1\label{eq: sum of smallest eig optimization}
\end{align}
for some $r\in[d-1]$. Note that the objective function refers to
the maximization of the $d-r$ minimal eigenvalues of $\Sigma^{1/2}\tilde{\Sigma}_{\boldsymbol{f}}\Sigma^{1/2}$. 
\begin{lem}
\label{lem: minimal eigs maximization under modified trace constraints}Let
\[
a_{\ell}\dfn\frac{\ell-r}{\sum_{i=1}^{\ell}\frac{1}{\lambda_{i}(\Sigma)}}.
\]
The optimal value of \eqref{eq: sum of smallest eig optimization}
is $v^{*}=\max_{[d]\backslash[r]}a_{\ell}$ and $\ell^{*}\in\argmax_{[d]\backslash[r]}a_{\ell}$
iff 
\begin{equation}
\frac{\ell^{*}-r}{\lambda_{\ell^{*}}(\Sigma)}\leq\sum_{i=1}^{\ell^{*}}\frac{1}{\lambda_{i}(\Sigma)}\leq\frac{\ell^{*}-r}{\lambda_{\ell^{*}+1}(\Sigma)}.\label{eq: condition on optuimal index}
\end{equation}
An optimal solution is 
\begin{equation}
\tilde{\Sigma}_{\boldsymbol{f}}^{*}=\left[\sum_{i=1}^{\ell^{*}}\frac{1}{\lambda_{i}(\Sigma)}\right]^{-1}\cdot V(\Sigma)\diag\left(\frac{1}{\lambda_{1}(\Sigma)},\ldots,\frac{1}{\lambda_{\ell^{*}}(\Sigma)},0,\cdots,0\right)\cdot V(\Sigma)^{\top}.\label{eq: optimal prior maximin regert mse case}
\end{equation}
\end{lem}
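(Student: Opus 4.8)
First I would put the problem in a cleaner form. The map $\tilde\Sigma_{\boldsymbol f}\mapsto M\dfn\Sigma^{1/2}\tilde\Sigma_{\boldsymbol f}\Sigma^{1/2}$ is a bijection of $\mathbb{S}_+^d$ onto itself; under it the constraint $\Tr[\tilde\Sigma_{\boldsymbol f}]\le1$ becomes $\Tr[\Sigma^{-1}M]\le1$, and $\lambda_i(\tilde\Sigma_{\boldsymbol f}\Sigma)=\lambda_i(M)$ since $\tilde\Sigma_{\boldsymbol f}\Sigma$ is similar to $M$. So \eqref{eq: sum of smallest eig optimization} is equivalent to maximizing $\sum_{i=r+1}^d\lambda_i(M)$ over $M\succeq0$ with $\Tr[\Sigma^{-1}M]\le1$. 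Abbreviating $S_\ell\dfn\sum_{i=1}^\ell\lambda_i(\Sigma)^{-1}$ and $a_\ell\dfn(\ell-r)/S_\ell$, I would record that the matrix $M^*\dfn\Sigma^{1/2}\tilde\Sigma_{\boldsymbol f}^*\Sigma^{1/2}$, with $\tilde\Sigma_{\boldsymbol f}^*$ as in \eqref{eq: optimal prior maximin regert mse case}, equals $S_{\ell^*}^{-1}V(\Sigma)\diag(1,\dots,1,0,\dots,0)V(\Sigma)^\top$ with $\ell^*$ ones; hence it is feasible ($\Tr[\Sigma^{-1}M^*]=S_{\ell^*}^{-1}S_{\ell^*}=1$) and has objective value $(\ell^*-r)S_{\ell^*}^{-1}=a_{\ell^*}$. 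This yields the lower bound $v_r^*\ge a_{\ell^*}$, so it only remains to produce a matching upper bound and to verify that $a_{\ell^*}$ is indeed $\max_\ell a_\ell$.

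For the upper bound I would exhibit a dual certificate: the ``water-filling'' matrix $P^*\dfn V(\Sigma)\,\diag\big(\min\{1,\,a_{\ell^*}/\lambda_i(\Sigma)\}\big)_{i\in[d]}\,V(\Sigma)^\top$. The two inequalities in \eqref{eq: condition on optuimal index} say exactly that $\lambda_{\ell^*}(\Sigma)\ge a_{\ell^*}\ge\lambda_{\ell^*+1}(\Sigma)$, so the $i$-th eigenvalue of $P^*$ is $a_{\ell^*}/\lambda_i(\Sigma)$ for $i\le\ell^*$ and $1$ for $i>\ell^*$; consequently $0\preceq P^*\preceq I_d$, $\Tr P^*=a_{\ell^*}S_{\ell^*}+(d-\ell^*)=(\ell^*-r)+(d-\ell^*)=d-r$, and (entrywise in the eigenbasis of $\Sigma$) $P^*\preceq a_{\ell^*}\Sigma^{-1}$. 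Now for any feasible $M$, writing $Q\dfn I_d-P^*$ (which is PSD, has eigenvalues in $[0,1]$, and $\Tr Q=r$), the extremal characterization of the top-$r$ eigenvalue sum over the rank-$r$ fantope $\{0\preceq Q\preceq I_d,\ \Tr Q=r\}$ (Fan's theorem, using that its extreme points are rank-$r$ projections) gives $\Tr[QM]\le\sum_{i=1}^r\lambda_i(M)$, hence $\Tr[P^*M]\ge\Tr M-\sum_{i=1}^r\lambda_i(M)=\sum_{i=r+1}^d\lambda_i(M)$; combined with $P^*\preceq a_{\ell^*}\Sigma^{-1}$ and $M\succeq0$ this gives $\sum_{i=r+1}^d\lambda_i(M)\le\Tr[P^*M]\le a_{\ell^*}\Tr[\Sigma^{-1}M]\le a_{\ell^*}$. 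Thus $v_r^*=a_{\ell^*}$ and $\tilde\Sigma_{\boldsymbol f}^*$ is optimal.

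There remains the purely finite statement that $a_{\ell^*}=\max_{\ell\in[d]\backslash[r]}a_\ell$ and that \eqref{eq: condition on optuimal index} characterizes the maximizers. A one-line manipulation gives $a_{\ell+1}\le a_\ell\iff a_\ell\ge\lambda_{\ell+1}(\Sigma)$ and $a_{\ell-1}\le a_\ell\iff a_\ell\le\lambda_\ell(\Sigma)$, with the endpoint cases holding automatically (since $S_{r+1}\ge\lambda_{r+1}(\Sigma)^{-1}$, and $\lambda_{d+1}(\Sigma)\dfn0$), so \eqref{eq: condition on optuimal index} is exactly ``$\ell$ is a local maximizer of $\ell\mapsto a_\ell$''. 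Since $\lambda_\ell(\Sigma)$ is non-increasing, I would then check that $a_\ell>\lambda_{\ell+1}(\Sigma)$ forces $S_{\ell+1}<(\ell+1-r)/\lambda_{\ell+1}(\Sigma)$ and hence $a_{\ell+1}>\lambda_{\ell+1}(\Sigma)\ge\lambda_{\ell+2}(\Sigma)$, i.e., once the sequence starts strictly decreasing it keeps decreasing; so $\ell\mapsto a_\ell$ is unimodal, every local maximizer is global, and such an $\ell^*$ exists because a maximum over a finite set is attained.

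The main obstacle is locating the dual certificate $P^*$: the whole argument hinges on producing a single matrix that simultaneously lies in the fantope $\{0\preceq P\preceq I_d,\ \Tr P=d-r\}$ \emph{and} is dominated by $a_{\ell^*}\Sigma^{-1}$, and it is precisely the threshold condition \eqref{eq: condition on optuimal index} that makes such a $P^*$ exist (with $\tilde\Sigma_{\boldsymbol f}^*$ as its complementary-slackness partner). Once $P^*$ is guessed, the reduction and the two inequality chains are routine, and this route also conveniently avoids having to argue separately that an optimal $\tilde\Sigma_{\boldsymbol f}$ commutes with $\Sigma$.
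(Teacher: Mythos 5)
Your proposal is correct, but it reaches the optimal value by a genuinely different route than the paper. The paper's proof is constructive: after the change of variables $\overline{\Sigma}_{\boldsymbol{f}}=\Sigma^{1/2}\tilde{\Sigma}_{\boldsymbol{f}}\Sigma^{1/2}$ it parametrizes by eigenvalues and eigenvectors, writes the constraint as $\sum_i c_i\overline{\lambda}_i\le 1$ with $c_i=\overline{u}_i^{\top}\Sigma^{-1}\overline{u}_i$, solves the resulting linear program over the ordered eigenvalues by an exchange argument (showing the optimum has its top $\ell$ eigenvalues equal and the rest zero), then minimizes $\sum_{i\le\ell}c_i$ over orthonormal frames via Fan's characterization, and finally optimizes over $\ell$. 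You instead certify optimality: the candidate \eqref{eq: optimal prior maximin regert mse case} gives the lower bound $a_{\ell^*}$, and the water-filling matrix $P^{*}=V(\Sigma)\diag\left(\min\{1,a_{\ell^*}/\lambda_i(\Sigma)\}\right)V(\Sigma)^{\top}$ --- which lies in the fantope $\{0\preceq P\preceq I_d,\ \Tr P=d-r\}$ and satisfies $P^{*}\preceq a_{\ell^*}\Sigma^{-1}$ exactly because \eqref{eq: condition on optuimal index} is equivalent to $\lambda_{\ell^*}(\Sigma)\ge a_{\ell^*}\ge\lambda_{\ell^*+1}(\Sigma)$ --- yields the matching upper bound $\sum_{i=r+1}^{d}\lambda_i(M)\le\Tr[P^{*}M]\le a_{\ell^*}\Tr[\Sigma^{-1}M]\le a_{\ell^*}$ for every feasible $M$. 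What your route buys is that you never have to argue that an optimizer is simultaneously diagonalizable with $\Sigma$ or has equal nonzero eigenvalues; what it costs is that the certificate must be guessed, whereas the paper's two-stage optimization derives the structure of \eqref{eq: optimal prior maximin regert mse case} rather than positing it. The combinatorial part (the two inequalities in \eqref{eq: condition on optuimal index} are exactly local maximality of $a_\ell$, and unimodality makes every local maximizer global) is essentially the paper's own argument via the sign of the discrete derivative. One small point there: you state the propagation only for strict inequalities ("once it strictly decreases it keeps decreasing"), which by itself does not exclude a flat step followed by an increase; the same one-line computation with non-strict inequalities ($a_\ell\ge\lambda_{\ell+1}(\Sigma)$ gives $S_{\ell+1}\le(\ell+1-r)/\lambda_{\ell+1}(\Sigma)$, hence $a_{\ell+1}\ge\lambda_{\ell+1}(\Sigma)\ge\lambda_{\ell+2}(\Sigma)$) closes this and delivers precisely the unimodality you need.
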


\begin{proof}
Let $\overline{\Sigma}_{\boldsymbol{f}}=\Sigma^{1/2}\tilde{\Sigma}_{\boldsymbol{f}}\Sigma^{1/2}$,
let $\overline{\Sigma}_{\boldsymbol{f}}=\overline{U}_{\boldsymbol{f}}\overline{\Lambda}_{\boldsymbol{f}}\overline{U}_{\boldsymbol{f}}^{\top}$
be its eigenvalue decomposition, and, for brevity, denote $\overline{\lambda}_{i}\equiv\lambda_{i}(\overline{\Sigma}_{\boldsymbol{f}})$.
Then, the trace operation appearing in the constraint of \eqref{eq: sum of smallest eig optimization}
can be written as
\begin{align}
\Tr[\tilde{\Sigma}_{\boldsymbol{f}}] & =\Tr\left[\Sigma^{-1/2}\overline{\Sigma}_{\boldsymbol{f}}\Sigma^{-1/2}\right]\\
 & =\Tr\left[\Sigma^{-1/2}\overline{U}_{\boldsymbol{f}}\overline{\Lambda}_{\boldsymbol{f}}\overline{U}_{\boldsymbol{f}}^{\top}\Sigma^{-1/2}\right]\\
 & =\Tr\left[\Sigma^{-1/2}\left(\sum_{i=1}^{d}\lambda_{i}\overline{u}_{i}\overline{u}_{i}^{\top}\right)\Sigma^{-1/2}\right]\\
 & =\sum_{i=1}^{d}\overline{\lambda}_{i}\cdot\left(\overline{u}_{i}^{\top}\Sigma^{-1}\overline{u}_{i}\right)\\
 & =\sum_{i=1}^{d}c_{i}\overline{\lambda}_{i},
\end{align}
where $\overline{u}_{i}=v_{i}(\overline{U}_{\boldsymbol{f}})$ (that
is, the $i$th column of $\overline{U}_{\boldsymbol{f}}$), and $c_{i}\dfn\overline{u}_{i}^{\top}\Sigma^{-1}\overline{u}_{i}$
(which satisfies $c_{i}>0$). Thus, the optimization problem in \eqref{eq: sum of smallest eig optimization}
over $\tilde{\Sigma}_{\boldsymbol{f}}$ is equivalent to an optimization
problem over $\{\overline{\lambda}_{i},\overline{u}_{i}\}_{i\in[d]},$
given by
\begin{align}
v_{r}^{*}=\max_{\{\overline{u}_{i},\overline{\lambda}_{i}\}_{i\in[d]}} & \sum_{i=r+1}^{d}\overline{\lambda}_{i}\nonumber \\
\st & \sum_{i=1}^{d}c_{i}\overline{\lambda}_{i}\leq1,\nonumber \\
 & c_{i}=\overline{u}_{i}^{\top}\Sigma^{-1}\overline{u}_{i},\nonumber \\
 & \overline{u}_{i}^{\top}\overline{u}_{j}=\delta_{ij},\nonumber \\
 & \overline{\lambda}_{1}\geq\overline{\lambda}_{2}\geq\cdots\geq\overline{\lambda}_{d}\geq0.\label{eq: sum of smallest eig optimization alternative}
\end{align}
To solve the optimization problem \eqref{eq: sum of smallest eig optimization alternative},
let us fix feasible $\{\overline{u}_{i}\}_{i\in[d]}$, so that $\{c_{i}\}_{i\in[d]}$
are fixed too. This results the problem
\begin{align}
v_{r}^{*}(\{\overline{u}_{i}\})\equiv v_{r}^{*}(\{c_{i}\})=\max_{\{\overline{\lambda}_{i}\}_{i\in[d]}} & \sum_{i=r+1}^{d}\overline{\lambda}_{i}\nonumber \\
\st & \sum_{i=1}^{d}c_{i}\overline{\lambda}_{i}\leq1,\nonumber \\
 & \overline{\lambda}_{1}\geq\overline{\lambda}_{2}\geq\cdots\geq\overline{\lambda}_{d}\geq0.\label{eq: sum of smallest eig optimization alternative - only lambda}
\end{align}
The objective function of \eqref{eq: sum of smallest eig optimization alternative - only lambda}
is linear in $\{\overline{\lambda}_{i}\}_{i\in[d]}$ and its constraint
set is a convex bounded polytope. So the solution to \eqref{eq: sum of smallest eig optimization alternative - only lambda}
must be obtained on the boundary of the constraint set. Clearly, the
optimal value satisfies $v_{r}^{*}(\{c_{i}\})\geq0$, and thus the
solution $\{\overline{\lambda}_{i}^{*}\}_{i\in[d]}$ must be obtained
when the constraint $\sum_{i=1}^{d}c_{i}\overline{\lambda}_{i}\leq1$
is satisfied with equality. Indeed, if this is not the case then one
may scale all $\overline{\lambda}_{i}^{*}$ by a constant larger than
$1$, and obtain larger value of the objective, while still satisfying
the constraint. 

To find the optimal solution to \eqref{eq: sum of smallest eig optimization alternative - only lambda},
we consider feasible points for which $\ell\dfn\max\{i\in[d]\colon\overline{\lambda}_{i}>0)$
is fixed. Let $\{\overline{\lambda}_{i}^{*}\}_{i\in[d]}$ be the optimal
solution of \eqref{eq: sum of smallest eig optimization alternative - only lambda},
under the additional constraint that $\overline{\lambda}_{\ell+1}=\cdots=\overline{\lambda}_{d}=0$.
We next prove that $\overline{\lambda}_{1}^{*}=\cdots=\overline{\lambda}_{\ell}^{*}$
must hold. To this end, assume by contradiction that there exists
$j\in[\ell]$ so that $\overline{\lambda}_{j-1}^{*}>\overline{\lambda}_{j}^{*}>0$.
There are two cases to consider, to wit, whether $j-1<r+1$ and so
only $\overline{\lambda}_{j}$ appears in the objective of \eqref{eq: sum of smallest eig optimization alternative - only lambda},
or, otherwise, $j-1\geq r+1$ and then $\overline{\lambda}_{j-1}+\overline{\lambda}_{j}$
appears in the objective of \eqref{eq: sum of smallest eig optimization alternative - only lambda}.
Assuming the first case, let $\alpha=\overline{\lambda}_{j-1}^{*}c_{j-1}+\overline{\lambda}_{j}^{*}c_{j}$
and consider the optimization problem 
\begin{align}
\max_{\hat{\lambda}_{j-1},\hat{\lambda}_{j}} & \hat{\lambda}_{j}\nonumber \\
\st & \hat{\lambda}_{j-1}c_{j-1}+\hat{\lambda}_{j}c_{j}=\alpha,\nonumber \\
 & \hat{\lambda}_{j-1}\geq\hat{\lambda}_{j}>0.
\end{align}
It is easy to verify that the optimum of this problem is $\hat{\lambda}_{j-1}^{*}=\hat{\lambda}_{j}^{*}=\frac{\alpha}{c_{j-1}+c_{j}}$.
Thus, if $\overline{\lambda}_{j-1}^{*}>\overline{\lambda}_{j}^{*}$
then one can replace this pair with $\overline{\lambda}_{j-1}^{*}=\overline{\lambda}_{j}^{*}=\hat{\lambda}_{j-1}^{*}=\hat{\lambda}_{j}^{*}$
so that the value of the constraint $\sum_{i=1}^{d}\overline{\lambda}_{i}c_{i}$
remains the same, and thus $(\overline{\lambda}_{1}^{*},\cdots,\hat{\lambda}_{j-1}^{*},\hat{\lambda}_{j}^{*},\overline{\lambda}_{j+1}^{*},\ldots\overline{\lambda}_{d}^{*})$
is a feasible point, while the objective function value of \eqref{eq: sum of smallest eig optimization alternative - only lambda}
is smaller; a contradiction. Therefore, it must hold for the first
case that $\overline{\lambda}_{j-1}^{*}=\overline{\lambda}_{j}^{*}$.
For the second case, in a similar fashion, let now $\alpha=\overline{\lambda}_{j-1}^{*}c_{j-1}+\overline{\lambda}_{j}^{*}c_{j}$,
and consider the optimization problem 
\begin{align}
\max_{\hat{\lambda}_{j-1},\hat{\lambda}_{j}} & \hat{\lambda}_{j}+\hat{\lambda}_{j-1}\nonumber \\
\st & \hat{\lambda}_{j-1}c_{j-1}+\hat{\lambda}_{j}c_{j}=\alpha,\nonumber \\
 & \hat{\lambda}_{j-1}\geq\hat{\lambda}_{j}>0.
\end{align}
The solution for this optimization problem is at one of the two extreme
points of the feasible interval for $\hat{\lambda}_{j}$. Since $\lambda_{j}^{*}>0$
was assumed it therefore must hold that $\hat{\lambda}_{j-1}^{*}=\hat{\lambda}_{j}^{*}$,
and hence also $\overline{\lambda}_{j-1}^{*}=\overline{\lambda}_{j}^{*}$.
Thus, $\lambda_{j-1}^{*}<\lambda_{j}^{*}$ leads to a contradiction.
From the above, we deduce that the optimal solution of \eqref{eq: sum of smallest eig optimization alternative - only lambda}
under the additional constraint that $\overline{\lambda}_{\ell+1}=\cdots=\overline{\lambda}_{d}=0$
is
\begin{align}
 & \overline{\lambda}_{1}^{*}=\cdots=\overline{\lambda}_{\ell}^{*}=\frac{1}{\sum_{i=1}^{\ell}c_{i}}\\
 & \overline{\lambda}_{\ell+1}^{*}=\cdots=\overline{\lambda}_{d}^{*}=0,
\end{align}
and that the optimal value is $\frac{\ell-r}{\sum_{i=1}^{\ell}c_{i}}$.
Since $\ell\in[d]\backslash[r]$ can be arbitrarily chosen, we deduce
that the value of \eqref{eq: sum of smallest eig optimization alternative - only lambda}
is
\begin{equation}
v^{*}(\{c_{i}\})=\max_{\ell\in[d]\backslash[r]}\frac{\ell-r}{\sum_{i=1}^{\ell}c_{i}}.\label{eq: optimal value after eigenvalue optimization}
\end{equation}
For any given $\ell\in[d]\backslash[r]$, we may now optimize over
$\{\overline{u}_{i}\}$, which from \eqref{eq: optimal value after eigenvalue optimization}
is equivalent to minimizing $\sum_{i=1}^{\ell}c_{i}$. It holds that
\begin{align}
\min_{\{\overline{u}_{i}\}}\sum_{i=1}^{\ell}c_{i} & =\min_{\{\overline{u}_{i}\colon\overline{u}_{i}^{\top}\overline{u}_{j}=\delta_{ij}\}}\sum_{i=1}^{\ell}\overline{u}_{i}^{\top}\Sigma^{-1}\overline{u}_{i}\\
 & =\min_{\{\overline{u}_{i}\colon\overline{u}_{i}^{\top}\overline{u}_{j}=\delta_{ij}\}}\Tr\left[\Sigma^{-1}\sum_{i=1}^{\ell}\overline{u}_{i}\overline{u}_{i}^{\top}\right]\\
 & \trre[=,a]\min_{\grave{U}\in\mathbb{R}^{d\times\ell}\colon\grave{U}^{\top}\grave{U}=I_{\ell}}\Tr\left[\Sigma^{-1}\grave{U}\grave{U}^{\top}\right]\\
 & =\min_{\grave{U}\in\mathbb{R}^{d\times\ell}\colon\grave{U}^{\top}\grave{U}=I_{\ell}}\Tr\left[\grave{U}^{\top}\Sigma^{-1}\grave{U}\right]\\
 & \trre[=,b]\sum_{i=1}^{\ell}\frac{1}{\lambda_{i}(\Sigma)},\label{eq: optimization of sum of orthogonal quadratic forms}
\end{align}
where in $(a)$ $\grave{U}\in\mathbb{R}^{d\times\ell}$ whose $\ell$
columns are $\{\overline{u}_{i}\}_{i\in[\ell]}$ and $\grave{U}^{\top}\grave{U}=I_{\ell}$,
and in $(b)$ we have used \emph{\citet{fan1949theorem}'s variational
characterization} \citep[Corollary 4.3.39.]{horn2012matrix} (see
Appendix \ref{sec:Useful-mathematical-results}). Substituting back
to \eqref{eq: optimal value after eigenvalue optimization} results
that 
\[
v_{r}^{*}=\max_{\ell\in[d]\backslash[r]}\frac{\ell-r}{\sum_{i=1}^{\ell}\frac{1}{\lambda_{i}(\Sigma)}}=\max_{\ell\in[d]\backslash[r]}a_{\ell}.
\]
Let us denote that maximizer index by $\ell^{*}$. Then, Fan's characterization
is achieved by setting $\overline{U}_{\boldsymbol{f}}=V$ (so that
the $\ell^{*}$ columns of $\grave{U}$ are the $\ell^{*}$ eigenvectors
$v_{i}(\Sigma)$, corresponding to the $\ell^{*}$ largest eigenvalues
of $\Sigma$), so that 
\[
\overline{\Sigma}_{\boldsymbol{f}}^{*}=\left[\sum_{i=1}^{\ell^{*}}\frac{1}{\lambda_{i}(\Sigma)}\right]^{-1}\cdot V\cdot\diag\left(\underbrace{1,\ldots,1}_{\ell^{*}\,\text{terms}},0,\cdots,0\right)\cdot V^{\top},
\]
and then 
\begin{align}
\tilde{\Sigma}_{\boldsymbol{f}}^{*} & =\Sigma^{-1/2}\overline{\Sigma}_{\boldsymbol{f}}^{*}\Sigma^{-1/2}\\
 & =\left[\sum_{i=1}^{\ell^{*}}\frac{1}{\lambda_{i}(\Sigma)}\right]^{-1}\cdot V\Lambda^{-1/2}V^{\top}V\cdot\diag\left(1,\ldots,1,0,\cdots,0\right)V^{\top}V\Lambda^{-1/2}V^{\top}\\
 & =\left[\sum_{i=1}^{\ell^{*}}\frac{1}{\lambda_{i}(\Sigma)}\right]^{-1}\cdot V\cdot\diag\left(\frac{1}{\lambda_{1}(\Sigma)},\ldots,\frac{1}{\lambda_{\ell^{*}}(\Sigma)},0,\cdots,0\right)\cdot V^{\top}
\end{align}
as claimed in \eqref{eq: optimal prior maximin regert mse case}. 

To complete the proof, it remains to characterize $\ell^{*}$, which
belongs to the set possible indices maximizing $\{a_{\ell}\}_{\ell\in[d]\backslash[r]}$.
Since $\ell^{*}$ maximizes $a_{\ell}$ it must be a local maximizer,
that is, it must hold that $a_{\ell^{*}-1}\leq a_{\ell^{*}}\geq a_{\ell^{*}+1}$.
By simple algebra, these conditions are equivalent to those in \eqref{eq: condition on optuimal index}.
It remains to show that any $\ell\in[d]\backslash[r]$ which satisfies
\eqref{eq: condition on optuimal index} has the same value, and thus
any local maxima is a global maxima. We will show this by proving
that the sequence $\{a_{\ell}\}_{\ell=r}^{d}$ is \emph{unimodal},
as follows. Let $\Delta_{\ell}\dfn a_{\ell+1}-a_{\ell}$ be the discrete
derivative of $\{a_{\ell}\}_{\ell\in[d]}$, and consider the sequence
$\{\Delta_{\ell}\}_{\ell\in[d]\backslash[r]}$. We show that as $\ell$
increases from $r$ to $d$, $\{\Delta_{\ell}\}_{\ell\in[d]\backslash[r]}$
is only changing its sign at most once. To this end, we first note
that 
\begin{equation}
\Delta_{\ell}=\frac{\ell+1-r}{\sum_{i=1}^{\ell+1}\frac{1}{\lambda_{i}(\Sigma)}}-\frac{\ell-r}{\sum_{i=1}^{\ell}\frac{1}{\lambda_{i}(\Sigma)}}=\frac{\sum_{i=1}^{\ell}\frac{1}{\lambda_{i}(\Sigma)}-(\ell-r)\frac{1}{\lambda_{\ell+1}(\Sigma)}}{\left[\sum_{i=1}^{\ell+1}\frac{1}{\lambda_{i}(\Sigma)}\right]\left[\sum_{i=1}^{\ell}\frac{1}{\lambda_{i}(\Sigma)}\right]}.\label{eq: a fractional expression for the discrete derivative}
\end{equation}
Since the denominator of \eqref{eq: a fractional expression for the discrete derivative}
is strictly positive, it suffices to prove that the sequence comprised
of the numerator of \eqref{eq: a fractional expression for the discrete derivative},
to wit $\{\zeta_{\ell}\}_{\ell\in[d]\backslash[r]}$ with 
\[
\zeta_{\ell}\dfn\sum_{i=1}^{\ell}\frac{1}{\lambda_{i}(\Sigma)}-(\ell-r)\frac{1}{\lambda_{\ell+1}(\Sigma)},
\]
is only changing its sign at most once. Indeed, this claim is true
because $\zeta_{r}=\sum_{i=1}^{\ell}\frac{1}{\lambda_{i}(\Sigma)}>0$
and because $\{\zeta_{\ell}\}_{\ell\in[d]\backslash[r]}$ is a monotonic
non-increasing sequence, 
\[
\zeta_{\ell}-\zeta_{\ell+1}=(\ell-r+1)\left[\frac{1}{\lambda_{\ell+2}(\Sigma)}-\frac{1}{\lambda_{\ell+1}(\Sigma)}\right]\geq0.
\]
Therefore, $\{\zeta_{\ell}\}_{\ell\in[d]\backslash[r]}$ has at most
a single sign change (its has a positive value at $\ell=r$ and is
monotonically non-increasing with $\ell$ up to $\ell=d$), and so
is $\{\Delta_{\ell}\}_{\ell=r}^{d}$. The single sign change property
of the finite difference $\{\Delta_{\ell}\}_{\ell=r}^{d}$ is equivalent
to the fact that $\{a_{\ell}\}_{\ell=r}^{d}$ is \emph{unimodal}.
Thus, any local maximizer of $a_{\ell}$ is also a global maximizer. 
\end{proof}

\section{The Hilbert space MSE setting\label{sec:The-Hilbert-space}}

In this section, we show that the regret expressions in Section \ref{sec:The-linear-setting}
can be easily generalized to an infinite dimensional Hilbert space,
for responses with noise that is statistically independent of the
features. We still assume the MSE loss function (${\cal Y}=\mathbb{R}$
, and $\loss(y_{1},y_{2})=(y_{1}-y_{2})^{2}$), and that the predictor
is a linear function. However, we allow the the representation and
response function to be functions in a Hilbert space. As will be evident,
the resulting regret is not very different from the finite-dimensional
case. Formally, this is defined as follows:
\begin{defn}[The Hilbert space MSE setting]
\label{def: Hilbert MSE}Assume that $\boldsymbol{x}\sim P_{\boldsymbol{x}}$
is supported on a compact subset ${\cal X}\subset\mathbb{R}^{d}$,
and let $L_{2}(P_{\boldsymbol{x}})$ be the Hilbert space of functions
from ${\cal X}\to\mathbb{R}$ such that $\E[f^{2}(\boldsymbol{x})]=\int_{{\cal X}}f^{2}(\boldsymbol{x})\cdot\d P_{\boldsymbol{x}}<\infty$,
with the inner product, 
\[
\langle f,g\rangle\dfn\int_{{\cal X}}f(\boldsymbol{x})g(\boldsymbol{x})\cdot\d P_{\boldsymbol{x}}
\]
for $f,g\in L_{2}(P_{\boldsymbol{x}})$. Let $\{\phi_{j}(x)\}_{j=1}^{\infty}$
be an orthonormal basis for $L_{2}(P_{\boldsymbol{x}})$. 

A representation is comprised of a set of functions $\{\psi_{i}\}_{i\in[r]}\subset L_{2}(P_{\boldsymbol{x}})$,
$\psi_{i}\colon{\cal X}\to\mathbb{R}$, so that 
\[
{\cal R}\dfn\{\rep(x)=(\psi_{1}(x),\ldots,\psi_{r}(x))^{\top}\in\mathbb{R}^{r}\}.
\]
Let $\{\lambda_{j}\}_{j\in\mathbb{N}}$ be a positive monotonic non-increasing
sequence for which $\lambda_{j}\downarrow0$ as $j\to\infty$, and
let ${\cal F}$ be the set of functions from ${\cal X}\to\mathbb{R}$
such that given $f\in{\cal F}$, the response is given by 
\[
\boldsymbol{y}=f(\boldsymbol{x})+\boldsymbol{n}\in\mathbb{R}
\]
where
\[
f\in{\cal F}_{\{\lambda_{j}\}}\dfn\left\{ f(x)=\sum_{j=1}^{\infty}f_{j}\phi_{j}(x)\colon\{f_{j}\}_{j\in\mathbb{N}}\in\ell_{2}(\mathbb{N}),\quad\sum_{j=1}^{\infty}\frac{f_{j}^{2}}{\lambda_{j}}\leq1\right\} ,
\]
where $\boldsymbol{n}\in\mathbb{R}$ is a homoscedastic noise that
is statistically independent of $\boldsymbol{x}$ and satisfies $\E[\boldsymbol{n}]=0$.
Infinite-dimensional ellipsoids such as ${\cal F}_{\{\lambda_{j}\}}$
naturally arise in reproducing kernel Hilbert spaces (RKHS) \citep[Chapter 12]{wainwright2019high}
\citep[Chapter 16]{shalev2014understanding}, in which $\{\lambda_{j}\}$
is the eigenvalues of the kernel. In this case, the set ${\cal F}_{\{\lambda_{i}\}}=\{f\colon\|f\|_{{\cal H}}\leq1\}$
where $\|\cdot\|_{{\cal H}}$ is the norm of the RKHS ${\cal H}$.
For example, ${\cal H}$ could be the first-order Sobolev space of
functions with finite first derivative energy. 

Let the set of predictor functions be the set of linear functions
from $\mathbb{R}^{d}\to\mathbb{R}$, that is
\[
{\cal Q}\dfn\{\pre(z)=q^{\top}z=\sum_{i=1}^{r}q_{i}\cdot\psi_{i}(x),\;q\in\mathbb{R}^{r}\}.
\]
We denote the pure (resp. mixed) minimax regret as $\regret_{\pure}({\cal R},{\cal F}_{\{\lambda_{j}\}}\mid P_{\boldsymbol{x}})$
(resp. $\regret_{\mix}({\cal R},{\cal F}_{\{\lambda_{j}\}}\mid P_{\boldsymbol{x}})$).
We begin with pure strategies. 
\end{defn}

\begin{thm}
\label{thm: pure minimax regret orthogonal}For the Hilbert space
MSE setting (Definition \ref{def: Hilbert MSE})
\begin{equation}
\regret_{\pure}({\cal R},{\cal F}_{\{\lambda_{j}\}}\mid P_{\boldsymbol{x}})=\lambda_{r+1}.\label{eq: Orthogonal MSE pure minimax regret}
\end{equation}
A minimax representation is 
\[
\rep^{*}(x)=(\phi_{1}(x),\ldots,\phi_{r}(x))^{\top},
\]
and the worst case response function is $f^{*}=\sqrt{\lambda_{r+1}}\cdot\phi_{r+1}$. 
\end{thm}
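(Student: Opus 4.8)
The plan is to reduce the pure minimax regret to a classical subspace-approximation ($r$-width) computation for the ellipsoid ${\cal F}_{\{\lambda_j\}}$, transcribing the variational argument of Theorem~\ref{thm: pure minimax regret linear quadratic constraint} to $L_2(P_{\boldsymbol{x}})$. First, fix a representation $\rep(x)=(\psi_1(x),\ldots,\psi_r(x))^{\top}$ and set ${\cal M}\dfn\Span\{\psi_1,\ldots,\psi_r\}\subseteq L_2(P_{\boldsymbol{x}})$, a subspace of dimension at most $r$. Since $\boldsymbol{n}$ is independent of $\boldsymbol{x}$ with $\E[\boldsymbol{n}]=0$, the minimal loss predicting $\boldsymbol{y}$ from $\boldsymbol{x}$ is $\E[\boldsymbol{n}^2]$ (attained by $f$ itself), whereas a linear predictor of the representation attains $\E[(\boldsymbol{y}-\sum_i q_i\psi_i(\boldsymbol{x}))^2]=\E[\boldsymbol{n}^2]+\|f-\sum_i q_i\psi_i\|_{L_2}^2$; minimizing over $q\in\mathbb{R}^r$ and subtracting the baseline gives
\[
\regret(\rep,f\mid P_{\boldsymbol{x}})=\min_{g\in{\cal M}}\|f-g\|_{L_2}^2=\|f-\Pi_{\cal M}f\|_{L_2}^2,
\]
with $\Pi_{\cal M}$ the orthogonal projection onto ${\cal M}$. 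Hence $\regret_{\pure}({\cal R},{\cal F}_{\{\lambda_j\}}\mid P_{\boldsymbol{x}})=\min_{\dim{\cal M}\le r}\max_{f\in{\cal F}_{\{\lambda_j\}}}\|f-\Pi_{\cal M}f\|_{L_2}^2$.

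For the upper bound I would take ${\cal M}^*=\Span\{\phi_1,\ldots,\phi_r\}$, i.e. $\rep^*$. For $f=\sum_j f_j\phi_j\in{\cal F}_{\{\lambda_j\}}$ we have $\|f-\Pi_{{\cal M}^*}f\|_{L_2}^2=\sum_{j>r}f_j^2\le\lambda_{r+1}\sum_{j>r}f_j^2/\lambda_j\le\lambda_{r+1}\sum_{j\ge1}f_j^2/\lambda_j\le\lambda_{r+1}$, using that $\{\lambda_j\}$ is non-increasing; equality holds at $f^*=\sqrt{\lambda_{r+1}}\,\phi_{r+1}$, which has ellipsoid norm exactly $1$ (so $f^*\in{\cal F}_{\{\lambda_j\}}$) and is orthogonal to ${\cal M}^*$. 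Thus $\regret_{\pure}\le\lambda_{r+1}$, with $f^*$ the worst-case response against $\rep^*$.

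For the matching lower bound, fix any ${\cal M}$ with $\dim{\cal M}\le r$. Since $\dim\Span\{\phi_1,\ldots,\phi_{r+1}\}=r+1>r$, the intersection $\Span\{\phi_1,\ldots,\phi_{r+1}\}\cap{\cal M}^\perp$ contains a nonzero vector; normalizing, pick a unit vector $g=\sum_{j=1}^{r+1}g_j\phi_j$ there. As $\lambda_j\ge\lambda_{r+1}$ for $j\le r+1$, $\sum_{j=1}^{r+1}g_j^2/\lambda_j\le\lambda_{r+1}^{-1}\|g\|_{L_2}^2=\lambda_{r+1}^{-1}$, so $f\dfn\sqrt{\lambda_{r+1}}\,g\in{\cal F}_{\{\lambda_j\}}$; and $f\perp{\cal M}$ gives $\|f-\Pi_{\cal M}f\|_{L_2}^2=\|f\|_{L_2}^2=\lambda_{r+1}$. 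Therefore $\max_{f\in{\cal F}_{\{\lambda_j\}}}\|f-\Pi_{\cal M}f\|_{L_2}^2\ge\lambda_{r+1}$ for every admissible ${\cal M}$, which together with the upper bound yields \eqref{eq: Orthogonal MSE pure minimax regret} with minimizer $\rep^*$.

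I do not expect a serious obstacle: this is an infinite-dimensional transcription of the finite-dimensional Courant--Fischer argument. The two points requiring care are (i) the collapse of the regret to a pure $L_2$-approximation error, where independence of $\boldsymbol{n}$ (rather than merely $\E[\boldsymbol{n}\mid\boldsymbol{x}]=0$) is used so that $\E[\boldsymbol{n}^2\mid\boldsymbol{x}]$ is a constant that cancels in the difference; and (ii) the dimension-counting step producing, for an arbitrary $r$-dimensional ${\cal M}$, a function in the ellipsoid orthogonal to ${\cal M}$. Compactness of ${\cal F}_{\{\lambda_j\}}$ (which holds since $\lambda_j\downarrow0$) ensures the relevant suprema are attained, but the explicit extremizers above make this unneeded.
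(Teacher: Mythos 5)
Your proof is correct, but it takes a genuinely different route from the paper's. The paper proceeds by finite-dimensional approximation: it first shows (Proposition \ref{prop: approximation of orthogonal expansion with finite dimension}) that the Hilbert-space regret is the limit as $d\uparrow\infty$ of the regret for the $d$-dimensional slices ${\cal F}_{\{\lambda_j\}}^{(d)}$ and restricted representations ${\cal R}^{(d)}$, then computes the finite-dimensional regret (Lemma \ref{lem: LS solution for predictor orthogonal expansion}), observes that it coincides with the linear MSE setting with $\Sigma_{\boldsymbol{x}}=I_d$ and $S=\diag(\lambda_1,\ldots,\lambda_d)$, invokes Theorem \ref{thm: pure minimax regret linear quadratic constraint}, and lets $d\uparrow\infty$ using $\lambda_{d+1}\downarrow0$. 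You instead argue entirely inside $L_2(P_{\boldsymbol{x}})$: the regret collapses to $\|f-\Pi_{{\cal M}}f\|_{L_2}^2$ for the (finite-dimensional, hence closed) span ${\cal M}$ of the representation functions, the upper bound follows from the ellipsoid constraint with ${\cal M}^*=\Span\{\phi_1,\ldots,\phi_r\}$, and the lower bound from a dimension count producing a unit vector of $\Span\{\phi_1,\ldots,\phi_{r+1}\}$ orthogonal to any candidate ${\cal M}$ of dimension at most $r$ --- in effect the classical Kolmogorov $r$-width computation for ellipsoids, i.e., an infinite-dimensional Courant--Fischer argument carried out by hand. Your route buys a shorter, self-contained proof with no limiting argument and no appeal to the finite-dimensional theorem; the paper's route buys reuse of the already-established machinery, and its approximation proposition is what also carries the mixed-strategy result (Theorem \ref{thm: mixed minimax regret orthogonal}), which your direct argument does not immediately cover. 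One minor remark: independence of $\boldsymbol{n}$ is not actually needed for your cancellation step --- $\E[\boldsymbol{n}\mid\boldsymbol{x}]=0$ already kills the cross term, and the baseline $\E[\E[\boldsymbol{n}^2\mid\boldsymbol{x}]]$ is a constant that cancels in the regret difference --- so your argument is in fact slightly more general than you claim.
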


We now turn to the minimax representation in mixed strategies.
\begin{thm}
\label{thm: mixed minimax regret orthogonal}For the Hilbert space
MSE setting (Definition \ref{def: Hilbert MSE})

\begin{equation}
\regret_{\mix}({\cal R},{\cal F}_{\{\lambda_{j}\}}\mid P_{\boldsymbol{x}})=\frac{\ell^{*}-r}{\sum_{i=1}^{\ell^{*}}\frac{1}{\lambda_{i}}},\label{eq: Orthogonal MSE mixed minimax regret}
\end{equation}
where $\ell^{*}$ is defined as \eqref{eq: condition on optimal rank of least favorable covariance matrix}
of Theorem \ref{thm: mixed minimax regret linear quadratic constraint}
(with the replacement $d\to\mathbb{N}_{+}$). Let $\{\boldsymbol{b}_{j}\}_{i=1}^{\infty}$
be an IID sequence of Rademacher random variables, $\P[\boldsymbol{b}_{i}=1]=\P[\boldsymbol{b}_{i}=-1]=1/2$.
Then, a least favorable prior $\boldsymbol{f}^{*}$ is 
\[
\boldsymbol{f}_{i}^{*}=\begin{cases}
\boldsymbol{b}_{i}\cdot\frac{1}{\sqrt{\sum_{i=1}^{\ell^{*}}\frac{1}{\lambda_{i}}}}, & 1\leq i\leq\ell_{*}\\
0, & i\geq\ell_{*}+1
\end{cases},
\]
and a law of minimax representation is to choose 
\[
\boldsymbol{\rep}^{*}(x)=\{\phi_{{\cal I}_{j}}(x)\}_{j=1}^{r}
\]
with probability $p_{j}$ , $j\in[{\ell^{*} \choose r}]$, defined
as in Theorem \ref{thm: mixed minimax regret linear quadratic constraint}.
\end{thm}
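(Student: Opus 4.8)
The plan is to observe that, once every function is written in the orthonormal basis $\{\phi_j\}$ (so a representation function $\psi_i=\sum_j R_{ji}\phi_j$ is identified with its coefficient vector $R_{\cdot i}\in\ell_2$, and $f=\sum_j f_j\phi_j$ with $(f_j)\in\ell_2$), the Hilbert space MSE setting of Definition \ref{def: Hilbert MSE} is literally the linear MSE setting of Definition \ref{def: linear MSE} over the sequence space $\ell_2(\mathbb{N})$, with the feature covariance replaced by the identity (because $\E[\phi_i(\boldsymbol{x})\phi_j(\boldsymbol{x})]=\langle\phi_i,\phi_j\rangle=\delta_{ij}$) and the matrix $S$ replaced by the diagonal operator $\Lambda\dfn\diag(\lambda_1,\lambda_2,\ldots)$ (because $f\in{\cal F}_{\{\lambda_j\}}$ reads $\sum_j f_j^2/\lambda_j\le1$, i.e. $\|(f_j)\|_\Lambda\le1$). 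As in Lemma \ref{lem: LS solution for predictor}, since the noise is independent of $\boldsymbol{x}$ it adds the same constant $\E[\boldsymbol{n}^2]$ to both terms of the regret, so the pointwise regret of $\rep=(\psi_1,\ldots,\psi_r)$ against $f$ is the squared $L_2(P_{\boldsymbol{x}})$ distance from $f$ to $\Span\{\psi_1,\ldots,\psi_r\}$; hence a randomized representation is a random $r$-dimensional subspace $\boldsymbol{W}\subset\ell_2$ and, exactly as in the derivation of \eqref{eq: minimax mixed value linear MSE quadratic response direct}, $\regret_{\mix}=\min_{\pl(\boldsymbol{W})}\lambda_{\max}\bigl(\Lambda^{1/2}\E[\Pi_{\boldsymbol{W}^\perp}]\Lambda^{1/2}\bigr)$, where the operator inside is compact, self-adjoint and positive since $\Lambda^{1/2}$ is compact ($\lambda_j\downarrow0$).

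From here I would replay the two-step argument behind Theorem \ref{thm: mixed minimax regret linear quadratic constraint}. In the maximin step, a prior $\pl(\boldsymbol{f})$ on ${\cal F}_{\{\lambda_j\}}$ has a trace-class covariance operator $\Sigma_{\boldsymbol{f}}=\E[\boldsymbol{f}\boldsymbol{f}^\top]$ (indeed $\|\boldsymbol{f}\|_2^2\le\lambda_1\sum_j\boldsymbol{f}_j^2/\lambda_j\le\lambda_1$ a.s.), and $\tilde\Sigma_{\boldsymbol{f}}\dfn\Lambda^{-1/2}\Sigma_{\boldsymbol{f}}\Lambda^{-1/2}$ is positive and trace-class with $\Tr\tilde\Sigma_{\boldsymbol{f}}=\E[\boldsymbol{f}^\top\Lambda^{-1}\boldsymbol{f}]\le1$ (well defined because $\Lambda^{-1/2}\boldsymbol{f}\in\ell_2$ a.s., which is exactly the constraint $f\in{\cal F}_{\{\lambda_j\}}$); conversely the Rademacher construction from the proof of Theorem \ref{thm: mixed minimax regret linear quadratic constraint} realizes any such $\tilde\Sigma_{\boldsymbol{f}}$ by a law supported on ${\cal F}_{\{\lambda_j\}}$ (it stays in the ellipsoid since $\sum_i\tilde\lambda_i\le1$). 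The variational characterizations of Appendix \ref{sec:Useful-mathematical-results} hold verbatim for the compact self-adjoint operators involved, so minimizing over $\boldsymbol{W}$ gives $\sum_{i>r}\lambda_i(\tilde\Sigma_{\boldsymbol{f}}\Lambda)$, and the maximin value is $\sup\{\sum_{i>r}\lambda_i(\tilde\Sigma_{\boldsymbol{f}}\Lambda):\tilde\Sigma_{\boldsymbol{f}}\succeq0,\ \Tr\tilde\Sigma_{\boldsymbol{f}}\le1\}$. This is the countable-dimensional instance of Lemma \ref{lem: minimal eigs maximization under modified trace constraints} with $\Sigma\to\Lambda$, yielding the value $a_{\ell^*}\dfn(\ell^*-r)/\sum_{i=1}^{\ell^*}\lambda_i^{-1}$, the optimal $\tilde\Sigma_{\boldsymbol{f}}^*$ of \eqref{eq: maximin Sigma_f}, and hence (via the same Rademacher construction) the least favorable prior $\boldsymbol{f}^*$ claimed in the statement.

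For the minimax step I would, as in Theorem \ref{thm: mixed minimax regret linear quadratic constraint}, take $\boldsymbol{W}$ to be deterministically aligned with the $\phi_j$ basis and randomize only \emph{which} $r$ of the first $\ell^*$ coordinates lie in $\boldsymbol{W}$: the objective then becomes $\sup_j\E[\boldsymbol{a}_j]\lambda_j$ with $\boldsymbol{a}_j\equiv1$ for $j>\ell^*$, and imposing $\E[\boldsymbol{a}_j]=b_j$ for $j\le\ell^*$ makes $b_j\lambda_j=a_{\ell^*}$ constant on $j\le\ell^*$ while $\sup_{j>\ell^*}\lambda_j=\lambda_{\ell^*+1}\le a_{\ell^*}$ by \eqref{eq: condition on optimal rank of least favorable covariance matrix}, so the value is $a_{\ell^*}$. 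The existence of a probability vector $p$ with support at most $\ell^*+1$ solving $\overline{A}p=\overline{b}$ is Carath\'{e}odory's theorem applied in $\mathbb{R}^{\ell^*}$, exactly as before (one checks $\overline{b}$ lies in the convex hull of $\overline{{\cal A}}$ using both inequalities defining $\ell^*$). Since maximin $\le$ minimax always, the sandwich $a_{\ell^*}=\text{maximin}\le\text{minimax}\le a_{\ell^*}$ closes, and the minimax representation $\boldsymbol{\rep}^*$ is the one stated.

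The one genuinely new ingredient --- and the step I expect to be the main obstacle --- is establishing that $\ell^*$ is \emph{finite}, so that the optimal $\tilde\Sigma_{\boldsymbol{f}}^*$ has finite rank and the countable-dimensional optimization collapses onto the finite-dimensional Lemma \ref{lem: minimal eigs maximization under modified trace constraints}. This follows because $\lambda_j\downarrow0$ forces $\lambda_j^{-1}\uparrow\infty$, hence the Ces\`{a}ro mean $\tfrac1\ell\sum_{i\le\ell}\lambda_i^{-1}\to\infty$ and $a_\ell\to0$; together with $a_{r+1}>0$ and the unimodality of $\{a_\ell\}$ established inside the proof of Lemma \ref{lem: minimal eigs maximization under modified trace constraints}, a finite maximizer exists. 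Two subsidiary points finish the picture: the supremum over trace-class $\tilde\Sigma_{\boldsymbol{f}}$ equals the supremum over finite-rank ones (truncate $\tilde\Sigma_{\boldsymbol{f}}$ to its leading eigenvectors and use trace-norm continuity of $\sum_{i>r}\lambda_i(\cdot)$, so the proof of Lemma \ref{lem: minimal eigs maximization under modified trace constraints}, which conditions on the largest index carrying a nonzero eigenvalue, applies), and $\Lambda^{-1/2}$, though unbounded, acts legitimately on every vector it is applied to. None of these is deep, but they are precisely where the argument departs from the finite-dimensional proof.
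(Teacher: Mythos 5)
Your proposal is correct, but it takes a genuinely different route from the paper. The paper never works in infinite dimensions directly: it proves Proposition \ref{prop: approximation of orthogonal expansion with finite dimension}, a sandwich argument showing that restricting both the response class to its $d$-dimensional slice ${\cal F}^{(d)}_{\{\lambda_j\}}$ and the representation class to ${\cal R}^{(d)}$ changes the minimax regret by at most $\lambda_{d+1}$ (the key observations being that an optimal representation against ${\cal F}^{(d)}_{\{\lambda_j\}}$ may be taken in ${\cal R}^{(d)}$, and that the tail of any $f\in{\cal F}_{\{\lambda_j\}}$ contributes at most $\lambda_{d+1}\sum_{j>d}f_j^2/\lambda_j\leq\lambda_{d+1}$); it then identifies the truncated problem, via Lemma \ref{lem: LS solution for predictor orthogonal expansion} and Parseval, with the linear MSE setting at $\Sigma_{\boldsymbol{x}}=I_d$ and $S=\diag(\lambda_1,\ldots,\lambda_d)$, invokes Theorem \ref{thm: mixed minimax regret linear quadratic constraint} verbatim, and lets $d\uparrow\infty$ using $\lambda_{d+1}\downarrow0$. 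You instead recast the problem once and for all in $\ell_2(\mathbb{N})$ and replay the two-step maximin/minimax proof with operator-theoretic substitutes: compactness of $\Lambda^{1/2}$, trace-class covariance operators, Fan-type characterizations for compact (and, for $\Lambda^{-1}$, unbounded) self-adjoint operators, a truncation/trace-norm-continuity argument to reduce to finite rank, and an explicit argument that $a_\ell\to0$ so a finite $\ell^*$ exists. Both arguments are sound; the paper's buys freedom from all infinite-dimensional spectral technicalities at the price of the approximation proposition, while yours is self-contained in the limit space and has the merit of making explicit the finiteness of $\ell^*$ and where the optimum lives, facts the paper leaves implicit in its limiting step (and in the statement's stipulation that $\ell^*$ satisfies \eqref{eq: condition on optimal rank of least favorable covariance matrix} with $d\to\mathbb{N}_{+}$). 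The operator-theoretic steps you flag (Rayleigh/Fan for compact operators, the min-max principle for $\Lambda^{-1}$, finite-rank reduction) are standard but would need to be spelled out, since Appendix \ref{sec:Useful-mathematical-results} states only the matrix versions.
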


\paragraph*{Discussion}

Despite having countably infinite possible number of representations,
the optimal representation only utilizes a \emph{finite} set of orthogonal
functions, as determined by the radius of ${\cal F}_{\{s_{i}\}}$.
The proof of Theorems \ref{thm: pure minimax regret orthogonal} and
\ref{thm: mixed minimax regret orthogonal} is obtained by reducing
the infinite dimensional problem to a $d$-dimensional problem via
an approximation argument, then showing the the finite dimensional
case is similar to the problem of Section \ref{sec:The-linear-setting},
and then taking limit $d\uparrow\infty$. 

\subsection{Proofs}

Let us denote the $d$-dimensional \emph{slice} of ${\cal F}_{\{\lambda_{j}\}}$
by 
\[
{\cal F}_{\{\lambda_{j}\}}^{(d)}\dfn\left\{ f(x)\in{\cal F}_{\{\lambda_{j}\}}\colon f_{j}=0\text{ for all }j\geq d+1\right\} .
\]
Further, let us consider the restricted representation class, in which
the representation functions $\psi_{i}(t)$ belong to the span of
the first $d$ basis functions, that is 
\[
{\cal R}^{(d)}\dfn\{\rep(x)\in{\cal R}\colon=\psi_{i}(x)\in\Span(\{\phi_{i}\}_{i\in[d]})\text{ for all }i\in[r]\}.
\]
The following proposition implies that the regret in the infinite-dimensional
Hilbert space is obtained as the limit of finite-dimensional regrets,
as the one characterized in Section \ref{sec:The-linear-setting}:
\begin{prop}
\label{prop: approximation of orthogonal expansion with finite dimension}It
holds that 
\[
\regret_{\pure}({\cal R},{\cal F}_{\{\lambda_{j}\}}\mid P_{\boldsymbol{x}})=\lim_{d\uparrow\infty}\regret_{\pure}({\cal R}^{(d)},{\cal F}_{\{\lambda_{j}\}}^{(d)}\mid P_{\boldsymbol{x}})
\]
and
\[
\regret_{\mix}({\cal R},{\cal F}_{\{\lambda_{j}\}}\mid P_{\boldsymbol{x}})=\lim_{d\uparrow\infty}\regret_{\mix}({\cal R}^{(d)},{\cal F}_{\{\lambda_{j}\}}^{(d)}\mid P_{\boldsymbol{x}}).
\]
\end{prop}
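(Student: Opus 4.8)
The plan is to turn every regret in sight into a squared $L_2(P_{\boldsymbol{x}})$-distance, prove the ``easy'' inequality by a monotonicity-and-projection argument, and prove the ``hard'' one by a single uniform tail estimate that is available precisely because $\lambda_j\downarrow 0$. First I would record the linearization: since $\boldsymbol{n}$ is independent of $\boldsymbol{x}$ with $\E[\boldsymbol{n}]=0$, the optimal predictor from $\boldsymbol{x}$ has risk $\E[\boldsymbol{n}^2]$, and the orthogonality principle (the Hilbert-space analogue of Lemma \ref{lem: LS solution for predictor}) gives, for $\rep$ with component functions $\psi_1,\dots,\psi_r\in L_2(P_{\boldsymbol{x}})$,
\[
\regret(\rep,f\mid P_{\boldsymbol{x}})=\bigl\|f-\Pi_\rep f\bigr\|_{L_2(P_{\boldsymbol{x}})}^2,
\]
where $\Pi_\rep$ is the orthogonal projection onto $\Span(\psi_1,\dots,\psi_r)$; this is the one place where independence of the noise is essential. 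Decomposing $\psi_i=\psi_i^{\le d}+\psi_i^{>d}$ along $\Span(\phi_1,\dots,\phi_d)$ and its orthogonal complement, and noting that every $f\in{\cal F}_{\{\lambda_j\}}^{(d)}$ is orthogonal to each $\psi_i^{>d}$, a one-line Pythagorean computation gives $\regret(\rep^{\le d},f)\le\regret(\rep,f)$ for all such $f$, where $\rep^{\le d}\in{\cal R}^{(d)}$ has components $\psi_i^{\le d}$ (the fact that $\rep^{\le d}$ may have rank below $r$ is harmless, as the regret is defined through its projection). Hence restricting the representation player to ${\cal R}^{(d)}$ costs nothing against the adversary ${\cal F}_{\{\lambda_j\}}^{(d)}$, both in pure strategies and, after taking expectations over the randomized representation, in mixed strategies, so that $\regret_{\pure}({\cal R}^{(d)},{\cal F}_{\{\lambda_j\}}^{(d)})=\regret_{\pure}({\cal R},{\cal F}_{\{\lambda_j\}}^{(d)})$ and likewise for $\regret_{\mix}$.

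With this identity, the ``$\le$'' half of each limit is immediate: ${\cal F}_{\{\lambda_j\}}^{(d)}\subseteq{\cal F}_{\{\lambda_j\}}^{(d+1)}\subseteq{\cal F}_{\{\lambda_j\}}$ makes $d\mapsto\regret_{\pure}({\cal R},{\cal F}_{\{\lambda_j\}}^{(d)})$ non-decreasing and bounded above by $\regret_{\pure}({\cal R},{\cal F}_{\{\lambda_j\}})$ (and the same for $\regret_{\mix}$), so both limits exist and do not exceed the Hilbert-space values. For the reverse inequality I would use near-optimizers. Fix $\epsilon>0$, let $\boldsymbol{\rep}_d$ be an $\epsilon$-optimal (possibly randomized) representation for the $d$-th slice, which by the previous step may be taken supported on ${\cal R}^{(d)}$, and write $L_d$ for the slice regret $\regret_{\pure}({\cal R}^{(d)},{\cal F}_{\{\lambda_j\}}^{(d)})$ (resp. $\regret_{\mix}$). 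For arbitrary $f\in{\cal F}_{\{\lambda_j\}}$ split $f=f^{(d)}+f^{>d}$ with $f^{(d)}\in{\cal F}_{\{\lambda_j\}}^{(d)}$; the ellipsoid constraint together with monotonicity of $\{\lambda_j\}$ yields the uniform tail bound $\|f^{>d}\|^2=\sum_{j>d}f_j^2\le\lambda_{d+1}\sum_{j>d}f_j^2/\lambda_j\le\lambda_{d+1}$. Since orthogonal projections are contractions, the triangle inequality in $L_2(P_{\boldsymbol{x}})$ gives $\regret(\boldsymbol{\rep}_d,f)^{1/2}\le\regret(\boldsymbol{\rep}_d,f^{(d)})^{1/2}+\lambda_{d+1}^{1/2}$; taking the expectation over $\boldsymbol{\rep}_d$, bounding the cross term by $\E[\sqrt{X}]\le\sqrt{\E X}$, and using $\E[\regret(\boldsymbol{\rep}_d,f^{(d)})]\le L_d+\epsilon$ (because $f^{(d)}$ is a legal slice adversary against $\boldsymbol{\rep}_d$), one obtains $\max_{f\in{\cal F}_{\{\lambda_j\}}}\E[\regret(\boldsymbol{\rep}_d,f)]\le(\sqrt{L_d+\epsilon}+\sqrt{\lambda_{d+1}})^2$. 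Consequently $\regret_{\mix}({\cal R},{\cal F}_{\{\lambda_j\}})\le(\sqrt{L_d+\epsilon}+\sqrt{\lambda_{d+1}})^2$, and the identical bound with $\regret_{\pure}$ follows from a deterministic $\boldsymbol{\rep}_d$; letting $d\to\infty$ and then $\epsilon\downarrow 0$ gives the missing inequality, so both limits equal the respective Hilbert-space regrets.

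I expect the main difficulty to be bookkeeping rather than conceptual. The delicate points are: verifying the linearization $\regret(\rep,f)=\|f-\Pi_\rep f\|^2$ carefully in the infinite-dimensional setting, which is exactly where homoscedasticity/independence of $\boldsymbol{n}$ is used (and why the Hilbert result is stated under a stronger noise assumption than Section \ref{sec:The-linear-setting}); handling the randomized representation, where the cross term $2\sqrt{\lambda_{d+1}}\,\E[\regret(\boldsymbol{\rep}_d,f^{(d)})^{1/2}]$ must be controlled via Jensen's inequality rather than naively; and keeping track of the $\epsilon$-slack coming from the min/sup convention. The one genuinely structural ingredient is the tail estimate $\|f-f^{(d)}\|^2\le\lambda_{d+1}$, which is just the (pre)compactness of the ellipsoid ${\cal F}_{\{\lambda_j\}}$ forced by $\lambda_j\downarrow 0$; without it the slices would not exhaust ${\cal F}_{\{\lambda_j\}}$ in the limit. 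Once the proposition is in place, Theorems \ref{thm: pure minimax regret orthogonal} and \ref{thm: mixed minimax regret orthogonal} follow by recognizing each slice as the linear MSE setting with $\Sigma_{\boldsymbol{x}}=I_d$ and $S=\diag(\lambda_1,\dots,\lambda_d)$ and passing to the limit in the formulas of Theorems \ref{thm: pure minimax regret linear quadratic constraint} and \ref{thm: mixed minimax regret linear quadratic constraint}.
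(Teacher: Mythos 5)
Your proposal is correct and follows essentially the same sandwich argument as the paper: you show that against the sliced class ${\cal F}_{\{\lambda_j\}}^{(d)}$ the representation may be restricted to ${\cal R}^{(d)}$ at no cost, and you control the worst case over the full ellipsoid by the same tail estimate $\sum_{j>d}f_j^2\leq\lambda_{d+1}$, then let $d\uparrow\infty$. The only differences are presentational --- you argue via projections and the triangle inequality (giving the slightly lossier bound $(\sqrt{L_d+\epsilon}+\sqrt{\lambda_{d+1}})^2$ where the paper's Parseval computation gives the exact additive bound $L_d+\lambda_{d+1}$), and you write out the mixed-strategy case with Jensen's inequality, which the paper omits as analogous.
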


\begin{proof}
Let $\{c_{ij}\}_{j\in\mathbb{N}}$ be the coefficients of the orthogonal
expansion of $\psi_{i}$, $i\in[r]$, that is, $\psi_{i}=\sum_{j=1}^{\infty}c_{ij}\phi_{j}$.
With a slight abuse of notation, we also let $c_{i}\dfn(c_{i1},c_{i2}\ldots)\in\ell_{2}(\mathbb{N})$.
We use a sandwich argument. On one hand, 
\begin{align}
\regret_{\pure}({\cal R},{\cal F}_{\{\lambda_{j}\}}\mid P_{\boldsymbol{x}}) & =\min_{\rep\in{\cal R}}\max_{f\in{\cal F}_{\{\lambda_{j}\}}}\regret(\rep,f)\\
 & \geq\min_{\rep\in{\cal R}}\max_{f\in{\cal F}_{\{\lambda_{j}\}}^{(d)}}\regret(\rep,f)\\
 & \trre[=,*]\min_{\rep\in{\cal R}^{(d)}}\max_{f\in{\cal F}_{\{\lambda_{j}\}}^{(d)}}\regret(\rep,f)\\
 & =\regret_{\pure}({\cal R}^{(d)},{\cal F}_{\{\lambda_{j}\}}^{(d)}\mid P_{\boldsymbol{x}}),\label{eq: orthogonal sandwich lower bound}
\end{align}
where $(*)$ follows from the following reasoning: For any $(\rep\in{\cal R},f\in{\cal F}_{\{\lambda_{j}\}}^{(d)})$,
\begin{align}
\regret(\rep,f)= & \min_{q\in\mathbb{R}^{r}}\E\left[\left(\sum_{j=1}^{d}f_{j}\phi_{j}(\boldsymbol{x})+\boldsymbol{n}-\sum_{j=1}^{\infty}\sum_{i=1}^{r}q_{i}c_{ij}\phi_{j}(\boldsymbol{x})\right)^{2}\right]-\E\left[\boldsymbol{n}^{2}\right]\\
 & \trre[=,a]\min_{q\in\mathbb{R}^{r}}\E\left[\left(\sum_{j=1}^{d}f_{j}\phi_{j}(\boldsymbol{x})-\sum_{j=1}^{\infty}\sum_{i=1}^{r}q_{i}c_{ij}\phi_{j}(\boldsymbol{x})\right)\right]\\
 & \trre[=,b]\min_{q\in\mathbb{R}^{r}}\sum_{j=1}^{d}\left(f_{j}-\sum_{i=1}^{r}q_{i}c_{ij}\right)^{2}+\sum_{j=d+1}^{\infty}\left(\sum_{i=1}^{r}q_{i}c_{ij}\right)^{2},
\end{align}
where here $(a)$ follows since the noise $\boldsymbol{n}$ is independent
of $\boldsymbol{x}$, and since, similarly to the finite-dimensional
case (Section \ref{sec:The-linear-setting}), the prediction loss
based on the features $x\in{\cal X}$ is $\E[\boldsymbol{n}^{2}]$,
for any given $f\in{\cal F}$, $(b)$ follows from Parseval's identity
and the orthonormality of $\{\phi_{j}\}_{j\in\mathbb{N}}$. So, 
\begin{align}
 & \min_{\rep\in{\cal R}}\max_{f\in{\cal F}_{\{\lambda_{j}\}}^{(d)}}\regret(\rep,f)\nonumber \\
 & =\min_{\{c_{ij}\}_{i\in[r],j\in\mathbb{N}}}\max_{f\in{\cal F}_{\{\lambda_{j}\}}^{(d)}}\min_{q\in\mathbb{R}^{r}}\sum_{j=1}^{d}\left(f_{j}-\sum_{i=1}^{r}q_{i}c_{ij}\right)^{2}+\sum_{j=d+1}^{\infty}\left(\sum_{i=1}^{r}q_{i}c_{ij}\right)^{2}.
\end{align}
Evidently, since $\sum_{j=d+1}^{\infty}(\sum_{i=1}^{r}q_{i}c_{ij})^{2}\geq0$,
an optimal representation may satisfy that $c_{ij}=0$ for all $j\geq d+1$.
Thus, the optimal representation belongs to ${\cal R}^{(d)}$. 

On the other hand,
\begin{align}
\regret_{\pure}({\cal R},{\cal F}_{\{\lambda_{j}\}}\mid P_{\boldsymbol{x}}) & =\min_{\rep\in{\cal R}}\max_{f\in{\cal F}_{\{\lambda_{j}\}}}\regret(\rep,f)\\
 & \leq\min_{\rep\in{\cal R}^{(d)}}\max_{f\in{\cal F}_{\{\lambda_{j}\}}}\regret(\rep,f)\\
 & \trre[\leq,*]\min_{\rep\in{\cal R}^{(d)}}\max_{f\in{\cal F}_{\{\lambda_{j}\}}^{(d)}}\regret(\rep,f)+\lambda_{d+1}\\
 & =\regret_{\pure}({\cal R}^{(d)},{\cal F}_{\{\lambda_{j}\}}^{(d)}\mid P_{\boldsymbol{x}})+\lambda_{d+1},\label{eq: orthogonal sandwich upper bound}
\end{align}
where $(*)$ follows from the following reasoning: For any $(\rep\in{\cal R}^{(d)},f\in{\cal F}_{\{\lambda_{j}\}})$,
\begin{align}
\regret(\rep,f)= & \min_{q\in\mathbb{R}^{r}}\E\left[\left(\sum_{j=1}^{\infty}f_{j}\phi_{j}(\boldsymbol{x})+\boldsymbol{n}-\sum_{j=1}^{\infty}\sum_{i=1}^{r}q_{i}c_{ij}\phi_{j}(\boldsymbol{x})\right)^{2}\right]-\E[\boldsymbol{n}^{2}]\\
 & \trre[=,a]\min_{q\in\mathbb{R}^{r}}\sum_{j=1}^{d}\left(f_{j}-\sum_{i=1}^{r}q_{i}c_{ij}\right)^{2}+\sum_{j=d+1}^{\infty}f_{j}^{2}\\
 & \trre[\leq,b]\min_{q\in\mathbb{R}^{r}}\sum_{j=1}^{d}\left(f_{j}-\sum_{i=1}^{r}q_{i}c_{ij}\right)^{2}+\lambda_{d+1},
\end{align}
where $(a)$ follows similarly to the analysis made in the previous
step, and $(b)$ follows since for any $f\in{\cal F}_{\{\lambda_{j}\}}$
it holds that 
\[
\sum_{j=d+1}^{\infty}f_{j}^{2}\leq\lambda_{d+1}\sum_{j=d+1}^{\infty}\frac{f_{j}^{2}}{\lambda_{j}}\leq\lambda_{d+1}\sum_{j=1}^{\infty}\frac{f_{j}^{2}}{\lambda_{j}}\leq\lambda_{d+1}.
\]
Combining \eqref{eq: orthogonal sandwich lower bound} and \eqref{eq: orthogonal sandwich upper bound}
and using $\lambda_{d+1}\downarrow0$ completes the proof for the
pure minimax regret. The proof for the mixed minimax is analogous
and thus is omitted. 
\end{proof}
We also use the following simple and technical lemma.
\begin{lem}
\label{lem: LS solution for predictor orthogonal expansion}For $\rep\in{\cal R}^{(d)}$
and $f\in{\cal F}^{(d)}$\footnote{Note that any $f\in{\cal F}^{(d)}$ may be uniquely identified with
a $d$-dimensional vector $f\in\mathbb{R}^{d}$. With a slight abuse
of notation we do not distinguish between the two. }
\begin{align}
\regret(R,f) & =f^{\top}\left(I_{d}-R^{\top}(RR^{\top})^{-1}R\right)f,
\end{align}
where $R\in\mathbb{R}^{r\times d}$ is the matrix of coefficients
of the orthogonal expansion of $\psi_{i}=\sum_{j=1}^{d}c_{ij}\phi_{j}$
for $i\in[r]$, so that $R(i,j)=c_{ij}$. 
\end{lem}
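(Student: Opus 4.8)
The plan is to reduce the claim to the finite-dimensional least-squares computation underlying the linear MSE setting. First I would pass to coordinates: for $\rep\in{\cal R}^{(d)}$ and $f\in{\cal F}^{(d)}$, set $\boldsymbol{\phi}\dfn(\phi_1(\boldsymbol{x}),\ldots,\phi_d(\boldsymbol{x}))^{\top}\in\mathbb{R}^d$. Since $\{\phi_j\}_{j\in[d]}$ is orthonormal in $L_2(P_{\boldsymbol{x}})$, one has $\E[\boldsymbol{\phi}\boldsymbol{\phi}^{\top}]=I_d$. Identifying $f\in{\cal F}^{(d)}$ with its coefficient vector $(f_1,\dots,f_d)^{\top}\in\mathbb{R}^d$, the response is $\boldsymbol{y}=f^{\top}\boldsymbol{\phi}+\boldsymbol{n}$, the representation is $\boldsymbol{z}=\rep(\boldsymbol{x})=R\boldsymbol{\phi}$ with $R=(c_{ij})\in\mathbb{R}^{r\times d}$, and the predictor class is linear in $\boldsymbol{z}$. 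This is precisely the linear MSE setting of Definition \ref{def: linear MSE} with $\Sigma_{\boldsymbol{x}}$ replaced by $I_d$ and representation matrix $R^{\top}\in\mathbb{R}^{d\times r}$.

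Next I would invoke Lemma \ref{lem: LS solution for predictor}. Applied with $\boldsymbol{x}\to\boldsymbol{\phi}$, $\Sigma_{\boldsymbol{x}}\to I_d$, and representation matrix $R^{\top}$, it gives $\min_{q\in\mathbb{R}^r}\E[(f^{\top}\boldsymbol{\phi}+\boldsymbol{n}-q^{\top}R\boldsymbol{\phi})^2]=\E[\E[\boldsymbol{n}^2\mid\boldsymbol{x}]]+f^{\top}(I_d-R^{\top}(RR^{\top})^{-1}R)f$. Under the homoscedastic, independent-noise assumption of Definition \ref{def: Hilbert MSE}, the minimal prediction loss of $\boldsymbol{y}$ from $\boldsymbol{x}$ itself is $\E[\boldsymbol{n}^2]=\E[\E[\boldsymbol{n}^2\mid\boldsymbol{x}]]$ (attained by the predictor $f^{\top}\boldsymbol{\phi}$, as already noted in the proof of Proposition \ref{prop: approximation of orthogonal expansion with finite dimension}). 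Subtracting this baseline from the above minimum yields $\regret(R,f)=f^{\top}(I_d-R^{\top}(RR^{\top})^{-1}R)f$, as claimed. Equivalently, one can expand $\min_q\E[(\cdots)^2]-\E[\boldsymbol{n}^2]$ directly using independence of $\boldsymbol{n}$ and Parseval's identity to reduce it to $\min_{q\in\mathbb{R}^r}\|f-R^{\top}q\|_2^2$, solve the normal equations to get the minimizer $q^*=(RR^{\top})^{-1}Rf$, and note that the residual $(I_d-\Pi)f$ with $\Pi\dfn R^{\top}(RR^{\top})^{-1}R$ satisfies $\|(I_d-\Pi)f\|_2^2=f^{\top}(I_d-\Pi)f$ since $I_d-\Pi$ is an orthogonal projection.

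As in Lemma \ref{lem: LS solution for predictor}, I would assume without loss of generality that $R$ has full row rank $r$, so that $RR^{\top}$ is invertible; otherwise the representation can be losslessly reduced to a dimension below $r$. The only points needing care are checking that the baseline loss on $\boldsymbol{x}$ equals $\E[\boldsymbol{n}^2]$ so that the noise term drops out of the regret (this uses the homoscedasticity/independence assumption of Definition \ref{def: Hilbert MSE}) and this full-rank reduction; there is otherwise no real obstacle, since the statement is a direct specialization of the finite-dimensional computation to covariance $I_d$, together with the fact that $\rep\in{\cal R}^{(d)}$ and $f\in{\cal F}^{(d)}$ eliminate all basis indices beyond $d$.
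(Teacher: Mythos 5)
Your proposal is correct and takes essentially the same route as the paper: the paper's proof is precisely your ``equivalent'' direct computation (independence of the noise, orthonormality/Parseval reducing the loss to $\min_{q\in\mathbb{R}^r}\|f-R^{\top}q\|_2^2$, and the normal equations giving $q^*=(RR^{\top})^{-1}Rf$), and your primary packaging via Lemma \ref{lem: LS solution for predictor} with second-moment matrix $I_d$ is the same computation the paper itself points to when it subsequently applies this lemma with $\Sigma_{\boldsymbol{x}}=I_d$. The only point worth making explicit when citing Lemma \ref{lem: LS solution for predictor} with features $\boldsymbol{\phi}(\boldsymbol{x})=(\phi_1(\boldsymbol{x}),\ldots,\phi_d(\boldsymbol{x}))^{\top}$ is that its derivation uses only $\E[\boldsymbol{\phi}\boldsymbol{\phi}^{\top}]=I_d$ and $\E[\boldsymbol{\phi}\boldsymbol{n}]=0$ (not $\E[\boldsymbol{\phi}]=0$), which your independence argument indeed supplies.
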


\begin{proof}
It holds that
\begin{align}
\regret(\rep,f) & =\min_{q\in\mathbb{R}^{r}}\E\left[\left(\sum_{j=1}^{d}f_{j}\phi_{j}(\boldsymbol{x})+\boldsymbol{n}-\sum_{i=1}^{r}q_{i}\sum_{j=1}^{d}c_{ij}\phi_{j}(\boldsymbol{x})\right)^{2}\right]-\E\left[\boldsymbol{n}^{2}\right]\\
 & =\min_{q\in\mathbb{R}^{r}}\E\left[\sum_{j=1}^{d}\left(f_{j}-\sum_{i=1}^{r}q_{i}c_{ij}\right)\phi_{j}(\boldsymbol{x})\right]\\
 & =\min_{q\in\mathbb{R}^{r}}\sum_{j=1}^{d}\left(f_{j}-\sum_{i=1}^{r}q_{i}c_{ij}\right)^{2}\\
 & =\min_{q\in\mathbb{R}^{r}}\sum_{j=1}^{d}\left[f_{j}^{2}-2f_{j}\sum_{i=1}^{r}q_{i}c_{ij}+\sum_{i_{1}=1}^{r}\sum_{i_{2}=1}^{r}q_{i_{1}}c_{i_{1}j}q_{i_{2}}c_{i_{2}j}\right]\\
 & =\min_{q\in\mathbb{R}^{r}}f^{\top}f-2q^{\top}Rf+q^{\top}RR^{\top}q\\
 & =f^{\top}\left(I_{d}-R^{\top}(RR^{\top})^{-1}R\right)f,
\end{align}
where the last equality is obtained by the minimizer $q^{*}=(RR^{\top})^{-1}Rf$.
\end{proof}
\begin{proof}[Proof of Theorems \ref{thm: pure minimax regret orthogonal} and \ref{thm: mixed minimax regret orthogonal}]
 By Proposition \ref{prop: approximation of orthogonal expansion with finite dimension},
we may first consider the finite dimensional case, and then take the
limit $d\uparrow\infty$. By Lemma \ref{lem: LS solution for predictor orthogonal expansion},
in the $d$-dimensional case (for both the representation and the
response function), the regret is formally as in the linear setting
under the MSE of Theorem \ref{thm: pure minimax regret linear quadratic constraint},
by setting therein $\Sigma_{\boldsymbol{x}}=I_{d}$, and $S=\diag(\lambda_{1},\ldots,\lambda_{d})$
(c.f. Lemma \ref{lem: LS solution for predictor}). The claim of the
Theorem \ref{thm: pure minimax regret orthogonal} then follows by
taking $d\uparrow\infty$ and noting that $\lambda_{d+1}\downarrow0$.
The proof of Theorem \ref{thm: mixed minimax regret orthogonal} is
analogous and thus omitted. 
\end{proof}

\section{Iterative algorithms for the Phase 1 and Phase 2 problems \label{sec:Iterative-algorithms-for}}

In this section, we describe our proposed algorithms for solving the
Phase 1 and Phase 2 problems of Algorithm \ref{alg: Iterative algorithm}.
Those algorithms are general, and only require providing gradients
of the regret function \eqref{eq: pointwise regret} and an initial
representation and a set of adversarial functions. These are individually
determined for each setting. See Section \ref{sec:Details-for-the}
for the way these are determined in Examples \ref{exa: Algorithm linear MSE}
and \ref{exa: Algorithm linear logistic}. 

\subsection{Phase 1: finding a new adversarial function }

We propose an algorithm to solve the Phase 1 problem \eqref{eq: phase 1 iterative alg},
which is again based on an iterative algorithm. We denote the function's
value at the $t$th iteration by $f_{(t)}$. The proposed Algorithm
\ref{alg: phase 1 sol} operates as follows. At initialization, the
function $f_{(1)}\in{\cal F}$ is arbitrarily initialized (say at
random), and then the optimal predictor $\pre^{(j)}$ is found for
each of the $k$ possible representations $\rep^{(j)}$, $j\in[k]$.
Then, the algorithm iteratively repeats the following steps, starting
with $t=2$: (1) Updating the function from $f_{(t-1)}$ to $f_{(t)}$
based on a gradient step of 
\[
\sum_{j\in[k]}p^{(j)}\cdot\E\left[\loss(f_{(t-1)}(\boldsymbol{x}),\pre^{(j)}(\rep^{(j)}(\boldsymbol{x})))\right],
\]
that is, the weighted loss function of the previous iteration function,
which is then followed by a projection to the feasible class of functions
${\cal F}$, denoted as $\Pi_{{\cal F}}(\cdot)$ (2) Finding the optimal
predictor $\pre^{(j)}$ for the current function $f_{(t)}$ and the
given representations $\{\rep^{(j)}\}_{j\in[k]}$, and computing the
respective loss for each representation, 
\[
L^{(j)}:=\E\left[\loss(f_{(t)}(\boldsymbol{x}),\pre^{(j)}(\rep^{(j)}(\boldsymbol{x})))\right].
\]
This loop iterates for $T_{f}$ iterations, or until convergence. 

\begin{algorithm*}
\begin{algorithmic}[1]

\Procedure{Phase 1 solver}{$\{\rep^{(j)},p^{(j)}\}_{j\in[k]},{\cal F},{\cal Q},d,r,P_{\boldsymbol{x}}$}

\State  \textbf{begin}

\State  Initialize\textbf{ }$T_{f}$ \Comment{Number of iterations
parameters}

\State  Initialize\textbf{ }$\eta_{f}$ \Comment{Step size parameter}

\State  Initialize $f_{(1)}\in{\cal F}$ \Comment{Function initialization,
e.g., at random}

\For{$j=1$ to $k$}

\State \textbf{set} $\pre^{(j)}\leftarrow\argmin_{\pre\in{\cal Q}}\E\left[\loss(f_{(1)}(\boldsymbol{x}),\pre(\rep^{(j)}(\boldsymbol{x})))\right]$

\EndFor 

\For{$t=2$ to $T_{f}$}

\State \textbf{update} $f_{(t-1/2)}=f_{(t-1)}+\eta_{f}\cdot\sum_{j\in[k]}p_{(t-1)}^{(j)}\cdot\nabla_{f}\E\left[\loss(f_{(t-1)}(\boldsymbol{x}),\pre^{(j)}(\rep^{(j)}(\boldsymbol{x})))\right]$
\newline\Comment{A gradient update of the function}

\State  \textbf{project }$f_{(t)}=\Pi_{{\cal F}}(f_{(t-1/2)})$ \Comment{Projection
on the class ${\cal F}$}

\For{$j=1$ to $k$}

\State \textbf{set} $\pre^{(j)}\leftarrow\argmin_{\pre\in{\cal Q}}\E\left[\loss(f_{(t)}(\boldsymbol{x}),\pre(\rep^{(j)}(\boldsymbol{x})))\right]$
\Comment{Update of predictors}

\State \textbf{set} $L^{(j)}\leftarrow\E\left[\loss(f_{(t)}(\boldsymbol{x}),\pre^{(j)}(\rep^{(j)}(\boldsymbol{x})))\right]$\Comment{Compute
loss of each representation}

\EndFor

\EndFor

\State \textbf{return $f_{(T)}$, and the regret $\sum_{j\in[k]}p^{(j)}\cdot L^{(j)}$}

\EndProcedure

\end{algorithmic}

\caption{A procedure for finding a new function via the solution of \eqref{eq: phase 1 iterative alg}
\label{alg: phase 1 sol}}
\end{algorithm*}

\paragraph*{Design choices and possible variants of the basic algorithm}

At initialization, we have chosen a simple random initialization for
$f_{(1)}$, but it may also be initialized based on some prior knowledge
of the adversarial function. For the update of the predictors, we
have specified a full computation of the optimal predictor, which
can be achieved in practice by running another iterative algorithm
such as stochastic gradient descent (SGD) until convergence. If this
is too computationally expensive, the number of gradient steps may
be limited. The update of the function is done via projected SGD with
a constant step size $\eta_{f}$, yet it is also possible to modify
the step size with the iteration, e.g., the common choice $\eta_{f}/\sqrt{t}$
at step $t$ \citep{hazan2016introduction}. Accelerated algorithms,
e.g., moment-based, may also be deployed. 

\paragraph*{Convergence analysis }

A theoretical analysis of the convergence properties of the algorithm
appears to be challenging. Evidently, this is a minimax game between
the response player and a player cooperating with the representation
player, which optimizes the prediction rule in order to minimize the
loss. This is, however, not a concave-convex game. As described in
Appendix \ref{sec:Additional-related-work}, even concave-convex games
are not well understood at this point. We thus opt to validate this
algorithm numerically. 

\paragraph*{Running-time complexity analysis}

Algorithm \ref{alg: phase 1 sol} runs for a fixed number of iterations
$T_{f}$, accepts $k$ representations, and makes $kT_{f}$ updates.
Each update is comprised from a gradient step of for the adversarial
function (cost $C_{1})$, and optimization of the predictor (cost
$C_{2})$. So the total computation complexity is $kT_{f}\cdot(C_{1}+C_{2})$.
The most expensive part is the optimization of the predictor $C_{2}$,
and this can be significantly reduced by running a few gradients steps
of the predictor instead of a full optimization. If we take $g$ gradient
steps then $C_{2}$ is replaced by $C_{1}g$ and the total computational
cost is $kT_{f}(g+1)C_{1}$. 

\subsection{Phase 2: finding a new representation }

We propose an iterative algorithm to solve the Phase 2 problem \eqref{eq: phase 2 iterative alg},
and thus finding a new representation $\rep^{(k+1)}$. To this end,
we first note that the objective function in \eqref{eq: phase 2 iterative alg}
can be separated into a part that depends on existing representations
and a part that depends on the new one, specifically, as
\begin{align}
 & \sum_{j_{1}\in[k]}\sum_{j_{2}\in[m_{0}+k]}p^{(j_{1})}\cdot o^{(j_{2})}\cdot\E\left[\loss(f^{(j_{2})}(\boldsymbol{x}),\pre^{(j_{1},j_{2})}(\rep^{(j_{1})}(\boldsymbol{x})))\right]\nonumber \\
 & \hphantom{===}+\sum_{j_{2}\in[m_{0}+k]}p^{(k+1)}\cdot o^{(j_{2})}\cdot\E\left[\loss(f^{(j_{2})}(\boldsymbol{x}),\pre^{(k+1,j_{2})}(\rep^{(k+1)}(\boldsymbol{x})))\right]\nonumber \\
 & =\sum_{j_{1}\in[k]}\sum_{j_{2}\in[m_{0}+k]}p^{(j_{1})}\cdot o^{(j_{2})}\cdot L^{(j_{1},j_{2})}\nonumber \\
 & +\sum_{j_{2}\in[m_{0}+k]}p^{(k+1)}\cdot o^{(j_{2})}\cdot\E\left[\loss(f^{(j_{2})}(\boldsymbol{x}),\pre^{(k+1,j_{2})}(\rep^{(k+1)}(\boldsymbol{x})))\right],\label{eq: phase 2 objective decomposition to old and new}
\end{align}
where 
\[
L^{(j_{1},j_{2})}:=\E\left[\loss(f^{(j_{2})}(\boldsymbol{x}),\pre^{(j_{1},j_{2})}(\rep^{(j_{2})}(\boldsymbol{x})))\right],
\]
and the predictors $\{\pre^{(j_{1},j_{2})}\}_{j_{1}\in[k],j_{2}\in[m_{0}+k]}$
can be optimized independently of the new representation $\rep^{(k+1)}$.
We propose an iterative algorithm for this problem, and denote the
new representation at the $t$th iteration of the algorithm by $\rep_{(t)}^{(k+1)}$.
The algorithm's input is a set of $m_{0}+k$ adversarial functions
$\{f^{(i)}\}_{i\in[m_{0}+k]}$, and the current set of representations
$\{\rep^{(j)}\}_{j\in[k]}$. Based on these, the algorithm may find
the optimal predictor for $f^{(j_{2})}$ based on the representation
$\rep^{(j_{1})}$, and thus compute the loss 
\[
L_{*}^{(j_{1},j_{2})}:=\min_{\pre\in{\cal Q}}\E\left[\loss(f^{(j_{2})}(\boldsymbol{x}),\pre(\rep^{(j_{1})}(\boldsymbol{x})))\right]
\]
for $j_{1}\in[k]$ and $j_{2}\in[m_{0}+k]$. In addition, the new
representation is arbitrarily initialized (say, at random) as $\rep_{(1)}^{(k+1)}$,
and the predictors $\{\pre_{(1)}^{(k+1,j_{2})}\}_{j_{2}\in[m_{0}+k]}$
are initialized as the optimal predictors for $f^{(j_{2})}$ given
the representation $\rep_{(1)}^{(k+1)}$. The algorithm keeps track
of weights for the representations (including the new one), which
are initialized uniformly, i.e., $p_{(1)}^{(j_{1})}=\frac{1}{k+1}$
for $j_{1}\in[k+1]$ (including a weight for the new representation).
The algorithm also keeps track of weights for the functions, which
are also initialized uniformly as $o_{(1)}^{(j_{2})}=\frac{1}{m_{0}+k}$
for $j_{2}\in[m_{0}+k]$. Then, the algorithm iteratively repeats
the following steps, starting with $t=2$: (1) Updating the new representation
from $\rep_{(t-1)}^{(k+1)}$ to $\rep_{(t)}^{(k+1)}$ based on a gradient
step of the objective function \eqref{eq: phase 2 iterative alg}
as a function of $\rep^{(k+1)}$. Based on the decomposition in \eqref{eq: phase 2 objective decomposition to old and new}
the term of the objective which depends on $\rep^{(k+1)}$ is 
\[
p_{(t-1)}^{(k+1)}\sum_{j_{2}\in[m_{0}+k]}o_{(t-1)}^{(j_{2})}\cdot\E\left[\loss(f^{(j_{2})}(\boldsymbol{x}),\pre^{(k+1,j_{2})}(\rep^{(k+1)}(\boldsymbol{x})))\right],
\]
that is, the loss function of the previous iteration new representation,
weighted according to the current function weights $o_{(t-1)}^{(j_{2})}$.
Since the multiplicative factor $p_{(t-1)}^{(k+1)}$ is common to
all terms, it is removed from the gradient computation (this aids
in the choice of the gradient step). This gradient step is then possibly
followed by normalization or projection, which we denote by the operator
$\Pi_{{\cal R}}(\cdot)$. For example, in the linear case, it make
sense to normalize $\rep^{(k+1)}$ to have unity norm (in some matrix
norm of choice). After updating the new representation to $\rep_{(t)}^{(k+1)}$,
optimal predictors are found for each function, the loss is computed
\[
L_{(t)}^{(k+1,j_{2})}:=\min_{\pre\in{\cal Q}}\E\left[\loss(f^{(j_{2})}(\boldsymbol{x}),\pre(\rep_{(t)}^{(k+1)}(\boldsymbol{x})))\right]
\]
for all $j_{2}\in[m_{0}+k]$, and the optimal predictor is updated
to $\{\pre_{(t)}^{(k+1,j_{2})}\}_{j_{2}\in[m_{0}+k]}$ based on this
solution. (2) Given the current new representation $\rep_{(t)}^{(k+1)}$,
the loss matrix 
\[
\{L_{(t)}^{(j_{1},j_{2})}\}_{j_{1}\in[k],j_{2}\in[m_{0}+k]}
\]
is constructed where for $j_{1}\in[k]$ it holds that $L_{(t)}^{(j_{1},j_{2})}=L^{(j_{1},j_{2})}$
for all $t$ (i.e., the loss of previous representations and functions
is kept fixed). This is considered to be the loss matrix of a two-player
zero-sum game between the representation player and the function player,
where the representation player has $k+1$ possible strategies and
the function player has $m_{0}+k$ strategies. The weights $\{p_{(t)}^{(j_{1})}\}_{j_{1}\in[k+1]}$
and $\{o_{(t)}^{(j_{2})}\}_{j_{2}\in[m_{0}+k]}$ are then updated
according to the MWU rule. Specifically, for an \emph{inverse temperature
parameter} $\beta$ (or a \emph{regularization parameter}), the update
is given by 
\begin{equation}
p_{(t)}^{(j)}=\frac{p_{(t-1)}^{(j)}\cdot\beta^{L^{(j)}}}{\sum_{\tilde{j}\in[k]}p_{(t-1)}^{(\tilde{j})}\cdot\beta^{L^{(\tilde{j})}}}\label{eq: MUW for phase 1}
\end{equation}
for the representation weights and, analogously, by 
\[
o_{(t)}^{(j)}=\frac{o_{(t-1)}^{(j)}\cdot\beta^{-L^{(j)}}}{\sum_{\tilde{j}\in[k]}o_{(t-1)}^{(\tilde{j})}\cdot\beta^{-L^{(\tilde{j})}}}
\]
for the function weights (as the function player aims to maximize
the loss). This can be considered as a regularized gradient step on
the probability simplex, or more accurately, a \emph{follow-the-regularized-leader}
\citep{hazan2016introduction}. The main reasoning of this algorithm
is that at each iteration the weights $\{p^{(j)}\}_{j\in[k+1]}$ and
$\{o^{(j)}\}_{j\in[m_{0}+k]}$ are updated towards the solution of
the two-player zero-sum game with payoff matrix $\{-L_{(t)}^{(j_{1},j_{2})}\}_{j_{1}\in[k+1],j_{2}\in[m_{0}+k]}$.
In turn, based only on the function weights $\{o^{(j)}\}_{j\in[m_{0}+k]}$,
the new representation is updated to $\rep_{(t)}^{(k+1)}$, which
then changes the pay-off matrix at the next iteration. It is well
known that the MWU solved two-player zero-sum game \citep{freund1999adaptive},
in which the representation player can choose the weights and the
function player can choose the function. 

This loop iterates for $T_{\text{stop}}$ iterations, and then the
optimal weights are given by the average over the last $T_{\text{avg}}$
iterations \citep{freund1999adaptive}, i.e., 
\[
p_{*}^{(j)}=\frac{1}{T_{\text{avg}}}\sum_{t=T_{\text{stop}}-T_{\text{avg}}+1}^{T_{\text{stop}}}p_{(t)}^{(j)},
\]
and 
\[
o_{*}^{(j)}=\frac{1}{T_{\text{avg}}}\sum_{t=T_{\text{stop}}-T_{\text{avg}}+1}^{T_{\text{stop}}}o_{(t)}^{(j)}.
\]
In the last $T_{\rep}-T_{\text{stop}}$ iterations, only the representation
$R_{(t)}^{(k+1)}$ and the predictors are updated. The algorithm then
outputs $\rep_{(T)}^{(k+1)}$ as the new representation and the weights
$\{p_{*}^{(j)}\}_{j\in[k+1]}$. 

\begin{algorithm*}
\begin{algorithmic}[1]

\small

\Procedure{Phase 2 solver}{$\{\rep^{(j_{1})}\}_{j\in[k]},\{f^{(j_{2})}\}_{j_{2}\in[m_{0}+k]},{\cal R},{\cal F},{\cal Q},d,r,P_{\boldsymbol{x}}$}

\State  \textbf{Begin}

\State  Initialize\textbf{ }$T_{\rep},T_{\text{stop}},T_{\text{avg}}$
\Comment{Number of iterations parameters}

\State  Initialize\textbf{ }$\eta_{\rep}$ \Comment{Step size parameter}

\State  Initialize $\beta\in(0,1)$ \Comment{Inverse temperature
parameter}

\State  Initialize $f_{(1)}\in{\cal F}$ \Comment{Function initialization,
e.g., at random}

\State  Initialize $p_{(1)}^{(j)}\leftarrow0$ for $j\in[k]$ and
$p_{(1)}^{(k+1)}\leftarrow0$ \Comment{A uniform weight initialization
for the representations}

\State  Initialize $o_{(1)}^{(j_{2})}\leftarrow\frac{1}{m_{0}+k}$
for $j_{2}\in[k]$ \Comment{A uniform weight initialization for the
functions}

\For{$j_{1}=1$ to $k$}

\For{$j_{2}=1$ to $m_{0}+k$}

\State \textbf{Set} $\pre^{(j_{1},j_{2})}\leftarrow\argmin_{\pre\in{\cal Q}}\E\left[\loss(f^{(j_{2})}(\boldsymbol{x}),\pre(\rep^{(j_{1})}(\boldsymbol{x})))\right]$\newline\Comment{Optimal
predictors for existing representations and input functions }

\State \textbf{Set} $L^{(j_{1},j_{2})}\leftarrow\min_{\pre\in{\cal Q}}\E\left[\loss(f^{(j_{2})}(\boldsymbol{x}),\pre^{(j_{1},j_{2})}(\rep^{(j_{1})}(\boldsymbol{x})))\right]$\Comment{The
minimal loss}

\EndFor 

\EndFor 

\For{$j_{2}=1$ to $m_{0}+k$}

\State \textbf{Initialize} $\rep_{(1)}^{(k+1)}$ \Comment{Arbitrarily,
e.g., at random }

\State \textbf{Set} $\pre_{(1)}^{(k+1,j_{2})}\leftarrow\argmin_{\pre\in{\cal Q}}\E\left[\loss(f^{(j_{2})}(\boldsymbol{x}),\pre(\rep^{(k+1)}(\boldsymbol{x})))\right]$
for $j_{2}\in[m_{0}+k]$\newline\Comment{Optimal predictors for new
representation and input functions}

\EndFor 

\For{$t=2$ to $T_{\rep}$}

\State \textbf{Update} \Comment{A gradient update of the new representation}
\[
\rep_{(t-1/2)}^{(k+1)}=\rep_{(t-1)}^{(k+1)}+\eta_{\rep}\cdot\sum_{j_{2}\in[m_{0}+k]}o_{(t-1)}^{(j_{2})}\cdot\nabla_{\rep^{(k+1)}}\E\left[\loss(f^{(j_{2})}(\boldsymbol{x}),\pre^{(k+1,j_{2})}(\rep_{(t-1)}^{(k+1)}(\boldsymbol{x})))\right]
\]

\State \textbf{Project }$\rep_{(t)}^{(k+1)}=\Pi_{{\cal R}}(\rep_{(t-1/2)}^{(k+1)})$
\Comment{Standardization based on the class ${\cal R}$}

\For{$j=1$ to $k$}

\State \textbf{Set} $\pre^{(k+1,j_{2})}\leftarrow\argmin_{\pre\in{\cal Q}}\E\left[\loss(f^{(j_{2})}(\boldsymbol{x}),\pre(\rep_{(t)}^{(k+1)}(\boldsymbol{x})))\right]$
\newline\Comment{Update of predictors for the new representation}

\State $L_{(t)}^{(k+1,j_{2})}\leftarrow\E\left[\loss((f^{(j_{2})}(\boldsymbol{x}),\pre^{(k+1,j_{2})}(\rep_{(t)}^{(k+1)}(\boldsymbol{x})))\right]$
\Comment{Compute loss}

\EndFor

\State \textbf{Set $L_{(t)}^{(j_{1},j_{2})}\leftarrow L^{(j_{1},j_{2})}$
}for $j_{1}\in[k]$ and $j_{2}\in[m_{0}+k]$

\If{$t<T_{\text{stop}}$}

\State \textbf{Update} $p_{(t)}^{(j)}\leftarrow\nicefrac{p_{(t-1)}^{(j)}\cdot\beta^{L^{(j)}}}{\sum_{\tilde{j}\in[k]}p_{(t-1)}^{(\tilde{j})}\cdot\beta^{L^{(\tilde{j})}}}$
for $j\in[k]$ \Comment{A MWU}

\State \textbf{Update} $o_{(t)}^{(j)}\leftarrow\nicefrac{o_{(t-1)}^{(j)}\cdot\beta^{-L^{(j)}}}{\sum_{\tilde{j}\in[m_{0}+k]}o_{(t-1)}^{(\tilde{j})}\cdot\beta^{-L^{(\tilde{j})}}}$
for $j\in[m_{0}+k]$ \Comment{A MWU}

\ElsIf{$t=T_{\text{stop}}$}

\State \textbf{Update} $p_{(t)}^{(j)}=p_{(t)}^{(j)}\leftarrow\frac{1}{T_{\text{avg}}}\sum_{t=T_{\text{stop}}-T_{\text{avg}}+1}^{T_{\text{stop}}}p_{(t)}^{(j)}$
for $j\in[k]$ \newline\Comment{Optimal weights by averaging last
$T_{\text{avg}}$ iterations}

\State \textbf{Update} $o_{(t)}^{(j)}\leftarrow\frac{1}{T_{\text{avg}}}\sum_{t=T_{\text{stop}}-T_{\text{avg}}+1}^{T_{\text{stop}}}o_{(t)}^{(j)}$
for $j\in[m_{0}+k]$ \newline\Comment{Optimal weights by averaging
last $T_{\text{avg}}$ iterations}

\Else 

\State \textbf{Update} $p_{(t)}^{(j)}\leftarrow p_{(t-1)}^{(j)}$
for $j\in[k]$ \Comment{No update for the last $T-T_{\text{stop}}$
iterations}

\State \textbf{Update} $o_{(t)}^{(j)}\leftarrow o_{(t-1)}^{(j)}$
for $j\in[m_{0}+k]$ \Comment{No update for the last $T-T_{\text{stop}}$
iterations}

\EndIf

\State \textbf{Return $\rep_{(T)}^{(k+1)}$ and $\{p_{(T_{\rep})}^{(j)}\}_{j\in[k+1]}$}

\EndFor

\EndProcedure

\end{algorithmic}

\caption{A procedure for finding a new representation $\rep^{(k+1)}$ via the
solution of \eqref{eq: phase 2 iterative alg} \label{alg: phase 2 sol}}
\end{algorithm*}

\paragraph*{Design choices and possible variants of the basic algorithm}

At initialization, we have chosen a simple random initialization for
$\rep_{(1)}^{(k+1)}$, but it may also be initialized based on some
prior knowledge of the desired new representation. The initial predictors
$\{\pre_{(1)}^{(k+1,j_{2})}\}_{j_{2}\in[m_{0}+k]}$ will then be initialized
as the optimal predictors for $\rep_{(1)}^{(k+1)}$ and $\{f^{(j_{2})}\}_{j_{2}\in[m_{0}+k]}$.
We have initialized the representation and function weights uniformly.
A possibly improved initialization for the function weights is to
put more mass on the more recent functions, that is, for large values
of $j_{2}$, or to use the minimax strategy of the function player
in the two-player zero-sum game with payoff matrix $\{-L_{(t)}^{(j_{1},j_{2})}\}_{j_{1}\in[k],j_{2}\in[m_{0}+k]}$
(that is, a game which does not include the new representation). As
in the Phase 1 algorithm, the gradient update of the new representation
can be replaced by a more sophisticated algorithm, the computation
of the optimal predictors can be replaced with (multiple) update steps,
and the step size may also be adjusted. For the MWU update, we use
the proposed scaling proposed by \citet{freund1999adaptive}
\[
\beta=\frac{1}{1+\sqrt{\frac{c\ln m}{T}}}
\]
for some constant $c$. It is well known that using the last iteration
of a MWU algorithm may fail \citep{bailey2018multiplicative}, while
averaging the weights value of all iterations provides the optimal
value of a two-player zero-sum games \citep{freund1999adaptive}.
For improved accuracy, we compute the average weights over the last
$T_{\text{avg}}$ iterations (thus disregarding the initial iterations).
We then halt the weights update and let the function and predictor
update to run for $T-T_{\text{stop}}$ iterations in order to improve
the convergence of $R^{(k+1})$. Finally, the scheduling of the steps
may be more complex, e.g., it is possible that running multiple gradient
steps follows by multiple MWU steps may improve the result. 

\paragraph*{Running-time complexity analysis}

Algorithm \ref{alg: phase 2 sol} is more complicated than Algorithm
\ref{alg: phase 1 sol}, but the computational complexity analysis
is similar. It runs for $T_{R}$ iterations and the total cost is
roughly on the order of $T_{R}k^{2}gC_{1}$ (taking $g$ gradient
steps for the predictor optimization; $k^{2}$ is the number of representations,
and is controlled by the learner; $C_{1}$ is determined by the computer,
and $g$ should be large enough to assure quality results). 

\section{Details for the examples of Algorithm \ref{alg: Iterative algorithm}
\label{sec:Details-for-the} and additional experiments}

As mentioned, the solvers of the Phase 1 and Phase 2 problems of Algorithm
\ref{alg: Iterative algorithm} require the gradients of the regret
\eqref{eq: pointwise regret} as inputs, as well as initial representation
and set of adversarial functions. We next provide these details for
the examples in Section \ref{sec:An-iterative-algorithm}. The code
for the experiments was written in \texttt{Python 3.6} and is available
at \href{https://github.com/Hannibal96/paper-optimization-algorithm/tree/main}{this link}.
The optimization of hyperparameters was done using the \texttt{Optuna}
library. The hardware used is standard and detailed appear in Table
\ref{tab:Hardware-details}. 

\begin{table}

\caption{Hardware details \label{tab:Hardware-details}}

\begin{centering}
\begin{tabular}{ccc}
\hline 
CPU & RAM & GPU\tabularnewline
\hline 
\hline 
Intel i9 13900k & 64GB & RTX 3090 Ti\tabularnewline
\hline 
\end{tabular}
\par\end{centering}
\end{table}

\subsection{Details for Example \ref{exa: Algorithm linear MSE}: the linear
MSE setting}

In this setting, the expectation over the feature distribution can
be carried out analytically, and the regret is given by 
\begin{align}
\regret(\rep,f\mid\Sigma_{\boldsymbol{x}}) & =\E\left[\left(f^{\top}\boldsymbol{x}-q^{\top}R^{\top}\boldsymbol{x}\right)^{2}\right]\\
 & =f^{\top}\Sigma_{\boldsymbol{x}}f-2q^{\top}R^{\top}\Sigma_{\boldsymbol{x}}f+q^{\top}R^{\top}Rq.
\end{align}
The regret only depends on the feature distribution $P_{\boldsymbol{x}}$
via $\Sigma_{\boldsymbol{x}}$. For each run of the algorithm, the
covariance matrix $\Sigma_{\boldsymbol{x}}$ was chosen to be diagonal
with elements drawn from a log-normal distribution, with parameters
$(0,\sigma_{0})$, and $S=I_{d}$. 

\paragraph{Regret gradients }

The gradient of the regret w.r.t. the function $f$ is given by
\[
\nabla_{f}\E\left[\left(f^{\top}\boldsymbol{x}-q^{\top}R^{\top}\boldsymbol{x}\right)^{2}\right]=2f^{\top}\Sigma_{\boldsymbol{x}}-2q^{\top}R^{\top}\Sigma_{\boldsymbol{x}}
\]
and the projection on ${\cal F}_{S}$ is 
\[
\Pi_{{\cal F}}(f)=\begin{cases}
\frac{f}{\|f\|_{S}}, & \|f\|_{S}\geq1\\
f, & \|f\|_{S}<1
\end{cases}.
\]
However, we may choose to normalize by $\frac{f}{\|f\|_{S}}$ even
if $\|f\|_{S}\leq1$ since in this case the regret is always larger
if $f$ is replaced by $\frac{f}{\|f\|_{S}}$ (in other words, the
worst case function is obtained on the boundary of ${\cal F}_{S}$).
The gradient w.r.t. the predictor $q$ is given by 
\[
\nabla_{q}\E\left[\left(f^{\top}\boldsymbol{x}-q^{\top}R^{\top}\boldsymbol{x}\right)^{2}\right]=\left[-2f^{\top}\Sigma_{\boldsymbol{x}}R+2q^{\top}R^{\top}\Sigma_{\boldsymbol{x}}R\right].
\]
Finally, to derive the gradient w.r.t. $R$, let us denote $R\dfn[R_{1},R_{2},\ldots,R_{r}]\in\mathbb{R}^{d\times r}$
where $R_{i}\in\mathbb{R}^{d}$ is the $i$th column ($i\in[r]$),
and $q^{\top}=(q_{1},q_{2},\ldots,q_{r})$. Then, $q^{\top}R^{\top}\boldsymbol{x}=\sum_{i\in[d]}q_{i}R_{i}^{\top}\boldsymbol{x}$
and the loss function is 
\begin{align}
\E\left[\left(f^{\top}\boldsymbol{x}-q^{\top}R^{\top}\boldsymbol{x}\right)^{2}\right] & =\E\left[\left(f^{\top}\boldsymbol{x}-\sum_{i\in[d]}q_{i}\boldsymbol{x}^{\top}R_{i}\right)^{2}\right]\\
 & =f^{\top}\Sigma_{\boldsymbol{x}}f-2q^{\top}R^{\top}\Sigma_{\boldsymbol{x}}f+q^{\top}R^{\top}\Sigma_{\boldsymbol{x}}Rq.
\end{align}
The gradient of the regret w.r.t. $R_{k}$ is then given by
\begin{align}
\nabla_{R_{k}}\left\{ \E\left[\left(f^{\top}\boldsymbol{x}-q^{\top}R^{\top}\boldsymbol{x}\right)^{2}\right]\right\}  & =-2\E\left[\left(f^{\top}\boldsymbol{x}-q^{\top}R^{\top}\boldsymbol{x}\right)\cdot q_{k}\boldsymbol{x}^{\top}\right]\\
 & =-2q_{k}\left(f^{\top}\Sigma_{\boldsymbol{x}}-q^{\top}R^{\top}\Sigma_{\boldsymbol{x}}\right),
\end{align}
hence, more succinctly, the gradient w.r.t. $R$ is
\[
\nabla_{R}\left\{ \E\left[\left(f^{\top}\boldsymbol{x}-q^{\top}R^{\top}\boldsymbol{x}\right)^{2}\right]\right\} =-2q\left(f^{\top}\Sigma_{\boldsymbol{x}}-q^{\top}R^{\top}\Sigma_{\boldsymbol{x}}\right).
\]
We remark that in the algorithm these gradients are multiplied by
weights. We omit this term whenever the weight is common to all terms
in order to keep the effective step size constant. 

\paragraph*{Initialization }

Algorithm \ref{alg: Iterative algorithm} requires an initial representation
$\rep^{(1)}$ and an initial set of functions $\{f^{(j)}\}_{j\in[m_{0}]}$.
In the MSE setting, each function $f\in\mathbb{R}^{d}$ is also a
single column of a representation matrix $R\in\mathbb{R}^{d\times r}$.
A plausible initialization matrix $\rep^{(1)}\in\mathbb{R}^{d\times r}$
is therefore the worst $r$ functions. These, in turn, can be found
by running Algorithm \eqref{alg: Iterative algorithm} to obtain $\tilde{m}=r$
functions, by setting $\tilde{r}=1$. A proper initialization for
this run is simply an all-zero representation $\tilde{\rep}^{(1)}=0\in\mathbb{R}^{d\times1}$.
The resulting output is then $\{\tilde{\rep}_{(T)}^{(j)}\}_{j\in[r]}$
which can be placed as the $r$ columns of $\rep^{(1)}$. This initialization
is then used for Algorithm \ref{alg: Iterative algorithm}.

\paragraph*{Algorithm parameters}

The algorithm parameters used for Example \ref{exa: Algorithm linear MSE}
are shown in Table \ref{tab: MSE experiments hyper parameters}. The
parameters were optimally tuned for $\sigma_{0}=1$. 

\begin{table}
\begin{centering}
\caption{Parameters for linear MSE setting example\label{tab: MSE experiments hyper parameters}}
\par\end{centering}
\centering{}{\small{}}%
\begin{tabular*}{10cm}{@{\extracolsep{\fill}}ccccc}
\toprule 
{\small{}Parameter} & $\beta_{r}$ & $\beta_{f}$ & $\eta_{r}$ & $\eta_{f}$\tabularnewline
\midrule
\midrule 
{\small{}Value} & $0.94$ & $0.653$ & $0.713$ & $0.944$\tabularnewline
\midrule
\midrule 
{\small{}Parameter} & $T_{\rep}$ & $T_{f}$ & $T_{\text{avg}}$ & $T_{\text{stop}}$\tabularnewline
\midrule
\midrule 
{\small{}Value} & $100$ & until convergence & $10$ & $80$\tabularnewline
\bottomrule
\end{tabular*}{\small\par}
\end{table}

\paragraph*{Additional results }

The learning curve for running Algorithm \ref{alg: Iterative algorithm}
for Example \ref{exa: Algorithm linear MSE} is shown in Figure \ref{fig:The-learning-curve-linear-MSE},
which shows the improvement in regret in each iteration, for which
an additional matrix is added to the set of representations. It can
be seen that mixing roughly $10$ matrices suffice to get close to
the minimal regret attained by the algorithm, compared to the potential
number of ${d \choose r}={20 \choose 3}=1140$ representation matrices
determined by $\overline{{\cal A}}$. 

\begin{figure}
\centering{}\includegraphics[scale=0.7]{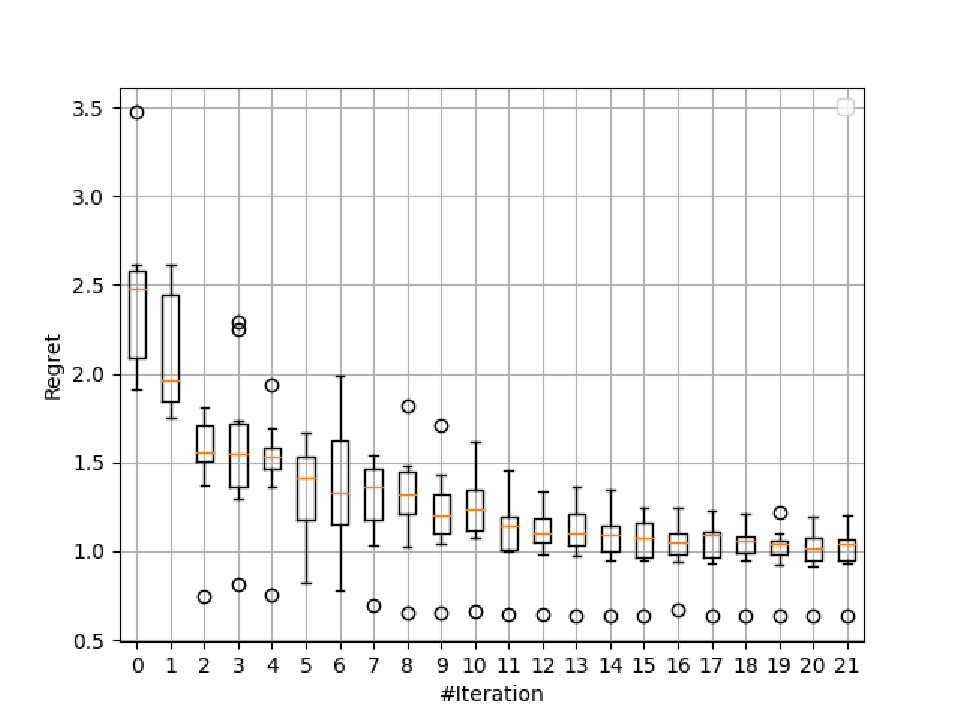}\caption{The learning curve for Algorithm \ref{alg: Iterative algorithm} in
the linear MSE setting: $d=20$, $r=3$, $\sigma=1$. \label{fig:The-learning-curve-linear-MSE}}
\end{figure}

Additional results of the accuracy of the Algorithm \ref{alg: Iterative algorithm}
in the linear MSE setting are displayed in Figure \ref{fig: alg-liner-MSE}.
The left panel of Figure \ref{fig: alg-liner-MSE} shows that the
algorithm output is accurate for small values of $r$, but deteriorates
as $r$ increases. This is because when $r$ increases then so is
$\ell^{*}$ and so is the required number of matrices in the support
of the representation rule (denoted by $m$). Since the algorithm
gradually adds representation matrices to the support, an inaccurate
convergence at an early iteration significantly affects later iterations.
One possible way to remedy this is to run each iteration multiple
times, and choose the best one, before moving on to the next one.
Another reason is that given large number of matrices in the support
(large $m$), it becomes increasingly difficult for the the MWU to
accurately converge. Since the iterations of the MWU do not converge
to the equilibrium point, but rather their average (see discussion
in Appendix \ref{sec:Additional-related-work}) this can only be remedied
by allowing more iterations for convergence (in advance) for large
values of $m$. The right panel of Figure \ref{fig: alg-liner-MSE}
shows that the algorithm output is accurate for a wide range of the
condition number of the covariance matrix. This condition number is
determined by the choice of $\sigma_{0}$, where low values typically
result covariance matrices with condition number that is close to
$1$, while high values will typically result large condition number.
The right panel shows that while the hyperparameters were tuned for
$\sigma_{0}=1$, the result is fairly accurate for a wide range of
$\sigma_{0}$ values, up to $\sigma_{0}\approx5$. Since for $Z\sim N(0,1)$
(standard normal) it holds that $\P[-2<Z<2]\approx95\%$, the typical
condition number of a covariance matrix drawn with $\sigma_{0}=5$
is roughly $\frac{e^{2\sigma_{0}}}{e^{-2\sigma_{0}}}\approx4.85\cdot10^{8}$,
which is a fairly large range. 

\begin{figure}
\begin{centering}
\includegraphics[scale=0.5]{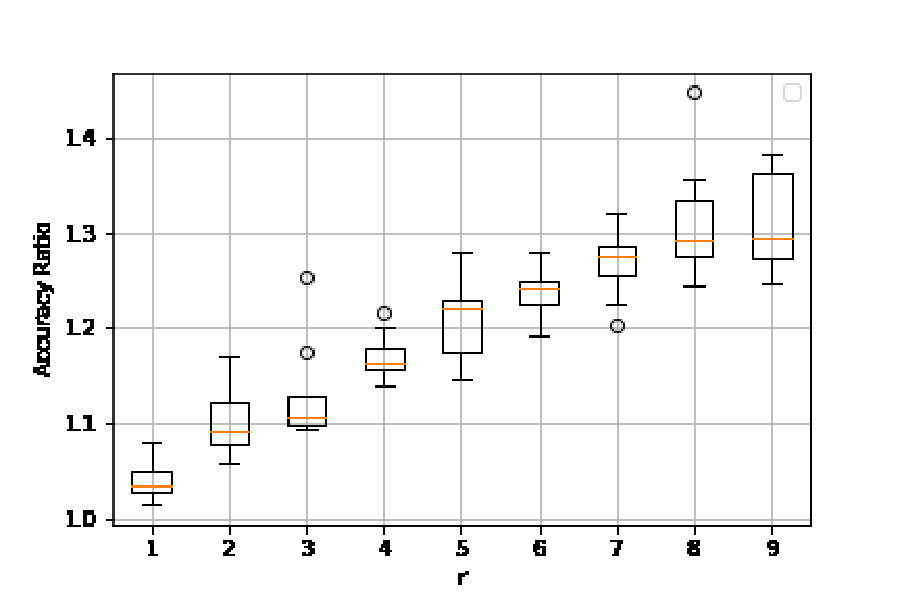}\includegraphics[scale=0.5]{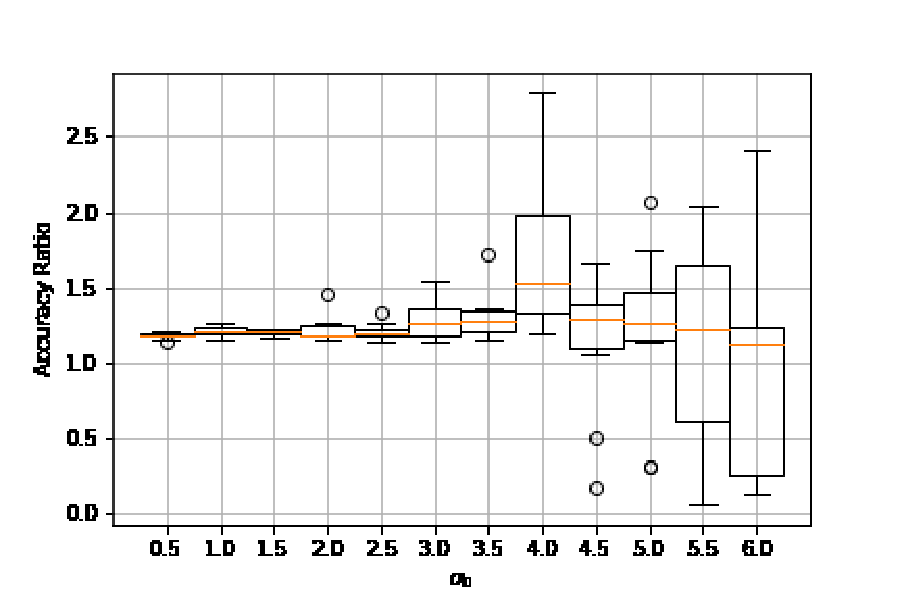}
\par\end{centering}
\caption{The ratio between the regret achieved by Algorithm \ref{alg: Iterative algorithm}
and the theoretical regret in the linear MSE setting. Left: $d=20$,
$\sigma_{0}=1$, varying $r$. Right: $r=5$, $d=20$, varying $\sigma_{0}$.\label{fig: alg-liner-MSE}}
\end{figure}

\subsection{Details for Example \ref{exa: Algorithm linear logistic}: the linear
cross-entropy setting }

In this setting,
\begin{align}
\regret(\rep,f\mid P_{\boldsymbol{x}}) & =\min_{q\in\mathbb{R}^{r}}\E\left[D_{\text{KL}}\left([1+\exp(-f^{\top}\boldsymbol{x})]^{-1}\mid\mid[1+\exp(-q^{\top}R^{\top}\boldsymbol{x})]^{-1}\right)\right],
\end{align}
and the expectation over the feature distribution typically cannot
be carried out analytically. We thus tested Algorithm \ref{alg: Iterative algorithm}
on empirical distributions of samples drawn from a high-dimensional
normal distribution. Specifically, for each run, $B=1000$ feature
vectors were drawn from an isotropic normal distribution of dimension
$d=15$. The expectations of the regret and the corresponding gradients
were then computed with respect to (w.r.t.) the resulting empirical
distributions. 

\paragraph{Regret gradients }

We use the facts that 
\[
\frac{\partial}{\partial p_{1}}D_{\text{KL}}(p_{1}\mid\mid p_{2})=\log\frac{p_{1}(1-p_{2})}{p_{2}(1-p_{1})}
\]
and 
\[
\frac{\partial}{\partial p_{2}}D_{\text{KL}}(p_{1}\mid\mid p_{2})=\frac{p_{2}-p_{1}}{p_{2}(1-p_{2})}.
\]
For brevity, let us next denote 
\[
p_{1}:=\frac{1}{1+\exp(-f^{\top}\boldsymbol{x})}
\]
and 
\[
p_{2}:=\frac{1}{1+\exp(-q^{\top}R^{\top}\boldsymbol{x})}.
\]
We next repeatedly use the chain rule for differentiation. First,
\[
\nabla_{f}p_{1}=\nabla_{f}\left[\frac{1}{1+\exp(-f^{\top}\boldsymbol{x})}\right]=\frac{\exp(-f^{\top}\boldsymbol{x})\cdot\boldsymbol{x}}{\left[1+\exp(-f^{\top}\boldsymbol{x})\right]^{2}}=p_{1}(1-p_{1})\cdot\boldsymbol{x}\cdot
\]
and 
\[
\nabla_{q}p_{2}=\nabla_{q}\left[\frac{1}{1+\exp(-q^{\top}R^{\top}\boldsymbol{x})}\right]=\frac{\exp(-q^{\top}R^{\top}\boldsymbol{x})\cdot R^{\top}\boldsymbol{x}}{\left[1+\exp(-q^{\top}R^{\top}\boldsymbol{x})\right]^{2}}=p_{2}(1-p_{2})\cdot R^{\top}\boldsymbol{x}\cdot
\]
So, assuming that $P_{\boldsymbol{x}}$ is such that the order of
differentiation and expectation may be interchanged (this can be guaranteed
using dominated/monotone convergence theorems), the gradient of the
regret w.r.t. $f$ is 
\begin{align}
\nabla_{f}\regret(\rep,f\mid P_{\boldsymbol{x}}) & =\E\left[\frac{\partial}{\partial p_{1}}D_{\text{KL}}(p_{1}\mid\mid p_{2})\times\nabla_{f}p_{1}\right]\\
 & =\E\left[\log\left(\frac{p_{1}(1-p_{2})}{p_{2}(1-p_{1})}\right)\cdot p_{1}(1-p_{1})\cdot\boldsymbol{x}\right]\\
 & =\E\left[(f^{\top}-q^{\top}R^{\top})\boldsymbol{x}\frac{\exp(-f^{\top}\boldsymbol{x})}{\left[1+\exp(-f^{\top}\boldsymbol{x})\right]^{2}}\cdot\boldsymbol{x}\right]\\
 & =\E\left[\frac{\exp(-f^{\top}\boldsymbol{x})}{\left[1+\exp(-f^{\top}\boldsymbol{x})\right]^{2}}\cdot\boldsymbol{x}^{\top}(f-Rq)\boldsymbol{x}\right].
\end{align}
Next, under similar assumptions, the gradient of the regret w.r.t.
the predictor $q$ is 
\begin{align}
\nabla_{q}\regret(\rep,f\mid P_{\boldsymbol{x}}) & =\E\left[\frac{\partial}{\partial p_{2}}D_{\text{KL}}(p_{1}\mid\mid p_{2})\times\nabla_{q}p_{2}\right]\\
 & =\E\left[\left(\frac{1}{1+\exp(-q^{\top}R^{\top}\boldsymbol{x})}-\frac{1}{1+\exp(-f^{\top}\boldsymbol{x})}\right)\cdot R^{\top}\boldsymbol{x}\right].
\end{align}
Finally, as for the MSE case, to derive the gradient w.r.t. $R$,
we denote $R\dfn[R_{1},R_{2},\ldots,R_{r}]\in\mathbb{R}^{d\times r}$
where $R_{i}\in\mathbb{R}^{d}$ is the $i$th column ($i\in[r]$),
and $q^{\top}=(q_{1},q_{2},\ldots,q_{r})$. Then, $q^{\top}R^{\top}\boldsymbol{x}=\sum_{i\in[d]}q_{i}R_{i}^{\top}\boldsymbol{x}$
and
\[
p_{2}=\frac{1}{1+\exp(-\sum_{i\in[d]}q_{i}R_{i}^{\top}\boldsymbol{x})}.
\]
Then, the gradient of $p_{2}$ w.r.t. $R_{k}$ is then given by 
\[
\nabla_{R_{k}}p_{2}=p_{2}(1-p_{2})\cdot q_{i}\boldsymbol{x},
\]
hence, more succinctly, the gradient w.r.t. $R$ is
\[
\nabla_{R}p_{2}=p_{2}(1-p_{2})\cdot\boldsymbol{x}q^{\top}.
\]
Hence, 
\begin{align}
\nabla_{R}\regret(\rep,f\mid P_{\boldsymbol{x}}) & =\E\left[\frac{\partial}{\partial p_{2}}D_{\text{KL}}(p_{1}\mid\mid p_{2})\times\nabla_{R}p_{2}\right]\\
 & =\E\left[\left(p_{2}-p_{1}\right)\cdot\boldsymbol{x}q^{\top}\right]\\
 & =\E\left[\left(\frac{1}{1+\exp(-q^{\top}R^{\top}\boldsymbol{x})}-\frac{1}{1+\exp(-f^{\top}\boldsymbol{x})}\right)\cdot\boldsymbol{x}q^{\top}\right].
\end{align}

\paragraph*{Initialization }

Here the initialization is similar to the linear MSE setting, except
that since a column of the representation cannot ideally capture even
a single adversarial function, the initialization algorithm only searches
for a single adversarial function ($\tilde{m}=1$). This single function
is then used to produce $\rep^{(1)}$ as the initialization of Algorithm
\ref{alg: Iterative algorithm}.

\paragraph*{Algorithm parameters}

The algorithm parameters used for Example \ref{exa: Algorithm linear logistic}
are shown in Table \ref{tab: logistic experiments hyper parameters}.
\begin{table}
\begin{centering}
\caption{Parameters for linear cross entropy setting example \label{tab: logistic experiments hyper parameters}}
\par\end{centering}
\centering{}{\small{}}%
\begin{tabular*}{10cm}{@{\extracolsep{\fill}}ccccc}
\toprule 
{\small{}Parameter} & $\beta_{r}$ & $\beta_{f}$ & $\eta_{r}$ & $\eta_{f}$\tabularnewline
\midrule
\midrule 
{\small{}Value} & $0.9$ & $0.9$ & $10^{-3}$ & $10^{-1}$\tabularnewline
\midrule
\midrule 
{\small{}Parameter} & $T_{\rep}$ & $T_{f}$ & $T_{\text{avg}}$ & $T_{\text{stop}}$\tabularnewline
\midrule
\midrule 
{\small{}Value} & $100$ & $1000$ & $25$ & $50$\tabularnewline
\bottomrule
\end{tabular*}{\small\par}
\end{table}

\subsection{Details for Example \ref{exa: images with 4 shapes}: An experiment
with a multi-label classification of images and a comparison to PCA
\label{subsec:An-experiment-with}}

We next present the setting of Example \ref{exa: images with 4 shapes},
which shows that large reduction in the representation dimension can
be obtained if the function is known to belong to a finite class. 
\begin{defn}[The multi-label classification setting ]
\label{def:muli-label classification setting} Assume that ${\cal X}=\mathbb{R}^{\sqrt{d}\times\sqrt{d}}$
where $d=625$, and $\boldsymbol{x}$ represents an image. The distribution
$P_{\boldsymbol{x}}$ is such that $\boldsymbol{x}$ contains $4$
shapes selected from a dictionary of $6$ shapes in different locations,
chosen with a uniform probability; see Figure \ref{fig:image with shapes}.
The output is a binary classification ${\cal Y}=\{\pm1\}$ of the
image. Assume that the class of representation is linear $z=\rep(x)=R^{\top}x$
for some $R\in{\cal R}:=\mathbb{R}^{d\times r}$ where $d>r$. The
response function belongs to a class of $6$ different functions ${\cal F}=\{f_{1},\ldots f_{6}\}$,
where $f_{j}:{\cal X}\to{\cal Y}$ indicates whether the $i$th shape
appears in the image or not. Assume the cross-entropy loss function,
where given that the prediction that $\boldsymbol{y}=1$ with probability
$q$ results the loss $\loss(y,q)\dfn-\frac{1}{2}(1+y)\log q-\frac{1}{2}\left(1-y\right)\log(1-q)$.
The set of predictor functions is ${\cal Q}\dfn\left\{ \pre(z)=1/[1+\exp(-q^{\top}\boldsymbol{z})],\;q\in\mathbb{R}^{r}\right\} $,
and the regret is then given by the expected binary Kullback-Leibler
(KL) divergence as in Definition \ref{def: linear logistic log loss }. 
\end{defn}

\paragraph*{Simplifying Algorithm \ref{alg: Iterative algorithm}}

In the the multi-label classification setting of Definition \ref{def:muli-label classification setting},
Algorithm \ref{alg: Iterative algorithm} can be simplified as follows.
First, since the number of response functions in the class ${\cal F}$
is finite, the Phase 1 problem \eqref{eq: mixed optimization for online alg}
in Algorithm \ref{alg: Iterative algorithm} algorithm is simple,
since the adversarial function can be found by a simple maximization
over the $6$ functions. Then, the phase $2$ step simply finds for
each function $f^{(j_{2})}$ in ${\cal F}$, $j_{2}\in[6]$, the best
representation-predictor $(\rep^{(j_{1})},\pre^{(j_{1},j_{2})})$
using gradient descent, where $\rep^{(j_{1})}:{\cal X}\to\mathbb{R}^{r}$
is a linear representation $z=$, and $\pre$ is logistic regression.
This results is a payoff matrix of $\mathbb{R}^{6\times6}$. Then,
the resulting game can be numerically solved as a linear program,
thus obtaining the probability that each representation should be
played. The resulting loss of this minimax rule $\boldsymbol{R}^{*}$
is the loss of our representation. This representation is then compared
with a standard PCA representation, which uses the projections on
the first $r$ principle directions of $R=V_{1:r}(\Sigma_{\boldsymbol{x}})^{\top}$
as the representation (without randomization). The results of the
experiment are shown in Figure \ref{fig: Optimized representation vs PCA}
in the paper.

\begin{figure}
\centering{}\includegraphics[scale=0.5]{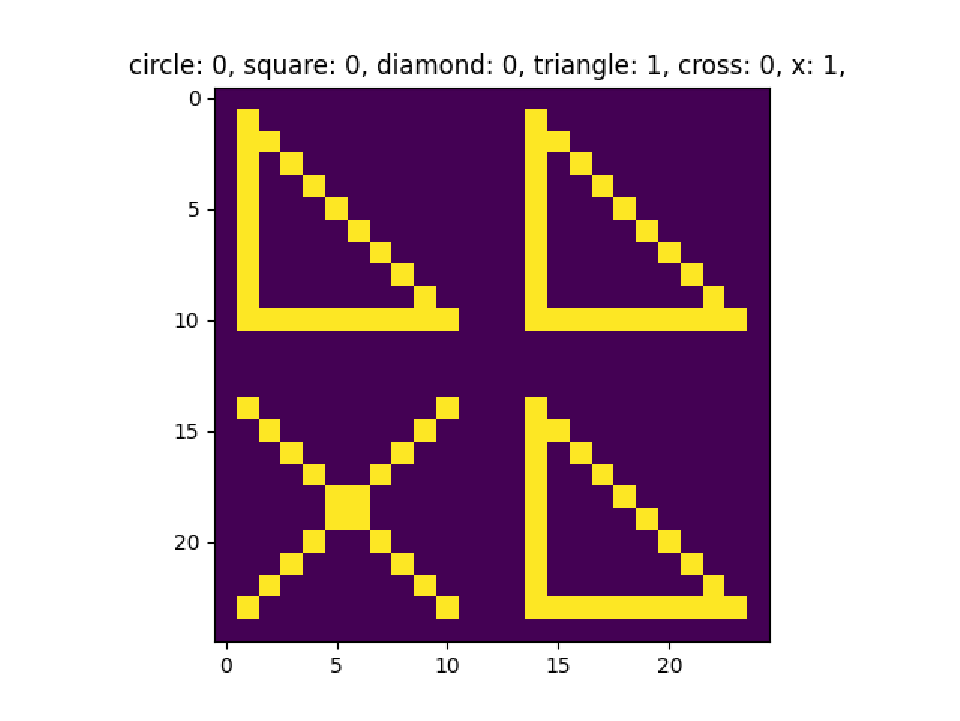}\caption{An image in the dataset for the multi-label classification setting
(Definition \ref{def:muli-label classification setting}). \label{fig:image with shapes}}
\end{figure}

\subsection{An experiment with a NN architecture \label{sec:Additional-experiments}}

In the analysis and the experiments above we have considered basic
linear functions. As mentioned, since the operation of Algorithm \ref{alg: Iterative algorithm}
only depends on the gradients of the loss function, it can be easily
generalized to representations, response functions and predictors
for which such gradients (or sub-gradients) can be provided. In this
section, we exemplify this idea with a simple NN architecture. For
$x\in\mathbb{R}^{d}$, we let the rectifier linear unit (ReLU) be
denoted as $(x)_{+}$.
\begin{defn}[The NN setting]
\label{def: NN logistic log loss}Assume the same setting as in Definitions
\ref{def: linear MSE} and \ref{def: linear logistic log loss },
except that the class of representation, response and predictors are
NN with $c$ hidden layers of sizes $h_{R},h_{f},h_{q}\in\mathbb{N}_{+}$,
respectively, instead of linear functions. Specifically: (1) The representation
is 
\[
\rep(x)=R_{c}^{\top}\left(\cdots\left(R_{1}^{\top}(R_{0}^{\top}x)_{+}\right)_{+}\right)_{+}
\]
for some $(R_{0},R_{1},\cdots R_{c})\in{\cal R}:=\{\mathbb{R}^{d\times h_{R}}\times\mathbb{R}^{h_{R}\times h_{R}}\cdots\mathbb{R}^{h_{R}\times h_{R}}\times\mathbb{R}^{h_{R}\times r}\}$
where $d>r$. (2) The response is determined by 
\[
f(x)=f_{c}^{\top}\left(\cdots\left(F_{1}^{\top}(F_{0}^{\top}x)_{+}\right)_{+}\right)_{+}
\]
where $(F_{0},F_{1},\ldots,f_{c})\in{\cal F}:=\{\mathbb{R}^{d\times h_{f}}\times\mathbb{R}^{h_{f}\times h_{f}}\cdots\mathbb{R}^{h_{f}\times h_{f}}\times\mathbb{R}^{h_{f}}\}$.
(3) The predictor is determined by for some
\[
q(z)=q_{c}^{\top}\left(\cdots\left(Q_{1}^{\top}(Q_{0}^{\top}z)_{+}\right)_{+}\right)_{+}
\]
where $(Q_{0},Q_{1},\ldots,q_{c})\in{\cal Q}:=\{\mathbb{R}^{r\times h_{q}}\times\mathbb{R}^{h_{q}\times h_{q}}\cdots\mathbb{R}^{h_{q}\times h_{q}}\times\mathbb{R}^{h_{q}}\}$.
\end{defn}

\paragraph*{Regret gradients}

Gradients were computed using \texttt{PyTorch} with standard gradients
computation using backpropagation for an SGD optimizer. 

\paragraph*{Initialization }

The initialization algorithm is similar to the initialization algorithm
used in the linear cross-entropy setting. 

\paragraph*{Algorithm parameters}

The algorithm parameters used for the example are shown in Table \ref{tab:Parameters-for-neural-network}.
\begin{table}
\begin{centering}
\caption{Parameters for the NN cross-entropy setting. \label{tab:Parameters-for-neural-network}}
\par\end{centering}
\centering{}{\small{}}%
\begin{tabular*}{10cm}{@{\extracolsep{\fill}}cccccc}
\toprule 
{\small{}Parameter} & $c$ & $h_{R}$ & $h_{f}$ & $h_{q}$ & \tabularnewline
\midrule 
{\small{}Value} & $1$ & $d$ & $d$ & $d$ & \tabularnewline
\midrule
\midrule 
{\small{}Parameter} & $\beta_{r}$ & $\beta_{f}$ & $\eta_{r}$ & $\eta_{f}$ & $\eta_{q}$\tabularnewline
\midrule 
{\small{}Value} & $0.9$ & $0.9$ & $10^{-3}$ & $10^{-1}$ & $10^{-1}$\tabularnewline
\midrule
\midrule 
{\small{}Parameter} & $T_{\rep}$ & $T_{f}$ & $T_{\pre}$ & $T_{\text{avg}}$ & $T_{\text{stop}}$\tabularnewline
\midrule
\midrule 
{\small{}Value} & $100$ & $1000$ & $100$ & $10$ & $80$\tabularnewline
\bottomrule
\end{tabular*}{\small\par}
\end{table}

\paragraph*{Results }

For a single hidden layer, Figure \ref{fig: NN-cross entropy} shows
the reduction of the regret with the iteration for the cross-entropy
loss. 
\begin{figure}
\begin{centering}
\includegraphics[scale=0.7]{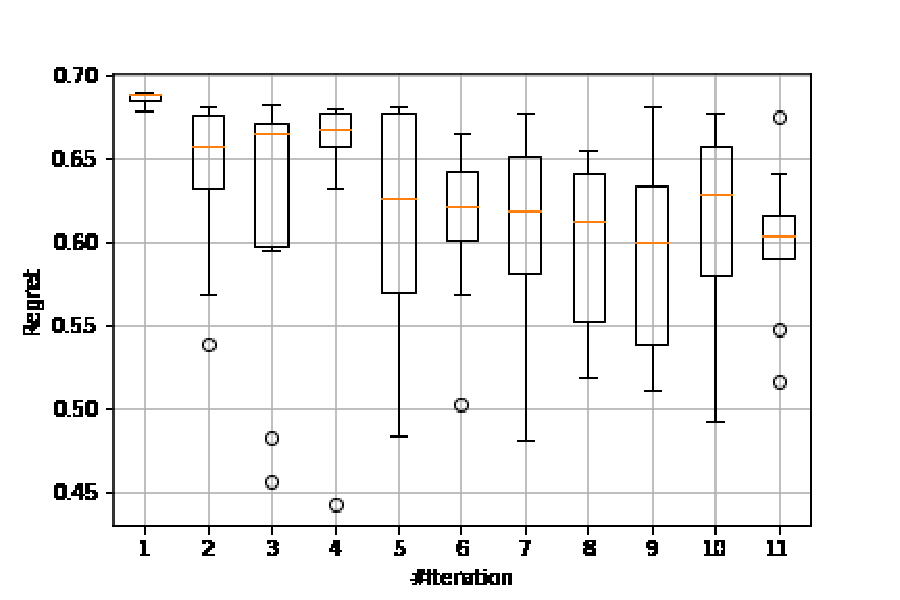}
\par\end{centering}
\caption{The regret achieved by Algorithm \ref{alg: Iterative algorithm} in
the NN cross-entropy setting as a function of the iteration $m$.\label{fig: NN-cross entropy}}
\end{figure}

\bibliographystyle{unsrtnat}
\bibliography{Representation}

\end{document}